\title{Optimal Guarantees for Algorithmic Reproducibility and \\ Gradient Complexity in Convex Optimization}
\author{
    Liang Zhang\thanks{Equal Contribution.} \thanks{Department of Computer Science, ETH Zurich and Max Planck ETH Center for Learning Systems. \texttt{liang.zhang@inf.ethz.ch}} \\
    \and
    Junchi Yang\footnotemark[1] \thanks{Department of Computer Science, ETH Zurich. \texttt{junchi.yang@inf.ethz.ch}} \\
    \and
    Amin Karbasi\thanks{Yale University and Google Research. \texttt{amin.karbasi@yale.edu}} \\
    \and
    Niao He\thanks{Department of Computer Science, ETH Zurich. \texttt{niao.he@inf.ethz.ch}} \\
}
\date{}
\begin{document}

\maketitle

\begin{abstract}
    Algorithmic reproducibility measures the deviation in outputs of machine learning algorithms upon minor changes in the training process. Previous work suggests that first-order methods would need to trade-off convergence rate (gradient complexity) for better reproducibility. In this work, we challenge this perception and demonstrate that both optimal reproducibility and near-optimal convergence guarantees can be achieved for smooth convex minimization and smooth convex-concave minimax problems under various error-prone oracle settings. Particularly, given the inexact initialization oracle, our regularization-based algorithms achieve the best of both worlds -- optimal reproducibility and near-optimal gradient complexity -- for  minimization and minimax optimization. With the inexact gradient oracle, the near-optimal guarantees also hold for minimax optimization. Additionally, with the stochastic gradient oracle, we show that stochastic gradient descent ascent is optimal in terms of both  reproducibility and gradient complexity. We believe our results contribute to an enhanced understanding of the reproducibility-convergence trade-off in the context of convex optimization. 
\end{abstract}

\vspace{0.75cm}
\tableofcontents
\clearpage

\section{Introduction}

In the realm of machine learning, improving model performance remains a primary focus; however, this alone falls short when it comes to the practical deployment of algorithms. There has been a growing emphasis on the development of machine learning systems that prioritize trustworthiness and reliability. Central to this pursuit is the concept of reproducibility \citep{goodman2016does, pineau2021improving}, which requires algorithms to yield consistent outputs, in the face of minor changes to the training environment. Unfortunately, a lack of reproducibility has been reported across various domains~\citep{baker20161, gundersen2018state, haibe2020transparency, pineau2021improving}, posing significant challenges to the integrity and dependability of scientific research. Notably, empirical studies in \citet{henderson2018deep} have revealed that reproducing baseline algorithms in reinforcement learning is a formidable task due to both inherent sources (e.g., random seeds, environment properties) and external sources (e.g., hyperparameters, codebases) of non-determinism. These findings underscore the criticality of having access to the relevant code and data, as well as sufficient documentation of experimental details, to ensure reproducibility in machine learning algorithms.

Instead of considering the irreproducibility issue solely from an empirical perspective, \citet{ahn2022reproducibility} initiated the theoretical study of reproducibility in machine learning as an inherent characteristic of the algorithms themselves. They focus on first-order algorithms for convex minimization problems and define reproducibility as the deviation in outputs of independent  runs of the algorithms, accounting for sources of irreproducibility captured by inexact or noisy oracles. In particular, they consider three practical error-prone operations, including inexact initialization, inexact gradient computation  due to numerical errors, and stochastic gradient computation due to sampling or shuffling. When restricting the outputs to be $\epsilon$-optimal and assuming the level of inexactness that could cause irreproducibility is bounded by $\delta$, they establish both lower and upper reproducibility bounds of (stochastic) gradient descent for all three settings. The lower-bounds indicate the existence of intrinsic irreproducibility for any first-order algorithms, while the matching upper-bounds suggest that (stochastic) gradient descent already achieves optimal reproducibility. 

An important question arises regarding whether there is a fundamental trade-off between reproducibility and convergence speed in algorithms. For example, in the case of inexact initialization, the optimally reproducible algorithm \citep{ahn2022reproducibility},  gradient descent (GD), is known to be strictly sub-optimal  in terms of gradient complexity for smooth convex minimization problems \citep{nesterov2003introductory}. On the other hand,  the optimally convergent algorithm,  Nesterov's accelerated gradient descent (AGD) \citep{nesterov1983method}, suffers from a worse reproducibility bound \citep{attia2021algorithmic}. The situation becomes more intricate in the case of inexact gradient computation. A natural question that we aim to address in this paper is: \emph{Can we achieve the best of both worlds -- optimal convergence and reproducibility?}

On another front, while minimization problems can effectively model and explain the behavior of many traditional machine learning systems, recent years have witnessed a surge of applications that are formulated as  minimax optimization problems. 
Important examples include generative adversarial networks (GANs) \citep{goodfellow2014generative}, robust optimization \citep{madry2018towards}, and reinforcement learning \citep{dai2018sbeed}. Despite a wealth of convergence theory for various  minimax optimization algorithms, extensive empirical evidence suggests that these algorithms can be hard to train in practice \citep{salimans2016improved, arjovsky2017towards, lucic2018gans}: the training procedure can be very unstable \citep{che2017mode} and highly sensitive to changes of hyper-parameters. Motivated by such issues, we initiate the theoretical study of algorithmic reproducibility in minimax optimization. The second question that we aim to address in this paper is: \emph{What are the fundamental limits of reproducibility for minimax optimization algorithms and their convergence-reproducibility trade-offs?} We will focus on smooth convex-concave minimax optimization as a first step, where the irreproducibility issue comes from  either inexact initialization, inexact gradient computation, or stochastic gradient computation.

\subsection{Our Contributions}

\renewcommand{\arraystretch}{1.5}
\begin{table}[t]
    \centering
    \caption{Algorithmic reproducibility (Def. \ref{def:deviation}) and gradient complexity for  algorithms in the smooth convex minimization setting given inexact deterministic oracles (Def. \ref{def:oracle}). Here, ``LB'' stands for lower-bound and $a\wedge b$ denotes $\min\{a, b\}$. For the inexact gradient oracle, $\delta\leq\cO(\epsilon)$ is required for GD to be $\epsilon$-optimal and $\delta\leq\cO(\epsilon^{5/4})$ is required for Algo. \ref{algo:general-min}.}
    \begin{tabular}{ccccc}
        \toprule
        \multirow{2}{*}{Algorithm} & \multicolumn{2}{c}{Inexact Initialization} & \multicolumn{2}{c}{Inexact Gradient} \\
        \cline{2-5}
        & Convergence & Reproducibility & Convergence & Reproducibility \\
        \midrule
        GD \citep{ahn2022reproducibility}
        & $\cO(1/\epsilon)$
        & $\cO(\delta^2)$
        & $\cO(1/\epsilon)$
        & $\cO(\delta^2/\epsilon^2)$ \\
        \midrule
        AGD \citep{attia2021algorithmic}
        & $\cO(1/\sqrt{\epsilon})$
        & $\cO(\delta^2 e^{\nicefrac{1}{\sqrt{\epsilon}}})$
        & -
        & - \\
        \midrule
        Algo. \ref{algo:general-min} (Thm. \ref{thm:min-init}, \ref{thm:min-grad})
        & $\tilde\cO(1/\sqrt{\epsilon})$
        & $\cO(\delta^2)$
        & $\tilde\cO(1/\sqrt{\epsilon})$
        & $\cO(\delta^2/\epsilon^{2.5})$ \\
        \midrule
        LB \citep{nesterov2003introductory, ahn2022reproducibility}
        & $\Omega(1/\sqrt{\epsilon})$
        & $\Omega(\delta^2)$
        & $\Omega(1/\sqrt{\epsilon})$
        & $\Omega(\delta^2/\epsilon^2)$ \\
        \bottomrule
    \end{tabular}
    \label{tab:min}
\end{table}

\begin{table}[t]
    \centering
    \caption{Algorithmic reproducibility (Def. \ref{def:deviation:minmax}) and gradient complexity for  algorithms in the smooth convex-concave minimax setting given inexact deterministic oracles (Def. \ref{def:oracle:minmax}). Here, ``LB'' stands for lower-bound and $a\wedge b$ denotes $\min\{a,b\}$. For the inexact gradient oracle, $\delta\leq\cO(\epsilon)$ is required for GDA, EG, and Algo. \ref{algo:ppm} to be $\epsilon$-optimal, and $\delta\leq\cO(\epsilon^2)$ is required for Algo. \ref{algo:general-minimax}. The diameter $D$ in Assumption~\ref{asp:obj} is a trivial upper-bound for reproducibility in all cases.}
    \begin{tabular}{ccccc}
        \toprule
        \multirow{2}{*}{Algorithm}
        & \multicolumn{2}{c}{Inexact Initialization}
        & \multicolumn{2}{c}{Inexact Gradient} \\
        \cline{2-5}
        & Convergence & Reproducibility
        & Convergence & Reproducibility \\
        \midrule
        GDA (Thm. \ref{thm:gda})
        & $\cO(1/\epsilon^2)$
        & $\cO(\delta^2)$
        & $\cO(1/\epsilon^2)$
        & $\cO(\delta^2/\epsilon^2)$ \\
        \midrule
        EG (Thm. \ref{thm:eg})
        & $\cO(1/\epsilon)$
        & $\cO(\delta^2 e^{\nicefrac{1}{\epsilon}} \wedge (\delta^2 + 1/\epsilon^2))$
        & $\cO(1/\epsilon)$
        & $\cO(\delta^2 e^{\nicefrac{1}{\epsilon}} \wedge 1/\epsilon^2)$ \\
        \midrule
        Algo. \ref{algo:general-minimax} (Thm. \ref{thm:minimax-reg-init}, \ref{thm:minimax-reg-grad})
        & $\tilde\cO(1/\epsilon)$
        & $\cO(\delta^2)$
        & $\tilde\cO(1/\epsilon)$
        & $\cO(\delta^2/\epsilon^2)$ \\
        \midrule
        Algo. \ref{algo:ppm} (Thm. \ref{thm:minimax-ppm-init}, \ref{thm:minimax-ppm-grad})
        & $\tilde\cO(1/\epsilon)$
        & $\cO(\delta^2)$
        & $\tilde\cO(1/\epsilon)$
        & $\cO(\delta^2/\epsilon^2)$ \\
        \midrule
        LB (\citep{ouyang2021lower}, Lem. \ref{lm:lower-bound})
        & $\Omega(1/\epsilon)$
        & $\Omega(\delta^2)$
        & $\Omega(1/\epsilon)$
        & $\Omega(\delta^2/\epsilon^2)$ \\
        \bottomrule
    \end{tabular}
    \label{tab:minimax}
\end{table}

Our main contributions are two-fold: 

First, we propose Algorithm \ref{algo:general-min}, which solves a regularized version of the smooth convex minimization problem. This algorithm achieves both optimal algorithmic reproducibility of $\cO(\delta^2)$ and near-optimal gradient complexity of $\tilde\cO(1/\sqrt{\epsilon})$\footnote{Throughout the paper, $\tilde\cO$ hides additional logarithmic factors. We claim near-optimality of the result when it is optimal up to logarithmic terms.} under the $\delta$-inexact initialization oracle. Table \ref{tab:min} provides a comparison with GD and AGD. Our results rely on the key observation that solutions to strongly-convex regularized problems are unique, allowing algorithms that converge close to the minimizers to be reproducible. This highlights the effectiveness of regularization in achieving near-optimal convergence without compromising reproducibility.

Second, we extend the notion of reproducibility to smooth convex-concave minimax optimization \eqref{eq:minimax} under inexact initialization and inexact gradient oracles. We establish the first reproducibility analysis for commonly-used minimax optimization algorithms such as gradient descent ascent (GDA) and Extragradient (EG) \citep{korpelevich1976extragradient}.  Our results indicate that they are either sub-optimal in terms of convergence or reproducibility. To address this, we propose two new algorithms (Algorithm \ref{algo:general-minimax} and \ref{algo:ppm}) which utilize regularization techniques to achieve optimal algorithmic reproducibility and near-optimal gradient complexity. The summarized results are presented in Table \ref{tab:minimax}. Additional numerical experiments showcasing the effectiveness of our algorithms can be found in Appendix \ref{sec-app:exp}.
Although smooth convex-concave minimax optimization is nonsmooth in its primal form, our results indicate an improved reproducibility compared to the result of general nonsmooth convex problems \citep{ahn2022reproducibility} by leveraging the additional minimax structure. Lastly, in the case of stochastic gradient oracle,  we show stochastic GDA can simultaneously attain both optimal convergence and optimal reproducibility.

\subsection{Related Works} \label{sec:ref}

\paragraph{Related Notions.} \emph{(Reproducibility)} Previous works that study reproducibility in machine learning are mostly on the empirical side. They either conduct experiments to report irreproducibility issues in the community \citep{gundersen2018state, henderson2018deep, bouthillier2019unreproducible, pineau2021improving}, or propose practical tricks to improve reproducibility \citep{shamir2020anti, zhang2019lookahead, mescheder2017numerics, brock2018large}. \citet{ahn2022reproducibility} initiated the theoretical study of reproducibility in convex minimization problems as a property of the algorithm itself.
\emph{(Replicability)} In an independent work, \citet{impagliazzo2022reproducibility} proposed the notion of replicability in statistical learning, where an algorithm is replicable if its outputs on two i.i.d. datasets are exactly the same with high probability. Its connection to generalization and differential privacy \citep{dwork2014algorithmic} is established in \citet{bun2023stability} and \citet{kalavasis2023statistical}. Replicable algorithms are proposed in the context of stochastic bandits \citep{esfandiari2023bandits} and clustering \citep{esfandiari2023clustering}.
\emph{(Stability)} Depending on the context, the term stability may have different meanings. In empirical studies \citep{arjovsky2017towards, arjovsky2017wasserstein, chavdarova2021taming}, instability often refers to issues such as oscillations or failure to converge during training. In learning theory, algorithmic stability \citep{bousquet2002stability} measures the deviation in an algorithm's outputs for finite-sum problems when a single item in the input dataset is replaced by an i.i.d. in-distribution sample. The concept receives increasing attention as it implies dimension-independent generalization bounds of gradient-based methods for both minimization \citep{hardt2016train, bassily2020stability, attia2021algorithmic} and minimax \citep{farnia2021train, lei2021stability, boob2023optimal} problems. In the area of differential equations \citep{bellman2008stability} and variational inequalities \citep{facchinei2003finite}, stability is also examined as a property of the solution set in response to perturbations in the problem conditions.

In this work, we consider the notion of reproducibility that characterizes the behavior of algorithms upon slight perturbations in the training. We defer the task of establishing intrinsic connections among related notions to future work. The most closely related concept is algorithmic stability, where the analysis is similar to reproducibility under the inexact deterministic gradient oracle. \citet{attia2021algorithmic} showed the stability of AGD \citep{nesterov1983method} grows exponentially with the number of iterations. Later, this is improved to quadratic dependence \citep{attia2022uniform} based on a similar idea as ours that leverages stability of solutions to strongly-convex minimization problems \citep{shalev2009stochastic, feldman2020private}. However, since there is no inexactness of the gradients in their setting, it is possible to ensure outputs that are arbitrarily close to the optimal solution. Given the presence of inexact gradients in our case, the convergence is only limited to a neighborhood of the optimal solution, which makes the problem more challenging. The trade-off between stability and convergence was investigated in \citet{chen2018stability}. Their results suggest that a faster algorithm has to be less stable, and vice versa. However, we show the feasibility of achieving both optimal reproducibility and near-optimal convergence simultaneously in the setting we considered.

\paragraph{Minimax Optimization.} Existing literature on minimax optimization primarily focuses on convergence analysis across various settings. For instance, there are studies on the strongly-convex--strongly-concave case \citep{tseng1995linear, mokhtari2020unified}, convex-concave case \citep{nemirovski2004prox, ouyang2021lower}, and nonconvex--(strongly)-concave case \citep{lin2020gradient, thekumparampil2019efficient}. The lower complexity bounds have also been established for these settings \citep{zhang2021complexity, li2021complexity, zhang2022lower}. 
Our work aims to design reproducible algorithms while maintaining the optimal oracle complexities achieved in these previous works.

\paragraph{Inexact Gradient Oracles.} A series of works investigate the convergence properties of first-order methods under deterministic inexact oracles for minimization \citep{d2008smooth, devolder2013first, devolder2014first} and minimax \citep{stonyakin2022generalized} problems. However, their inexact oracles differ from ours, and our focus is more on reproducibility. In recent years, there has been increasing interest in studying biased stochastic gradient oracles as well, where the bias arises from various sources such as problem structure \citep{hu2020biased}, compression \citep{beznosikov2020biased} or Byzantine failure \citep{blanchard2017machine} in distributed learning, and gradient-free optimization \citep{nesterov2017random}. These biases can also contribute to irreproducibility, and this direction would be an interesting avenue of research.

\paragraph{Regularization Technique.} The central algorithmic insight driving our improvements towards obtaining both optimal convergence and reproducibility is the regularization technique, which is commonly used in the optimization literature. One important use case is to boost convergence by leveraging known and good convergence properties of algorithms on smooth strongly-convex functions for solving convex and nonsmooth problems, see e.g., \citep{lin2015universal, allen2016optimal, yang2020catalyst}, just to name a few.  In addition, the regularization technique has also been demonstrated to be useful in improving stability and generalization \citep{zhang2021generalization, attia2022uniform}, enhancing sensitivity and privacy guarantees \citep{feldman2020private, zhang2022bring}, etc. In this paper, we provide another important use case by showing an improved convergence-reproducibility trade-off.
\section{Preliminaries in Algorithmic Reproducibility}

\paragraph{Notation.}
We use $\norm{\cdot}$ to represent the Euclidean norm.
$\Pi_\cC(x)$ denotes the projection of $x$ onto the set $\cC$.
A function $h: S \rightarrow\bR$ is $\ell$-smooth if it is differentiable and its gradient $\nabla h$ satisfies $\|\nabla h(x_1) - \nabla h(x_2)\| \leq \ell\norm{x_1-x_2}$ for any $x_1, x_2$ in the domain $S \in \bR^d$.
A function $g: S\rightarrow\bR$ is convex if $g(\alpha x_1 + (1-\alpha) x_2) \leq \alpha g(x_1)+ (1-\alpha) g(x_2)$ for any $\alpha\in[0,1]$ and $x_1, x_2 \in S$. If $g$ satisfies $g(x)-(\mu/2)\norm{x}^2$ being convex with $\mu>0$, then it is $\mu$-strongly-convex.
Similarly, a function $g: S \rightarrow\bR$ is concave if $-g$ is convex, and $\mu$-strongly-concave if $-g$ is $\mu$-strongly-convex.

\citet{ahn2022reproducibility} studied the algorithmic reproducibility for convex minimization problems $\min_{x\in\cX} F(x)$, measured by the $(\epsilon, \delta)$-deviation bound of an algorithm $\cA$. Here, $\delta$ denotes the size of errors in the oracles that can lead to different outputs in independent runs of the same algorithm.  The notion of reproducibility also requires $\cA$ to produce $\epsilon$-optimal solutions, avoiding trivial outputs. 

\begin{definition}
    Three different inexact oracle models are considered: $(i)$ a $\delta$-\textit{inexact initialization oracle} that returns a starting point $x_0\in\cX$ such that $\norm{x_0-u_0}^2\leq\delta^2/4$ for some reference point $u_0\in\cX$, $(ii)$ a $\delta$-\textit{inexact deterministic gradient oracle} that returns an inexact gradient $G(x)$ such that $\norm{\nabla F(x)-G(x)}^2\leq\delta^2$ for the true gradient $\nabla F(x)$, $(iii)$ a $\delta$-\textit{inexact stochastic gradient oracle} that returns an unbiased gradient estimate $\nabla f(x;\xi)$ such that $\bE\norm{\nabla f(x;\xi) - \nabla F(x)}^2\leq\delta^2$.
    \label{def:oracle}
\end{definition}

\begin{definition}
    A point $\hat{x} \in \mathcal{X}$ is an $\epsilon$-optimal solution if $F(\hat{x}) - \min_{x\in\mathcal{X}} F(x) \leq \epsilon$ in the deterministic setting, or $\mathbb{E}[F(\hat{x})] - \min_{x\in\mathcal{X}} F(x) \leq \epsilon$ in the stochastic setting, where the expectation is taken over all the randomness in the gradient oracle and in the algorithm that outputs $\hat x$.
    \label{def:solution}
\end{definition}

\begin{definition}
    The $(\epsilon, \delta)$-deviation $\norm{\hat x - \hat x'}^2$ is used to measure the reproducibility of an algorithm $\cA$ with $\epsilon$-optimal solutions $\hat x$ and $\hat x'$, where $\hat x$ and $\hat x'$ are outputs of two independent runs of the algorithm $\cA$ given a $\delta$-inexact oracle in Definition \ref{def:oracle}.
    \label{def:deviation}
\end{definition}

We expand the definitions of reproducibility to encompass minimax optimization problems:
\begin{equation}
    \min_{x\in\cX} \max_{y\in\cY} \, F(x,y).
    \label{eq:minimax}
\end{equation}
Our goal is to find the \emph{saddle point} $(x^*, y^*)$ of the function $F(x,y)$, such that $F(x^*, y) \leq F(x^*, y^*) \leq F(x, y^*)$ holds for all $(x,y)\in\cX\times\cY$. The optimality of a point $(\hat{x}, \hat{y})$ can be assessed by its \emph{duality gap}, defined as $\max_{y\in\cY} F(\hat x,y) - \min_{x\in\cX} F(x, \hat y)$. In the minimax setting, we analyze reproducibility under the following inexact oracle models.

\begin{definition}
    Three different inexact oracle models are considered: $(i)$ a $\delta$-\textit{inexact initialization oracle} that returns a starting point $(x_0, y_0)\in\cX\times\cY$ such that $\norm{x_0-u_0}^2+\norm{y_0-v_0}^2\leq\delta^2/4$ for some reference point $(u_0,v_0)\in\cX\times\cY$, $(ii)$ a $\delta$-\textit{inexact deterministic gradient oracle} that returns an inexact gradient $G(x,y)=(G_x(x,y), G_y(x,y))$ at any querying point $(x,y)\in\cX\times\cY$ such that $\norm{\nabla F(x,y) - G(x,y)}^2 \leq \delta^2$ for the true gradient $\nabla F(x,y)=(\nabla_x F(x,y), \nabla_y F(x,y))$, $(iii)$ a $\delta$-\textit{inexact stochastic gradient oracle} that returns an unbiased gradient estimate $\nabla f(x,y;\xi) = (\nabla_x f(x,y;\xi), \nabla_y f(x,y;\xi))$ such that $\bE_\xi\norm{\nabla f(x,y;\xi) - \nabla F(x,y)}^2 \leq \delta^2$.
    \label{def:oracle:minmax}
\end{definition}

\begin{definition}
    A point $(\hat{x}, \hat{y}) \in \cX\times\cY$ is an $\epsilon$-saddle point solution if its duality gap satisfies that $\max_{y\in\cY} F(\hat x,y) - \min_{x\in\cX} F(x, \hat y) \leq \epsilon$ in the deterministic setting, or its \emph{weak} duality gap satisfies that $\max_{y\in\cY} \bE[F(\hat x,y)] - \min_{x\in\cX} \bE[F(x, \hat y)] \leq \epsilon$ in the stochastic setting.  
\end{definition}

\begin{definition}
    The $(\epsilon, \delta)$-deviation $\norm{\hat x - \hat x'}^2 + \norm{\hat y - \hat y'}^2$ is used to measure the reproducibility of an algorithm $\cA$ with $\epsilon$-saddle points $(\hat x, \hat y)$ and $(\hat x', \hat y')$, where $(\hat x, \hat y)$ and $(\hat x', \hat y')$ are outputs of two independent runs of the algorithm $\cA$ given a $\delta$-inexact oracle in Definition \ref{def:oracle:minmax}.
    \label{def:deviation:minmax}
\end{definition}

The optimal convergence rates are well-understood for the convex optimization problems, including convex minimization \citep{nesterov2003introductory} and convex-concave minimax optimization \citep{ouyang2021lower}. \citet{ahn2022reproducibility} provided the theoretical lower-bounds of reproducibility for convex minimization problems, which can be extended to convex-concave minimax problems as well (Lemma \ref{lm:lower-bound}). We say an algorithm achieves optimal reproducibility if its reproducibility upper-bounds match the established theoretical lower-bounds.
\section{Deterministic Gradient Oracle for Minimization Problems}
\label{sec:min}

In this section, we consider convex minimization problems of the form
\begin{equation*}
    \min_{x\in\cX} F(x),
\end{equation*}
where $\mathcal{X}$ is a convex and closed set.
We focus on the standard smooth and convex setting as detailed in Assumption \ref{asp:min}.
Our goal is to find an $\epsilon$-optimal point as in Definition \ref{def:solution}.
\citet{ahn2022reproducibility} showed that the optimal convergence rate and reproducibility can be achieved at the same time using stochastic gradient descent (SGD) for the stochastic gradient oracle model. In the deterministic case, they showed GD achieves the optimal reproducibility, albeit with a sub-optimal convergence rate \citep{nesterov1983method, nesterov2003introductory}. Considering the instability of accelerated gradient descent (AGD) \citep{d2008smooth, devolder2014first, attia2021algorithmic}, \citet{ahn2022reproducibility} conjectured that $\Omega(1/\epsilon)$ gradient complexity is necessary to attain the optimal reproducibility. 

\begin{assumption}
     The function $F$ is convex and $\ell$-smooth. We have access to initial points $x_0$ that are $D$-close to an optimal solution, i.e., $\norm{x^*-x_0}^2\leq D^2$ for some $x^*\in\arg\min_{x\in\cX} F(x)$. 
    \label{asp:min}
\end{assumption}

We introduce a generic algorithmic framework outlined in Algorithm \ref{algo:general-min}, that solves a quadratically regularized auxiliary problem (\ref{eq:min_auxiliary}) using a base algorithm $\cA$ with initialization $x_0$ until an accuracy of $\epsilon_r$ is reached. Our key insight is that since  the optimal solution for strongly convex problems is unique, the reproducibility of the outputs from the regularized problem can be easily guaranteed.  Note that the regularization parameter $r$ presents a trade-off: as $r$ increases, the auxiliary problem can be solved more efficiently, but the obtained solution deviates further from the original solution. We will show  that Algorithm~\ref{algo:general-min} achieves a near-optimal complexity of $\tilde\cO(1/\sqrt{\epsilon})$, along with optimal reproducibility under an inexact initialization oracle and slightly sub-optimal reproducibility under an inexact deterministic gradient oracle. This finding disproves the conjecture \citep{ahn2022reproducibility} that $\Omega(1/\epsilon)$ complexity is necessary to achieve optimal reproducibility.

\begin{algorithm}
    \caption{Reproducible Algorithmic Framework for Convex Minimization Problems}
    \begin{algorithmic}
        \REQUIRE Regularization parameter $r>0$, accuracy $\epsilon_r>0$, base algorithm $\cA$, initial point $x_0\in\cX$.
        \STATE Apply $\cA$ to approximately solve the $r$-strongly-convex and $(\ell+r)$-smooth problem
        \begin{equation*}
        \tag{$\star$} \label{eq:min_auxiliary}
            x_r \leftarrow \argmin{x\in\cX} F_r(x) := F(x) + \frac{r}{2}\norm{x - x_0}^2,
        \end{equation*}
        such that the optimality gap
        \begin{equation*}
            F_r(x_r) - \min_{x\in\cX} F_r(x) \leq \epsilon_r.
        \end{equation*}
        \ENSURE $x_r$.
    \end{algorithmic}
    \label{algo:general-min}
\end{algorithm}

\subsection{Inexact Initialization Oracle}

We first examine the behavior of Algorithm \ref{algo:general-min} with access to exact deterministic gradients but given different initializations. Starting from two distinct initial points $x_0$ and $x_0'$ such that $\norm{x_0 - x_0'}^2 \leq \delta^2$, we want to control the deviation between the final outputs $x_r$ and $x_r'$ of the algorithm. The following contraction property is essential to attain optimal reproducibility.

\begin{lemma}
     Let $x_r^*=\arg\min_{x\in\cX} \{F(x) + (r/2)\norm{x-x_0}^2\}$ and $(x_r^*)'=\arg\min_{x\in\cX} \{F(x) + (r/2)\norm{x - x_0'}^2\}$. When $F$ is convex, it holds that $\norm{x_r^* - (x_r^*)'}^2 \leq \norm{x_0 - x_0'}^2$ for any $r > 0$.
    \label{lm:contraction}
\end{lemma}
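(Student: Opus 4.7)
The plan is to recognize that $x_r^*$ and $(x_r^*)'$ are values of the proximal operator $\operatorname{prox}_{F/r}$ applied to $x_0$ and $x_0'$, respectively, and to prove the desired contraction via the classical firm nonexpansiveness argument. Since the proof is short, I would not invoke proximal operators by name but derive the inequality directly from first-order optimality.

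First, I would write down the variational inequality characterizations. Because $x_r^*$ minimizes the convex function $F(x) + (r/2)\|x - x_0\|^2$ over $\cX$, we have
\begin{equation*}
    \langle \nabla F(x_r^*) + r(x_r^* - x_0),\, x - x_r^* \rangle \geq 0 \quad \text{for all } x \in \cX,
\end{equation*}
and analogously for $(x_r^*)'$ with $x_0$ replaced by $x_0'$. Plugging $x = (x_r^*)'$ into the first inequality and $x = x_r^*$ into the second and adding them yields
\begin{equation*}
    \langle \nabla F(x_r^*) - \nabla F((x_r^*)'),\, (x_r^*)' - x_r^* \rangle + r \langle (x_r^* - x_0) - ((x_r^*)' - x_0'),\, (x_r^*)' - x_r^* \rangle \geq 0.
\end{equation*}

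Second, I would use monotonicity of $\nabla F$, which follows from convexity of $F$: the first inner product is nonpositive. Hence the second inner product is nonnegative, i.e.,
\begin{equation*}
    \langle (x_r^* - (x_r^*)') - (x_0 - x_0'),\, x_r^* - (x_r^*)' \rangle \leq 0,
\end{equation*}
which rearranges to $\|x_r^* - (x_r^*)'\|^2 \leq \langle x_0 - x_0',\, x_r^* - (x_r^*)' \rangle$. Applying Cauchy--Schwarz and dividing by $\|x_r^* - (x_r^*)'\|$ (the result is trivial if this quantity is zero) gives $\|x_r^* - (x_r^*)'\| \leq \|x_0 - x_0'\|$, and squaring both sides yields the claim.

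There is no real obstacle: convexity of $\cX$ is only used to justify the variational inequality form of optimality, and convexity of $F$ enters exactly once, through monotonicity of $\nabla F$. The only minor subtlety is handling the case $x_r^* = (x_r^*)'$, which is immediate. Note the argument does not require $F$ to be differentiable on all of $\bR^d$; one can equivalently write the optimality conditions in subdifferential form and use monotonicity of $\partial F$, so the same proof goes through under the standing assumptions.
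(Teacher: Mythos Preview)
Your proposal is correct and follows essentially the same approach as the paper: both write the first-order optimality (variational inequality) conditions for the two regularized minimizers, add the cross-substituted inequalities, invoke monotonicity of $\nabla F$, and conclude via Cauchy--Schwarz after dividing through by $\norm{x_r^* - (x_r^*)'}$. The only cosmetic difference is that you explicitly flag the trivial case $x_r^* = (x_r^*)'$ and note the subdifferential generalization, neither of which appears in the paper's version.
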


This indicates the optimal solutions are reproducible up to $\delta^2$. Consequently, if we can solve the auxiliary problem (\ref{eq:min_auxiliary}) to a high accuracy $\epsilon_r$,  we can ensure the final output $x_r$ is reproducible. The selection of $\epsilon_r$ exhibits a trade-off: a smaller value increases complexity, yet brings the output closer to the reproducible $x_r^*$. We characterize the complexity and reproducibility of Algorithm~\ref{algo:general-min} by carefully choosing the parameters $r$ and $\epsilon_r$.

\begin{theorem}
    Under Assumption~\ref{asp:min} and given an inexact initialization oracle,  Algorithm \ref{algo:general-min} with $r=\epsilon/D^2$, $\epsilon_r=(\epsilon/2)\min\{1, \delta^2/(4D^2)\}$ and AGD \citep{nesterov2003introductory} as base algorithm $\cA$ outputs an $\epsilon$-optimal point $x_r$ with $\tilde\cO(\sqrt{\ell D^2/\epsilon})$ gradient complexity, and the reproducibility is
    $\norm{x_r - x_r'}^2\leq4\delta^2$.
    \label{thm:min-init}
\end{theorem}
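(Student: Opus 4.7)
The plan is to verify three things: (i) that $x_r$ is $\epsilon$-optimal for $F$, (ii) that the gradient complexity is $\tilde\cO(\sqrt{\ell D^2/\epsilon})$, and (iii) that $\|x_r-x_r'\|^2\leq 4\delta^2$. The key idea for reproducibility is a triangle-inequality decomposition through the auxiliary optima: letting $x_r^*$ and $(x_r^*)'$ denote the unique minimizers of $F_r$ with centers $x_0$ and $x_0'$ respectively, I will bound $\|x_r - x_r^*\|$ and $\|x_r' - (x_r^*)'\|$ using strong convexity of the auxiliary problem, and bound $\|x_r^* - (x_r^*)'\|$ using the contraction Lemma~\ref{lm:contraction}.

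For optimality, with $x^*\in\arg\min_{x\in\cX} F(x)$ and $x_r^*\in\arg\min_{x\in\cX} F_r(x)$, I would chain
$F(x_r)\leq F_r(x_r)\leq F_r(x_r^*)+\epsilon_r\leq F_r(x^*)+\epsilon_r\leq F(x^*)+\tfrac{r}{2}\|x^*-x_0\|^2+\epsilon_r$,
and plug in $r=\epsilon/D^2$, $\epsilon_r\leq \epsilon/2$, and $\|x^*-x_0\|^2\leq D^2$ to get $F(x_r)-F(x^*)\leq \epsilon$. For the complexity, AGD applied to the $r$-strongly-convex and $(\ell+r)$-smooth function $F_r$ converges linearly with rate governed by the condition number $(\ell+r)/r=\cO(\ell D^2/\epsilon)$, requiring $\cO\bigl(\sqrt{(\ell+r)/r}\,\log((F_r(x_0)-\min F_r)/\epsilon_r)\bigr)$ iterations. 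The initial gap $F_r(x_0)-\min F_r\leq F(x_0)-F(x^*)\leq \tfrac{\ell D^2}{2}$ by $\ell$-smoothness, and $1/\epsilon_r$ depends polynomially on $1/\epsilon$, $D$, and $1/\delta$, so the log factor is absorbed into $\tilde\cO$ and the total complexity is $\tilde\cO(\sqrt{\ell D^2/\epsilon})$.

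For reproducibility, I first observe that the inexact initialization oracle gives $\|x_0-u_0\|\leq\delta/2$ and $\|x_0'-u_0\|\leq\delta/2$, hence $\|x_0-x_0'\|\leq\delta$ by triangle inequality. Lemma~\ref{lm:contraction} then yields $\|x_r^*-(x_r^*)'\|\leq\delta$. For the inner terms, $r$-strong-convexity of $F_r$ gives $\tfrac{r}{2}\|x_r-x_r^*\|^2\leq F_r(x_r)-F_r(x_r^*)\leq\epsilon_r$, so $\|x_r-x_r^*\|^2\leq 2\epsilon_r/r$. With $r=\epsilon/D^2$ and $\epsilon_r\leq \epsilon\delta^2/(8D^2)=r\delta^2/8$, this gives $\|x_r-x_r^*\|\leq\delta/2$, and analogously $\|x_r'-(x_r^*)'\|\leq\delta/2$. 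Assembling via triangle inequality,
\[
\|x_r-x_r'\|\leq \|x_r-x_r^*\|+\|x_r^*-(x_r^*)'\|+\|(x_r^*)'-x_r'\|\leq \tfrac{\delta}{2}+\delta+\tfrac{\delta}{2}=2\delta,
\]
which squares to the claimed $4\delta^2$.

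The only delicate point is calibrating the two parameters so that all three guarantees fit simultaneously: $r$ must be small enough (order $\epsilon/D^2$) for optimality, yet large enough to give a well-conditioned strongly convex problem so that $\epsilon_r$ can be driven down to $\cO(r\delta^2)$ at only a logarithmic cost. The rest reduces to bookkeeping, once Lemma~\ref{lm:contraction} is invoked.
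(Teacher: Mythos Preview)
Your proposal is correct and follows essentially the same route as the paper's proof: the optimality bound via $F(x_r)\leq F_r(x_r)\leq F_r(x_r^*)+\epsilon_r\leq F_r(x^*)+\epsilon_r$, the reproducibility via the triangle inequality through $x_r^*$ and $(x_r^*)'$ combined with Lemma~\ref{lm:contraction} and strong convexity, and the complexity via AGD's linear rate on the $r$-strongly-convex auxiliary problem. The parameter bookkeeping and the final $\|x_r-x_r'\|\leq 2\delta$ bound match the paper exactly.
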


This theorem implies that we can simultaneously achieve the near-optimal complexity of $\tilde\cO(\sqrt{\ell D^2/\epsilon})$ and optimal reproducibility of $\cO(\delta^2)$, which improves over the $\cO(\ell D^2/\epsilon)$ complexity of GD \citep{ahn2022reproducibility}. In fact, when combined with any base algorithm that solves the auxiliary problem, Algorithm \ref{algo:general-min} attains optimal reproducibility. However, using AGD as the base algorithm results in the best complexity. To the best of our knowledge, this is the only algorithm capable of achieving the best of both worlds.
Previously, \citet{attia2021algorithmic} proved that the algorithmic reproducibility (referred to as initialization stability in their study) of Nesterov's AGD is $\Theta(\delta^2 e^{\nicefrac{1}{\sqrt{\epsilon}}})$ when the initialization is $\delta^2$-apart.

\begin{remark}
    Adding regularization is a common and useful technique in the optimization literature. Our algorithmic framework solves one auxiliary regularized strongly-convex problem, which is referred to as classical regularization reduction in \citet{allen2016optimal}. Algorithm \ref{algo:general-min} is biased and requires the knowledge of $\epsilon$ and $D$ to control the biased term introduced by the regularization term. The convergence guarantee also has an additional sub-optimal logarithmic term. \citet{allen2016optimal} proposed to use a double-loop algorithm, where a sequence of auxiliary regularized strongly-convex problems with decreasing regularization parameters are solved. The vanishing regularization ensures the algorithm is unbiased, and the resulting convergence guarantee requires no knowledge of $\epsilon$ and does not have an additional logarithmic term. Similar idea could apply to our case as well, and the task of bridging such gaps is deferred to future work.
\end{remark}

\subsection{Inexact Deterministic Gradient Oracle}

We further study the algorithmic reproducibility and gradient complexity of Algorithm \ref{algo:general-min} under the inexact gradient oracle model that returns an inexact gradient $G(x)\in\bR^d$ such that $\norm{G(x) - \nabla F(x)}^2\leq\delta^2$  at any query point $x\in\cX$.  
From the inexact gradient oracle of $F$, we can construct an inexact gradient oracle for the auxiliary problem $F_r$: $G_r(x) = G(x)+r(x-x_0)$ which satisfies the condition $\norm{G_r(x) - \nabla F_r(x)}^2 = \norm{G(x) - \nabla F(x)}^2 \leq \delta^2$. To solve the auxiliary problem, we consider AGD with an inexact oracle (Inexact-AGD) as proposed by \citet{devolder2013first}. The proposition below establishes its convergence behavior.

\begin{proposition}
    Consider $\min_{x \in \cX}F_r(x)$, where $F_r$ is $r$-strongly-convex and $(\ell+r)$-smooth. Given an inexact gradient oracle that returns $G_r(x)$ such that $\norm{G_r(x) - \nabla F_r(x)}^2\leq\delta^2$, starting from $y_0=x_0$, AGD with the following update rule
    \begin{equation}
        \begin{split}
            & x_{t+1} = \Pi_\cX\roundBr*{y_t - \frac{1}{2(\ell+r)} G_r(y_t)}, \\
            & y_{t+1} = x_{t+1} + \frac{2-\sqrt{r/(\ell+r)}}{2+\sqrt{r/(\ell+r)}}(x_{t+1} - x_t),
        \end{split}
        \tag{Inexact-AGD}\label{eq:AGD}
    \end{equation}
   for $t=0,1,\cdots, T-1$, satisfies that
    \begin{equation*}
        F_r(x_T) - F_r(x_r^*) \leq \exp\roundBr*{-\frac{T}{2}\sqrt{\frac{r}{2\ell}}}\roundBr*{
        F_r(x_0) - F_r(x_r^*) + \frac{r}{4}\norm{x_0 - x_r^*}^2} + 
        \sqrt{\frac{2\ell}{r}}\roundBr*{\frac{1}{\ell+r} + \frac{2}{r}}\delta^2,
    \end{equation*}
    where $x_r^*$ is the unique minimizer of $F_r(x)$.
    \label{lm:AGD}
\end{proposition}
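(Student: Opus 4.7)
The plan is to cast Inexact-AGD as Nesterov's constant-momentum accelerated method applied to $F_r$ through a $(\tilde\delta,\tilde L,\tilde\mu)$-inexact first-order oracle in the sense of Devolder--Glineur--Nesterov, and then invoke their strongly convex convergence theorem. Since $F_r$ is $(\ell+r)$-smooth and $r$-strongly convex and the supplied gradient differs from $\nabla F_r$ by at most $\delta$ in norm, the only real task is to pin down the inexact-oracle parameters and then translate the generic bound back into the constants of the statement.

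First I would build the inexact oracle: writing $\nabla F_r(y)=G_r(y)+e(y)$ with $\norm{e(y)}\le\delta$ and applying Young's inequality to the cross term $\langle e(y),x-y\rangle$ on both sides of the standard quadratic bounds of $F_r$, one obtains
\begin{equation*}
    \tfrac{r}{4}\norm{x-y}^2-\tfrac{\delta^2}{r}\;\le\;F_r(x)-F_r(y)-\langle G_r(y),x-y\rangle\;\le\;(\ell+r)\norm{x-y}^2+\tfrac{\delta^2}{2(\ell+r)},
\end{equation*}
a $(\tilde\delta,\tilde L,\tilde\mu)$-inexact oracle with $\tilde L=2(\ell+r)$, $\tilde\mu=r/2$, and two distinct constant errors $\delta^2/(2(\ell+r))$ and $\delta^2/r$. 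Next I would verify that \eqref{eq:AGD} coincides with Nesterov's constant-momentum scheme at $(\tilde L,\tilde\mu)$: setting $q=\tilde\mu/\tilde L=r/(4(\ell+r))$, the stepsize $1/\tilde L=1/(2(\ell+r))$ and the momentum $(1-\sqrt q)/(1+\sqrt q)=(2-\sqrt{r/(\ell+r)})/(2+\sqrt{r/(\ell+r)})$ match the given update after elementary algebra.

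Third, I would invoke the Devolder--Glineur--Nesterov convergence theorem for this scheme, which yields a bound of the form
\begin{equation*}
    F_r(x_T)-F_r(x_r^*)\;\le\;(1-\sqrt q)^T\bigl(F_r(x_0)-F_r(x_r^*)+\tfrac{\tilde\mu}{2}\norm{x_0-x_r^*}^2\bigr)+\sqrt{\tilde L/\tilde\mu}\cdot\tilde\delta,
\end{equation*}
then use $(1-\sqrt q)^T\le\exp(-T\sqrt q)$ together with $\sqrt q=\tfrac12\sqrt{r/(\ell+r)}\ge\tfrac12\sqrt{r/(2\ell)}$ (valid for $r\le\ell$, the relevant regime) to recover the linear factor $\exp(-(T/2)\sqrt{r/(2\ell)})$ and the $(r/4)\norm{x_0-x_r^*}^2$ initial term. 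Finally, $\sqrt{\tilde L/\tilde\mu}=2\sqrt{(\ell+r)/r}\le 2\sqrt{2\ell/r}$ multiplied against the two separate error constants $\delta^2/(2(\ell+r))$ and $\delta^2/r$ produces precisely $\sqrt{2\ell/r}(1/(\ell+r)+2/r)\delta^2$, the stated residual.

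The main obstacle is obtaining this exact residual rather than a coarser $O((\ell/r^2)\delta^2)$ estimate. Collapsing the two error constants into a single $\tilde\delta$ before applying an off-the-shelf bound would only recover the right order but not the precise $1/(\ell+r)+2/r$ structure; I would therefore carry the upper- and lower-bound errors through the Lyapunov/estimating-sequences recurrence separately so that each gets multiplied by the same $\sqrt{\tilde L/\tilde\mu}$ contraction-sum but retains its own scaling. This is routine bookkeeping within the inexact-oracle Lyapunov argument but is the one step where the care with constants actually matters.
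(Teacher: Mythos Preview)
Your approach is correct and is precisely the route the paper itself endorses: immediately before its self-contained proof it states that the result ``can be implied by \citet{devolder2013first} together with Lemma~\ref{lm:transform}'' (the latter being exactly your Young-inequality passage from the $\delta$-bounded-gradient oracle to a $(\tilde\delta,2(\ell+r),r/2)$-oracle), and it writes out the Lyapunov recursion only ``for completeness.'' The paper's explicit argument is nothing more than the estimating-sequences computation underlying the Devolder--Glineur--Nesterov bound, specialized to these parameters.

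One correction: your stated ``main obstacle'' is not real. Collapsing the two errors into the single $\tilde\delta=\bigl(\tfrac{1}{2(\ell+r)}+\tfrac{1}{r}\bigr)\delta^2$ and multiplying by $\sqrt{\tilde L/\tilde\mu}=2\sqrt{(\ell+r)/r}$ already yields exactly $\sqrt{(\ell+r)/r}\bigl(\tfrac{1}{\ell+r}+\tfrac{2}{r}\bigr)\delta^2$, after which $\ell+r\le 2\ell$ gives the stated residual. This is also what the paper does: in its recursion the per-step error is the combined $(\tfrac{1}{2(\ell+r)}+\tfrac{1}{r})\delta^2$, which is summed against the geometric series to produce the $1/\theta$ factor. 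No separate tracking of the upper- and lower-bound errors is needed.
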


This proposition suggests that \ref{eq:AGD} converges to a neighborhood with a radius of $\cO(\delta^2/r^{3/2})$ around the optimal value. We note that convergence to the exact solution is unattainable for algorithms employing inexact gradients \citep{devolder2013first, devolder2014first}, and the size of this neighborhood is important in determining the reproducibility of $x_r$.

\begin{theorem}
    Under Assumption~\ref{asp:min} with $0<\epsilon\leq\ell D^2$ and given an inexact deterministic gradient oracle in Definition \ref{def:oracle}, Algorithm \ref{algo:general-min} with $r=\epsilon/D^2$, $\epsilon_r=6\delta^2D^3\sqrt{\ell/(2\epsilon^3)}$ and \ref{eq:AGD} as base algorithm outputs a $(6\delta^2D^3\sqrt{\ell/(2\epsilon^3)} + \epsilon/2)$-optimal point $x_r$ with $\tilde\cO(\sqrt{\ell D^2/\epsilon})$ gradient complexity, and the reproducibility is $\norm{x_r - x_r'}^2\leq\cO(\delta^2/\epsilon^{5/2})$.
    \label{thm:min-grad}
\end{theorem}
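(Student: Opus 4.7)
My plan is to instantiate Algorithm~\ref{algo:general-min} with \ref{eq:AGD} as the base algorithm and perform an error decomposition parallel to the one used for Theorem~\ref{thm:min-init}, while now accounting for the irreducible noise floor that arises when solving~\eqref{eq:min_auxiliary} with inexact gradients. The regularization parameter $r = \epsilon/D^2$ is inherited from the exact-gradient case, and the target accuracy $\epsilon_r$ is chosen to match, up to a constant, the noise floor of \ref{eq:AGD} on $F_r$; it is this noise floor that determines the final reproducibility scale.

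First I would verify that the stated $\epsilon_r$ is actually attainable. Since $\epsilon \leq \ell D^2$ gives $r \leq \ell$, the noise term in Proposition~\ref{lm:AGD} is bounded by $\sqrt{2\ell/r}\bigl(\tfrac{1}{\ell+r}+\tfrac{2}{r}\bigr)\delta^2 < 3\sqrt{2\ell/r^3}\,\delta^2 = 3\sqrt{2}\,\delta^2 D^3 \sqrt{\ell/\epsilon^3} = \epsilon_r$, matching the target with strict inequality. Running \ref{eq:AGD} for $T = \cO\bigl(\sqrt{\ell/r}\log(\ell D^2/\epsilon_r)\bigr) = \tilde{\cO}(\sqrt{\ell D^2/\epsilon})$ iterations then drives the exponential-contraction term below the slack $\epsilon_r - (\text{noise floor})$, so the output $x_r$ satisfies $F_r(x_r) - F_r(x_r^*) \leq \epsilon_r$. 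The initial gap inside the logarithm is controlled by $\ell$-smoothness and Assumption~\ref{asp:min}: $F_r(x_0) - F_r(x_r^*) \leq F(x_0) - F(x^*) \leq (\ell/2)D^2$, and $\|x_0 - x_r^*\|^2 \leq (2/r)(F_r(x_0)-F_r(x_r^*))$ by $r$-strong convexity.

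From this $\epsilon_r$-optimality on $F_r$, the two claims follow in a few lines. The two independent runs share the same $x_0$ and thus the same unique minimizer $x_r^*$ of $F_r$; $r$-strong convexity yields $\|x_r - x_r^*\|^2 \leq 2\epsilon_r/r$, and similarly for $x_r'$, so a triangle-inequality step gives
\[
    \|x_r - x_r'\|^2 \leq 2\|x_r - x_r^*\|^2 + 2\|x_r' - x_r^*\|^2 \leq 8\epsilon_r/r = \cO(\delta^2/\epsilon^{5/2}).
\]
For the $F$-optimality of $x_r$, I decompose $F(x_r) - F(x^*) \leq F_r(x_r) - F_r(x_r^*) + F_r(x^*) - F(x^*) \leq \epsilon_r + (r/2)\|x^* - x_0\|^2 \leq \epsilon_r + \epsilon/2$, where I use $F_r(x_r^*) \leq F_r(x^*)$ (by definition of $x_r^*$) and $\|x^* - x_0\| \leq D$.

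The main obstacle is the tight balancing of constants between $\epsilon_r$ and the noise floor: Proposition~\ref{lm:AGD} forces an irreducible residual on $F_r$ of order $\delta^2/r^{3/2}$, irrespective of $T$, so $\epsilon_r$ cannot be chosen smaller than this, which in turn pins down reproducibility at $\cO(\delta^2/\epsilon^{5/2})$ via strong convexity. This tightness also explains the implicit assumption $\delta \leq \cO(\epsilon^{5/4})$ (cf.\ Table~\ref{tab:min}), which is needed so that $\epsilon_r + \epsilon/2$ remains $\cO(\epsilon)$ and the output is genuinely an $\epsilon$-optimal solution of $F$.
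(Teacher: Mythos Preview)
Your proposal is correct and follows essentially the same route as the paper's proof: bound $F_r(x_r)-F_r(x_r^*)$ by the noise floor of Proposition~\ref{lm:AGD}, convert this to a distance bound $\|x_r-x_r^*\|^2\leq 2\epsilon_r/r$ via $r$-strong convexity (using that both runs share the same $x_0$ and hence the same $x_r^*$), and decompose $F(x_r)-F(x^*)\leq (F_r(x_r)-F_r(x_r^*)) + (r/2)D^2$ exactly as in~\eqref{eq:app-min-gap}. The only cosmetic difference is that the paper bounds the noise term by $5\delta^2\sqrt{\ell/(2r^3)}$ (using $1/(\ell+r)\leq 1/(2r)$ from $r\leq\ell$) so as to leave an explicit slack of $\epsilon_r/6$ for the exponential term, whereas you state the slack qualitatively; either way the iteration count is $\tilde\cO(\sqrt{\ell/r})$.
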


\citet{ahn2022reproducibility} showed that GD achieves optimal reproducibility of $\cO(\delta^2/\epsilon^2)$ and a complexity of $\cO(1/\epsilon)$ when $\delta\leq\cO(\epsilon)$. Our results indicate that a reproducibility of $\cO(\delta^2/\epsilon^{5/2})$ and a near-optimal complexity of $\tilde\cO(1/\sqrt{\epsilon})$ can be attained when $\delta\leq\cO(\epsilon^{5/4})$. We  conjecture that this suboptimal reproducibility bound is inevitable for the proposed framework given the lower bound result in \citet{devolder2013first} for algorithms under a $(\delta, \ell, \mu)$-inexact oracle associated with $\ell$-smooth $\mu$-strongly-convex functions. Further discussions are provided in Appendix \ref{sec-app:min-grad}. Moreover, we point out that for minimizing $\ell$-smooth and $\mu$-strongly-convex functions, Proposition \ref{lm:AGD} already implies that Inexact-AGD attains the optimal reproducibility of $\cO(\min\{\delta^2,\epsilon\})$ and the optimal complexity of $\tilde\cO(\sqrt{\ell/\mu})$ when the problem is well-conditioned, improving over the $\tilde\cO(\ell/\mu)$ complexity in the previous work \citep{ahn2022reproducibility}.

\begin{remark}
    In Appendix \ref{sec-app:exp}, we demonstrate the effectiveness of Algorithm \ref{algo:general-min} on a quadratic minimization problem equipped with an inexact gradient oracle. The results are plotted in Figure \ref{fig:min} in the appendix. We observe that the reproducibility can be greatly improved when adding regularization, with only a small degradation in the convergence performance.
\end{remark}
\section{Deterministic Gradient Oracle for Minimax Problems}

In this section, we address the minimax optimization problem of the form
\begin{equation*}
    \min_{x\in\cX} \max_{y\in\cY} F(x,y),
\end{equation*}
where $\mathcal{X}$ and $\cY$ are convex compact sets. We focus on the standard smooth and convex-concave setting as detailed in Assumption \ref{asp:obj}. We aim to find an $\epsilon$-saddle point $(\hat x, \hat y)$ such that its duality gap satisfies $\max_{y\in\cY} F(\hat x,y) - \min_{x\in\cX} F(x, \hat y) \leq \epsilon$.
Here, the assumption that the domains are convex and bounded ensures the existence of the saddle point when the objective is convex-concave \citep{von1959theory}.
We focus on minimax problems equipped with inexact initialization oracles and inexact deterministic gradient oracles as defined in Definition~\ref{def:oracle:minmax}.
We first show that two classical algorithms, gradient descent ascent (GDA) and Extragradient (EG) \citep{korpelevich1976extragradient, tseng1995linear}, are either sub-optimal in convergence or sub-optimal in reproducibility, which mirrors the minimization setting. Based on the same regularization idea, we propose two new frameworks in Algorithm \ref{algo:general-minimax} and \ref{algo:ppm} that successfully attain near-optimal convergence and optimal reproducibility at the same time.

\begin{assumption}
    For all $y \in \cY$, $F(\cdot,y)$ is convex, and for all $x \in \cX$, $F(x,\cdot)$ is concave. Furthermore, $F$ is $\ell$-smooth on the domain $\cX\times\cY$.
    Additionally, both $\cX$ and $\cY$ have a diameter of $D$. This means that $\norm{x_1-x_2}^2\leq D^2$ and $\norm{y_1-y_2}^2\leq D^2$ for all $x_1, x_2 \in \cX$ and $y_1, y_2 \in \cY$.
    \label{asp:obj}
\end{assumption}

The optimal gradient complexity to find $\epsilon$-saddle point under such assumptions is $\Theta(1/\epsilon)$ \citep{ouyang2021lower}. Since the minimax problem reduces to a minimization problem on $\cX$ when the domain $\cY$ is restricted to be a singleton, the reproducibility lower-bounds \citep{ahn2022reproducibility} for smooth convex minimization hold as lower-bounds for smooth convex-concave minimax optimization as well. That is, $\Omega(\delta^2)$ under the inexact initialization oracle, and $\Omega(\delta^2/\epsilon^2)$ under the inexact gradient oracle (see Lemma \ref{lm:lower-bound}). We now present the convergence rate and reproducibility bounds of GDA (see Algorithm \ref{algo:gda}) and EG (see Algorithm \ref{algo:eg}).

\begin{theorem} \emph{(GDA)}
    Under Assumption \ref{asp:obj}, the average iterate $(\bar x_T, \bar y_T)$ output by GDA with stepsize $1/(\ell\sqrt{T})$ after $T=\cO(1/\epsilon^2)$ iterations is an $\epsilon$-saddle point . Furthermore, the reproducibility of the output is $(i)$ $\cO(\delta^2)$ under $\delta$-\emph{inexact initialization oracle}; $(ii)$ $\cO(\delta^2/\epsilon^2)$ under $\delta$-\emph{inexact deterministic gradient oracle} if $\delta\leq\cO(\epsilon)$.
    \label{thm:gda}
\end{theorem}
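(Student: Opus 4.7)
The plan is to analyze GDA by exploiting monotonicity of the saddle-point operator $\Phi(x,y) := (\nabla_x F(x,y), -\nabla_y F(x,y))$, which is $\ell$-Lipschitz by Assumption~\ref{asp:obj}, and to run parallel coupling arguments: one against a fixed reference $z = (x,y) \in \cX \times \cY$ (for convergence) and one against the iterate of a second independent run (for reproducibility). Both arguments are driven by the same descent-style identity obtained from non-expansiveness of the Euclidean projection $\Pi_{\cX\times\cY}$.

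For the convergence claim I would use the classical Nemirovski-style averaging argument. For any fixed $z = (x,y) \in \cX\times\cY$, non-expansiveness of $\Pi_{\cX\times\cY}$ gives $\norm{z_{t+1} - z}^2 \leq \norm{z_t - z}^2 - 2\eta \langle z_t - z, \Phi(z_t)\rangle + \eta^2 \norm{\Phi(z_t)}^2$; convex-concavity then furnishes $\langle z_t - z, \Phi(z_t)\rangle \geq F(x_t, y) - F(x, y_t)$, while $\ell$-Lipschitzness together with the bounded diameter $D$ keeps $\norm{\Phi(z_t)}$ bounded by some $G = \cO(\ell D)$. Telescoping, dividing by $T$, and taking $\max_y\min_x$ yields a duality-gap bound of order $\cO(D^2/(\eta T) + \eta\ell^2 D^2)$; the choice $\eta = 1/(\ell\sqrt{T})$ makes this $\cO(\ell D^2/\sqrt{T})$, so $T = \cO(1/\epsilon^2)$ iterations suffice. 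Under the inexact-gradient oracle, the same argument picks up an extra $\cO(D\delta)$ bias term, which is absorbed into $\epsilon$ under the stated condition $\delta \leq \cO(\epsilon)$.

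For reproducibility I would set $a_t := \sqrt{\norm{x_t - x_t'}^2 + \norm{y_t - y_t'}^2}$ and combine non-expansiveness of $\Pi_{\cX\times\cY}$ with the monotonicity-plus-Lipschitz identity $\norm{(z_t - \eta\Phi(z_t)) - (z_t' - \eta\Phi(z_t'))}^2 \leq (1 + \eta^2\ell^2)\norm{z_t - z_t'}^2$, then peel off the oracle errors by the triangle inequality, yielding the recursion $a_{t+1} \leq \lambda a_t + 2\eta\delta$ with $\lambda := \sqrt{1+\eta^2\ell^2}$. The key structural fact is that $\eta = 1/(\ell\sqrt{T})$ is calibrated so that $\lambda^T \leq e^{1/2}$. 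Unrolling: under inexact initialization the additive term vanishes and $a_0^2 \leq \delta^2/4$, giving $\max_t a_t^2 \leq e\delta^2/4 = \cO(\delta^2)$; under inexact gradients, $a_0 = 0$ and $a_T \leq 2\eta\delta \sum_{i<T} \lambda^i = \cO(\eta\delta T) = \cO(\delta\sqrt{T}/\ell) = \cO(\delta/\epsilon)$, so $\max_t a_t^2 = \cO(\delta^2/\epsilon^2)$. Jensen's inequality $\norm{\bar z_T - \bar z_T'}^2 \leq T^{-1}\sum_t a_t^2 \leq \max_t a_t^2$ then transports the same bounds to the averaged iterate.

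The main obstacle is the inexact-gradient case: the additive oracle noise $2\eta\delta$ accumulates over $T$ steps while the Lipschitz term simultaneously produces multiplicative expansion $\lambda > 1$, so $\eta$ has to be tuned so that $\lambda^T$ stays $\cO(1)$; otherwise the total deviation would blow up exponentially rather than scaling linearly in $\eta\delta T$. The choice $\eta = 1/(\ell\sqrt{T})$ sits exactly at this threshold and is, fortuitously, also the step size that optimizes the convergence bound, which is ultimately what allows GDA to achieve the claimed $\cO(1/\epsilon^2)$ convergence together with the $\cO(\delta^2)$ / $\cO(\delta^2/\epsilon^2)$ reproducibility bounds simultaneously.
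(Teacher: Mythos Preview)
Your proposal is correct and follows essentially the same route as the paper: the convergence argument is the standard Nemirovski telescoping bound for monotone operators with bounded norm, and the reproducibility argument uses exactly the paper's $(1+\eta^2\ell^2)$-expansiveness recursion for the coupled iterates (Lemma~\ref{lm:gda-expansive}), unrolled with the same observation that $\eta=1/(\ell\sqrt{T})$ keeps the cumulative expansion factor $\cO(1)$, followed by Jensen's inequality for the averaged output. One trivial slip: under the inexact-initialization oracle the two independent initial points are each within $\delta/2$ of the common reference $(u_0,v_0)$, so the triangle inequality gives $a_0^2\leq\delta^2$ rather than $\delta^2/4$; this does not affect the $\cO(\delta^2)$ conclusion.
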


\begin{theorem} \emph{(EG)}
    Under Assumption \ref{asp:obj}, the average iterate $(\bar x_{T+1/2}, \bar y_{T+1/2})$ output by EG with stepsize $1/\ell$ after  $T=\cO(1/\epsilon)$ iterations is an $\epsilon$-saddle point. Furthermore, the reproducibility of this output is $(i)$ $\cO(\min\{\delta^2e^{\nicefrac{1}{\epsilon}}, \delta^2 + 1/\epsilon^2, D^2\})$ under $\delta$-\emph{inexact initialization oracle}; $(ii)$ $\cO(\min\{\delta^2e^{\nicefrac{1}{\epsilon}}, 1/\epsilon^2, D^2\})$ under  $\delta$-\emph{inexact deterministic gradient oracle} if $\delta\leq\cO(\epsilon)$.
    \label{thm:eg}
\end{theorem}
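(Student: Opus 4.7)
The plan is to handle convergence and reproducibility separately.

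For convergence under both oracle models, I would adapt the textbook monotone-operator analysis of EG. Writing $T(z)=(\nabla_x F,-\nabla_y F)$ and using monotonicity of $T$ together with the EG update identity, one telescopes to
\begin{equation*}
    \sum_{t=0}^{T-1}\inProd*{T(z_{t+1/2}),z_{t+1/2}-z}\leq\frac{\norm{z_0-z}^2}{2\eta}+\cO(T\delta D)\qquad\forall\,z\in\cX\times\cY,
\end{equation*}
with the error term capturing the inexact-gradient noise in case (ii). Convexity--concavity then promotes this into a duality-gap bound on the average iterate $\bar z_{T+1/2}$ of order $\cO(\ell D^2/T+\delta D)$. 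Choosing $\eta=1/\ell$, $T=\cO(\ell D^2/\epsilon)$, and requiring $\delta\leq\cO(\epsilon/D)$ in case (ii) yields the $\epsilon$-saddle-point guarantee.

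For reproducibility, let $u_t:=z_t-z_t'$ denote the iterate gap between two independent runs. I would derive three upper bounds on $\norm{\bar z_T-\bar z_T'}^2$ and take the minimum. The \emph{exponential bound} $\cO(\delta^2 e^{\nicefrac{1}{\epsilon}})$ comes from an iterate-wise stability recursion: the EG update and $\ell$-Lipschitzness of $T$ give $\norm{u_{t+1}}\leq(1+c\eta\ell+c'\eta^2\ell^2)\norm{u_t}+\cO(\eta\delta')$, with $\delta'=\delta$ in case (ii) and $\delta'=0$ in case (i); iterating $T=\cO(1/\epsilon)$ times with $\eta=1/\ell$ and applying Jensen to the average delivers the claim. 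The \emph{diameter bound} $\cO(D^2)$ is trivial since $\bar z_T,\bar z_T'\in\cX\times\cY$. The \emph{polynomial bound} $\cO(1/\epsilon^2)$ (respectively $\cO(\delta^2+1/\epsilon^2)$ in case (i)) follows from an additive accumulation rather than a multiplicative one: since $\norm{T(z_{t+1/2})-T(z_{t+1/2}')}\leq\ell D$ is uniformly bounded on the compact domain, we get $\norm{u_T}\leq\norm{u_0}+T\eta\cdot\cO(\ell D)+\cO(T\eta\delta)$, which with $\eta=1/\ell$, $T=\cO(1/\epsilon)$, $\delta\leq\cO(\epsilon)$ sums to $\cO(\norm{u_0}+1/\epsilon)$; squaring and invoking $\norm{u_0}\leq\delta$ in case (i) or $\norm{u_0}=0$ in case (ii) gives the two stated forms.

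The main obstacle is the absence of contraction in the purely convex--concave regime: $T$ is monotone but not strongly monotone, so EG only admits a $(1+c\eta\ell)$-factor per-step growth rather than decay. Consequently no single one of the three bounds is tight across all parameter ranges, and the stated $\min$ is what naturally emerges. This limitation is precisely what motivates Algorithms~\ref{algo:general-minimax} and~\ref{algo:ppm} introduced later, which reinstate contraction via quadratic regularization and thereby bypass the exponential blow-up.
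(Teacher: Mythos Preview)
Your proposal is correct and covers all the stated bounds. The convergence argument, the exponential stability recursion, and the trivial diameter bound all match the paper's proof exactly.

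The one place where your route genuinely diverges from the paper is the \emph{polynomial} reproducibility bound $\cO(\delta^2+1/\epsilon^2)$ (resp.\ $\cO(1/\epsilon^2)$). You obtain it by the crude triangle-inequality step
\[
\norm{z_{t+1}-z_{t+1}'}\leq\norm{z_t-z_t'}+\alpha\norm{\tilde\nabla F(z_{t+1/2})-\tilde\nabla F(z_{t+1/2}')}+\cO(\alpha\delta)\leq\norm{z_t-z_t'}+\cO(\alpha\ell D+\alpha\delta),
\]
using only that $\norm{z_{t+1/2}-z_{t+1/2}'}\leq\sqrt{2}D$ on the compact domain. The paper instead introduces, for each $z_t$, the fixed point $u_t=\Pi_{\cX\times\cY}(z_t-\alpha\tilde\nabla F(u_t))$ of the single-step implicit map, shows $\norm{z_{t+1}-u_t}\leq\alpha^3\ell^2 L$ with $L=\cO(\ell D)$ the uniform gradient bound, and proves $\norm{u_t-u_t'}\leq\norm{z_t-z_t'}$ via monotonicity; combining these gives $\norm{z_{t+1}-z_{t+1}'}\leq\norm{z_t-z_t'}+2L\ell^2\alpha^3$. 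At the stepsize $\alpha=1/\ell$ used in Theorem~\ref{thm:eg} both additive terms are $\Theta(D)$ per step and the two arguments coincide asymptotically, so your simpler approach is fully adequate here. The paper's fixed-point detour buys an $\alpha^3$ (rather than $\alpha$) dependence in the per-step drift, which it exploits later (Appendix~B.3.4) to show that EG with a \emph{smaller} stepsize $\alpha=\Theta((\delta/T)^{1/3})$ recovers optimal reproducibility $\cO(\delta^2)$ at a sub-optimal $\cO(1/\epsilon^{3/2})$ rate---a trade-off your coarser bound cannot detect.
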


While GDA can achieve optimal reproducibility, it converges with a sub-optimal complexity of $\cO(1/\epsilon^2)$. On the other hand, EG achieves an optimal $\cO(1/\epsilon)$ complexity but is not optimally reproducible. Further details on this are provided in Appendix \ref{sec-app:minimax-pre}.
In Appendix \ref{sec-app:eg-discuss}, we also demonstrate that EG, through an alternative parameter selection, can achieve optimal reproducibility at a sub-optimal rate $\cO(1/\epsilon^{3/2})$.
The question that remains open is how to simultaneously attain both optimal reproducibility and gradient complexity. To address this, we have developed two algorithmic frameworks with near-optimal guarantees, one based on regularization and the other based on proximal point methods \citep{rockafellar1976monotone, bauschke2011convex}.

\subsection{Regularization Helps!}

We demonstrate that adding regularization is sufficient to achieve near-optimal guarantees for smooth convex-concave minimax problems. The general framework is summarized in Algorithm \ref{algo:general-minimax}, where a base algorithm $\cA$ is applied to solve a regularized auxiliary problem which is strongly-convex in $x$ and strongly-concave in $y$. For the inexact initialization case, we show that an optimal reproducibility bound of $\cO(\delta^2)$ and a near-optimal convergence rate of $\tilde\cO(1/\epsilon)$ can be attained simultaneously.

\begin{algorithm}[t]
    \caption{Reproducible Algorithmic Framework for Convex-Concave Minimax Problems}
    \begin{algorithmic}
        \REQUIRE Regularization parameter $r>0$, accuracy $\epsilon_r>0$, base algorithm $\cA$, initialization $(x_0, y_0)$.
        \STATE Apply $\cA$ to inexactly solve the $r$-strongly-convex-strongly-concave and $(\ell+r)$-smooth problem
        \begin{equation}
            (x_r, y_r) \leftarrow \min_{x\in\cX}\max_{y\in\cY}
            F_r(x,y) := F(x,y) +
            \frac{r}{2}\norm{x - x_0}^2 - \frac{r}{2}\norm{y - y_0}^2,
            \tag{$\ast$}
            \label{eq:aux-reg-minimax}
        \end{equation}
        such that $\forall (x, y)\in \cX\times\cY$,
            \begin{equation}
        \label{eq:inaccuracy_minmax}
                \nabla_x F_r(x_r, y_r)^\top (x_r - x) - \nabla_y F_r(x_r, y_r)^\top (y_r - y) \leq \epsilon_r.
            \end{equation}
        \ENSURE $(x_r, y_r)$.
    \end{algorithmic}
    \label{algo:general-minimax}
\end{algorithm}

\begin{theorem}
    Under Assumption \ref{asp:obj} and given an inexact initialization oracle, Algorithm \ref{algo:general-minimax} with $r=\epsilon/D^2$, $\epsilon_r=\epsilon\cdot\min\{1, \delta^2/(8D^2)\}$ and EG  as base algorithm $\cA$ outputs a $(2\epsilon)$-saddle point $(x_r, y_r)$
     with $\tilde\cO(\ell D^2/\epsilon)$ gradient complexity, and the reproducibility is $ 4\delta^2$.
    \label{thm:minimax-reg-init}
\end{theorem}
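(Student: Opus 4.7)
The plan mirrors the strategy of Theorem~\ref{thm:min-init} adapted to the saddle-point setting. Write $z=(x,y)$ and let $H(z)=(\nabla_x F(x,y),-\nabla_y F(x,y))$, which is monotone under Assumption~\ref{asp:obj}; the operator for the regularized problem with center $z_0$ is then $H_r^{z_0}(z):=H(z)+r(z-z_0)$, which is $(\ell+r)$-Lipschitz and $r$-strongly monotone. My plan is to (i) prove a minimax analogue of Lemma~\ref{lm:contraction}, (ii) use the accuracy condition~(\ref{eq:inaccuracy_minmax}) together with $r$-strong monotonicity to bound $\|z_r-z_r^*\|$, (iii) combine the two via triangle inequality to obtain the reproducibility, and then argue separately (iv) the duality gap guarantee on $F$ and (v) the gradient complexity of \ref{eq:AGD}'s EG counterpart.

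For step (i), let $z_r^*, z_r^{*\prime}$ denote the unique saddle points of $F_r$ with centers $z_0$ and $z_0'$ respectively. Invoke the VI characterizations $\langle H_r^{z_0}(z_r^*),z-z_r^*\rangle\geq 0$ with $z=z_r^{*\prime}$, and $\langle H_r^{z_0'}(z_r^{*\prime}),z-z_r^{*\prime}\rangle\geq 0$ with $z=z_r^*$; adding the two and using monotonicity of $H$ together with Cauchy--Schwarz yields $\|z_r^*-z_r^{*\prime}\|\leq\|z_0-z_0'\|\leq\delta$. For (ii), plugging $(x,y)=z_r^*$ into~(\ref{eq:inaccuracy_minmax}) and adding the VI at $z_r^*$ evaluated at $z_r$ gives $\langle H_r^{z_0}(z_r)-H_r^{z_0}(z_r^*),z_r-z_r^*\rangle\leq\epsilon_r$, and $r$-strong monotonicity then implies $\|z_r-z_r^*\|^2\leq\epsilon_r/r$. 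With $r=\epsilon/D^2$ and $\epsilon_r\leq\epsilon\delta^2/(8D^2)$, the ratio is $\delta^2/8$; the other branch $\epsilon_r=\epsilon$ activates only when $\delta^2\geq 8D^2$, in which case the target $4\delta^2$ already exceeds twice the squared diameter and the reproducibility bound is trivial. Step (iii) then applies the Euclidean triangle inequality along $z_r\to z_r^*\to z_r^{*\prime}\to z_r'$, giving $\|z_r-z_r'\|\leq\tfrac{\delta}{2\sqrt{2}}+\delta+\tfrac{\delta}{2\sqrt{2}}=(1+1/\sqrt{2})\,\delta$, and squaring yields $\|z_r-z_r'\|^2\leq(1+1/\sqrt 2)^2\delta^2<4\delta^2$.

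For the accuracy on $F$ (step iv), convex-concavity of $F_r$ combined with~(\ref{eq:inaccuracy_minmax}) gives the uniform duality-gap bound $F_r(x_r,y)-F_r(x,y_r)\leq\epsilon_r$ on $\cX\times\cY$. Substituting $F_r=F+(r/2)\|x-x_0\|^2-(r/2)\|y-y_0\|^2$ and using the diameter bound yields $F(x_r,y)-F(x,y_r)\leq\epsilon_r+rD^2\leq 2\epsilon$, so $(x_r,y_r)$ is a $(2\epsilon)$-saddle point. For the complexity (step v), since $H_r^{z_0}$ is $r$-strongly monotone and $(\ell+r)$-Lipschitz, standard EG analysis on strongly-monotone VIs delivers last-iterate geometric contraction with rate $1-\Theta(r/(\ell+r))$; smoothness then converts distance accuracy into the pointwise VI accuracy~(\ref{eq:inaccuracy_minmax}) at a multiplicative cost of $O((\ell+r)D)$, yielding $T=O\!\bigl(\tfrac{\ell+r}{r}\log\tfrac{(\ell+r)D^2}{\epsilon_r}\bigr)=\tilde{\cO}(\ell D^2/\epsilon)$ iterations after substituting $r=\epsilon/D^2$.

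The main obstacle I anticipate is step (v): bridging from the natural EG guarantee (averaged-iterate duality gap on $F_r$ or last-iterate distance to $z_r^*$) to the pointwise VI condition~(\ref{eq:inaccuracy_minmax}) without degrading the rate. This passage uses smoothness and the bounded domain and only costs polylogarithmic factors in $1/\epsilon$ and $D$, which are absorbed into $\tilde{\cO}$; verifying that these factors indeed remain logarithmic (rather than polynomial) is the delicate bookkeeping step. All other steps are routine given Assumption~\ref{asp:obj} and the monotonicity framework.
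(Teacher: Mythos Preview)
Your proposal is correct and follows essentially the same route as the paper: the paper likewise (i) proves the non-expansiveness $\|z_r^*-(z_r^*)'\|\leq\|z_0-z_0'\|$ (it cites an external lemma, whereas you derive it directly from the two VIs), (ii) bounds $\|z_r-z_r^*\|$ from the accuracy condition via strong convexity--strong concavity of $F_r$, (iii) triangulates, (iv) bounds the duality gap on $F$ by $\epsilon_r+rD^2$, and (v) invokes linear convergence of EG on the strongly-monotone auxiliary VI plus a conversion lemma (Lemma~\ref{lm:stopping}) to pass from distance to the pointwise criterion~(\ref{eq:inaccuracy_minmax}). The only substantive deviation is that in step (ii) you use $r$-strong monotonicity of the operator to get $\|z_r-z_r^*\|^2\leq\epsilon_r/r$, while the paper routes through the primal-dual gap and obtains the looser $2\epsilon_r/r$; your bound is sharper and explains why you land at $(1+1/\sqrt{2})^2\delta^2<4\delta^2$ rather than exactly $4\delta^2$.
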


Consider a $\delta$-inexact deterministic gradient oracle that returns $G(x, y) = (G_x(x, y), G_y(x, y))$. First note $G_r(x, y) = (G_x(x, y) + r(x-x_0), G_y(x, y) - r(y-y_0))$ is a $\delta$-inexact gradient for the auxiliary problem \eqref{eq:aux-reg-minimax}. We now characterize the convergence behavior of EG with this $\delta$-inexact gradient oracle, referred to as \ref{eq:inexact-eg}, to solve the auxiliary problem.

\begin{lemma}
    Consider $\min_{x\in\cX}\max_{y \in \cY} F_r(x, y)$, where $F_r(x,y)$ is $r$-strongly-convex-strongly-concave and $(\ell+r)$-smooth. Given an inexact gradient oracle that returns $G_r(x,y)$ such that
    $\norm{G_r(x,y) - \nabla F_r(x,y)}^2\leq\delta^2$, Inexact-EG with stepsize $1/(2(\ell+r))$ satisfies
    \begin{equation*}
        \norm{x_T - x_r^*}^2 + \norm{y_T - y_r^*}^2 \leq
        \exp\roundBr*{-\frac{T}{8}\frac{r}{\ell+r}}\left(\norm{x_0 - x_r^*}^2 + \norm{y_0 - y_r^*}^2\right) +
        \frac{8\delta^2}{r}\roundBr*{\frac{2}{\ell+r} + \frac{1}{r}}.
    \end{equation*}
    where  $(x_r^*, y_r^*)$ is the unique saddle point of $F_r(x,y)$. 
    \label{lm:EG}
\end{lemma}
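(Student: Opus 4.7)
I would cast the iteration in the monotone-operator framework. Let $z = (x,y)$, $\Phi(z) := (\nabla_x F_r(z), -\nabla_y F_r(z))$, and write the inexact oracle as $\widetilde\Phi(z) := \Phi(z) + e(z)$ with $\|e(z)\|^2 \leq \delta^2$. Since $F_r$ is $r$-strongly-convex-strongly-concave and $(\ell+r)$-smooth, $\Phi$ is $r$-strongly monotone and $(\ell+r)$-Lipschitz. With stepsize $\eta = 1/(2(\ell+r))$, the Inexact-EG update reads $z_{t+1/2} = \Pi_{\cX\times\cY}(z_t - \eta\widetilde\Phi(z_t))$ and $z_{t+1} = \Pi_{\cX\times\cY}(z_t - \eta\widetilde\Phi(z_{t+1/2}))$. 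The target is a one-step recursion
\begin{equation*}
\|z_{t+1} - z_r^*\|^2 \;\leq\; (1-c)\|z_t - z_r^*\|^2 + b\delta^2,
\end{equation*}
with $c = \tfrac{1}{8}\tfrac{r}{\ell+r}$ and $b = \tfrac{8}{r}\bigl(\tfrac{2}{\ell+r} + \tfrac{1}{r}\bigr)$. The lemma then follows by unrolling, using $(1-c)^T \leq e^{-cT}$ for the initial-condition term and $\sum_{t\geq 0}(1-c)^t \leq 1/c$ to collapse the geometric noise series.

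\textbf{Key steps.} First, combine the projection inequalities at $u = z_r^*$ for both EG stages together with the three-point identity $2\langle a-b, a-c\rangle = \|a-b\|^2 + \|a-c\|^2 - \|b-c\|^2$ to obtain
\begin{equation*}
\|z_{t+1} - z_r^*\|^2 \;\leq\; \|z_t - z_r^*\|^2 - \|z_{t+1/2} - z_t\|^2 - \|z_{t+1} - z_{t+1/2}\|^2 + 2\eta\langle \widetilde\Phi(z_{t+1/2}),\, z_r^* - z_{t+1}\rangle.
\end{equation*}
Second, decompose
\begin{equation*}
\langle \widetilde\Phi(z_{t+1/2}), z_r^* - z_{t+1}\rangle = \langle \widetilde\Phi(z_{t+1/2}), z_r^* - z_{t+1/2}\rangle + \langle \widetilde\Phi(z_{t+1/2}) - \widetilde\Phi(z_t), z_{t+1/2} - z_{t+1}\rangle + \langle \widetilde\Phi(z_t), z_{t+1/2} - z_{t+1}\rangle.
\end{equation*}
For the first piece, substitute $\widetilde\Phi = \Phi + e$, apply $r$-strong monotonicity of $\Phi$ to extract $-2\eta r\|z_{t+1/2} - z_r^*\|^2$, and dispatch the noise term by Young's inequality with weight $\eta r$. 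For the second piece, invoke the $(\ell+r)$-Lipschitzness of $\Phi$, the bound $\|e\|\leq\delta$, and Young's with weights tuned to $\eta(\ell+r) = 1/2$ to control it by a combination of $\|z_{t+1/2} - z_t\|^2$, $\|z_{t+1} - z_{t+1/2}\|^2$, and $\delta^2/(\ell+r)$. The third piece is handled analogously. Finally, convert the strong-monotonicity gain into a genuine contraction on $\|z_t - z_r^*\|^2$ via the elementary bound $\|z_{t+1/2} - z_r^*\|^2 \geq \tfrac{1}{2}\|z_t - z_r^*\|^2 - \|z_{t+1/2} - z_t\|^2$, whose residual is again absorbed into the negative quadratic already on hand.

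\textbf{Main obstacle.} The non-trivial part is the constant bookkeeping: the Young weights must be selected so that the combined coefficient of $-\|z_{t+1/2}-z_t\|^2$ and $-\|z_{t+1}-z_{t+1/2}\|^2$ remains non-positive after every absorption, while simultaneously (i) preserving a contraction rate proportional to $r/(\ell+r)$, and (ii) delivering exactly the two advertised noise components, namely the $\delta^2/r^2$ piece (originating from the strong-monotonicity Young step, which scales as $\eta/r$) and the $\delta^2/(r(\ell+r))$ piece (from the Lipschitz-Young steps, which scale as $\eta/(\ell+r)$). Each individual manipulation is standard, but the constants have to line up precisely to reproduce the stated prefactor $\tfrac{8}{r}\bigl(\tfrac{2}{\ell+r} + \tfrac{1}{r}\bigr)$; because both $e(z_t)$ and $e(z_{t+1/2})$ appear, care is needed to ensure neither noise injection inflates the final coefficient.
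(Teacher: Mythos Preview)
Your overall plan matches the paper's proof: cast EG in the monotone-operator setup, derive a one-step inequality from the two projection steps, use $r$-strong monotonicity and $(\ell+r)$-Lipschitzness together with Young's inequality to extract a contraction plus noise, then unroll. However, there is a glitch in your starting inequality and in the ``third piece.''

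The displayed inequality you claim from combining the two projection inequalities ``at $u=z_r^*$ for both EG stages'' is not correct as written. The standard derivation (and the one the paper uses) applies the projection optimality condition for the half-step at the test point $z_{t+1}$, not $z_r^*$, and for the full step at $z_r^*$; summing with the three-point identity gives
\begin{equation*}
\|z_{t+1} - z_r^*\|^2 \leq \|z_t - z_r^*\|^2 - \|z_{t+1/2} - z_t\|^2 - \|z_{t+1} - z_{t+1/2}\|^2 + 2\eta\langle\widetilde\Phi(z_{t+1/2}), z_r^* - z_{t+1}\rangle + 2\eta\langle\widetilde\Phi(z_t), z_{t+1} - z_{t+1/2}\rangle,
\end{equation*}
with the extra last term. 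When you then apply your algebraic decomposition of $\langle\widetilde\Phi(z_{t+1/2}), z_r^* - z_{t+1}\rangle$, your ``third piece'' $\langle\widetilde\Phi(z_t), z_{t+1/2} - z_{t+1}\rangle$ cancels exactly against this extra term, so there is nothing to ``handle analogously.'' This matters: $\widetilde\Phi(z_t)$ alone is a raw gradient, not a gradient difference or a noise term, and cannot be bounded in the way you suggest. After this correction only two surviving pieces remain --- the strong-monotonicity piece $\langle\widetilde\Phi(z_{t+1/2}), z_r^* - z_{t+1/2}\rangle$ and the Lipschitz piece $\langle\widetilde\Phi(z_t) - \widetilde\Phi(z_{t+1/2}), z_{t+1} - z_{t+1/2}\rangle$ --- and from there your plan is correct and tracks the paper's argument exactly, including the conversion $\|z_{t+1/2} - z_r^*\|^2 \geq \tfrac12\|z_t - z_r^*\|^2 - \|z_{t+1/2} - z_t\|^2$ and the final constant bookkeeping.
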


This lemma implies that Inexact-EG converges linearly to a neighborhood of size $\cO(\delta^2/r^2)$ around the saddle point, which can be translated to the inaccuracy measure in (\ref{eq:inaccuracy_minmax}) with $\epsilon_r = \cO(\delta/r)$ utilizing Lemma \ref{lm:stopping}. It is worth emphasizing that the size of this neighborhood is critical for achieving optimal reproducibility, and the dependency on $r$ in the above convergence rate is key for attaining near-optimal complexity. 
\citet{stonyakin2022generalized} analyzed Mirror-Prox \citep{nemirovski2004prox} with restarts for strongly-monotone variational inequalities under a different inexact oracle (see \citet{devolder2013first} and \citep[Example 6.1]{stonyakin2022generalized} for its relationship with the inexactness notion of ours). Compared to Inexact-EG, their two-loop structure of the restart scheme is more complicated to implement.

\begin{theorem}
    Under Assumption \ref{asp:obj} with  $0<\epsilon\leq\ell D^2$ and given an inexact gradient oracle, Algorithm \ref{algo:general-minimax} with  $r=\epsilon/D^2$, $\epsilon_r = \cO(\delta/r)$ and Inexact-EG as base algorithm $\cA$ outputs an
    $\cO(\epsilon + \delta/\epsilon)$-saddle point with $\tilde\cO(\ell D^2/\epsilon)$ gradient complexity, and the reproducibility is $\cO(\delta^2/\epsilon^2)$.
    \label{thm:minimax-reg-grad}
\end{theorem}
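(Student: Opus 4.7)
The plan is to parallel the analysis of Theorem~\ref{thm:minimax-reg-init} but substitute Inexact-EG for exact EG as the base algorithm. First, I would observe that the natural inexact gradient for the auxiliary objective $F_r$ is $G_r(x,y)=(G_x(x,y)+r(x-x_0),\,G_y(x,y)-r(y-y_0))$, and it inherits $\|G_r-\nabla F_r\|^2\le \delta^2$ since the regularization term is known exactly. Hence Inexact-EG applies directly to~(\ref{eq:aux-reg-minimax}), and $F_r$ is $r$-strongly-convex-strongly-concave and $(\ell+r)$-smooth as required by Lemma~\ref{lm:EG}.

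Next, I would apply Lemma~\ref{lm:EG} to $F_r$. After $T$ iterations Inexact-EG yields
\[
\|x_T-x_r^*\|^2+\|y_T-y_r^*\|^2 \;\le\; \exp\!\bigl(-\tfrac{Tr}{8(\ell+r)}\bigr)\bigl(\|x_0-x_r^*\|^2+\|y_0-y_r^*\|^2\bigr) + \cO(\delta^2/r^2),
\]
where I absorb $\tfrac{8\delta^2}{r}\bigl(\tfrac{2}{\ell+r}+\tfrac{1}{r}\bigr)$ into $\cO(\delta^2/r^2)$ since the $1/r$ term dominates. Choosing $T=\tilde\cO((\ell+r)/r)=\tilde\cO(\ell D^2/\epsilon)$ (with $r=\epsilon/D^2$) drives the exponential term below the additive term, so the output satisfies $\|x_r-x_r^*\|^2+\|y_r-y_r^*\|^2=\cO(\delta^2/r^2)$ and the claimed gradient complexity is achieved. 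Invoking Lemma~\ref{lm:stopping} then converts this $\ell_2$-neighborhood bound into the inaccuracy measure~(\ref{eq:inaccuracy_minmax}) with $\epsilon_r=\cO(\delta/r)$, which is exactly the parameter stipulated by the theorem.

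For the $\cO(\epsilon+\delta/\epsilon)$-saddle-point claim on the original $F$, I would use the identity
\[
F(x_r,y)-F(x,y_r) = F_r(x_r,y)-F_r(x,y_r) - \tfrac{r}{2}\|x_r-x_0\|^2+\tfrac{r}{2}\|x-x_0\|^2+\tfrac{r}{2}\|y-y_0\|^2-\tfrac{r}{2}\|y_r-y_0\|^2.
\]
Combining~(\ref{eq:inaccuracy_minmax}) with the convexity–concavity of $F_r$ gives $F_r(x_r,y)-F_r(x,y_r)\le\epsilon_r$ uniformly in $(x,y)\in\cX\times\cY$, and the diameter bound of Assumption~\ref{asp:obj} controls the remaining quadratic terms by $rD^2$. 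Taking $\max_{y\in\cY}$ and $\min_{x\in\cX}$ on both sides and plugging $r=\epsilon/D^2$ yields a duality gap of $\epsilon_r+rD^2 = \cO(\delta/\epsilon)+\cO(\epsilon)$, as claimed.

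Reproducibility is then essentially free. Two independent runs starting at the same $(x_0,y_0)$ with different inexact oracles produce outputs $(x_r,y_r)$ and $(x_r',y_r')$ both within $\cO(\delta/r)$ of the \emph{same} unique saddle $(x_r^*,y_r^*)$, since that saddle depends only on $F$ and $(x_0,y_0)$, not on the oracle realization. A triangle inequality gives $\|x_r-x_r'\|^2+\|y_r-y_r'\|^2 = \cO(\delta^2/r^2)=\cO(\delta^2/\epsilon^2)$. The main technical obstacle is the Lemma~\ref{lm:stopping} step: translating the distance-to-saddle bound into a uniform gap bound of the form~(\ref{eq:inaccuracy_minmax}) requires smoothness of $F_r$ and compactness of $\cX\times\cY$, and pins down the correct scaling $\epsilon_r=\cO(\delta/r)$ on which the remainder of the argument depends.
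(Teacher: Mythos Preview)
Your proposal is correct and follows essentially the same route as the paper: apply Lemma~\ref{lm:EG} to the regularized problem to land within $\cO(\delta/r)$ of the unique saddle $(x_r^*,y_r^*)$ after $\tilde\cO(\ell/r)$ steps, invoke Lemma~\ref{lm:stopping} to translate this into the inaccuracy bound~(\ref{eq:inaccuracy_minmax}) with $\epsilon_r=\cO(\delta/r)$, transfer the duality gap from $F_r$ to $F$ at cost $rD^2$, and then use a triangle inequality through the common $(x_r^*,y_r^*)$ for reproducibility. The only cosmetic difference is that the paper works through the gradient identity $\tilde\nabla F(z_r)^\top(z_r-z)=\tilde\nabla F_r(z_r)^\top(z_r-z)-r(z_r-z_0)^\top(z_r-z)$ (as in~\eqref{eq:pf-thm-minimax-reg-init}) rather than your function-value identity, but both yield the same $\epsilon_r+rD^2$ bound.
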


\begin{remark}
    Some numerical experiments on a bilinear matrix game with inexact gradient information are provided in Appendix \ref{sec-app:exp} (see Figure \ref{fig:minimax}). With a small degradation in the convergence speed, the regularized framework in Algorithm \ref{algo:general-minimax} effectively improves the reproducibility of the base algorithm.
\end{remark}

The theorem indicates that optimal reproducibility $\cO(\delta^2/\epsilon^2)$ and near-optimal gradient complexity $\tilde\cO(1/\epsilon)$ can be achieved when $\delta\leq\cO(\epsilon^2)$. Note by Theorem~\ref{thm:gda} and \ref{thm:eg}, GDA and EG can find $\epsilon$-saddle points when $\delta\leq\cO(\epsilon)$.
Next, we introduce an alternative algorithmic framework that preserves the optimal reproducibility and attains the near-optimal complexity as long as $\delta\leq\cO(\epsilon)$.

\subsection{Inexact Proximal Point Method}

We propose a two-loop inexact proximal point framework, presented in Algorithm \ref{algo:ppm}, which can achieve both near-optimal gradient complexity and optimal algorithmic reproducibility.
Compared to Algorithm \ref{algo:general-minimax}, the regularization parameter $1/\alpha=\cO(\ell)$ does not depend on the target accuracy $\epsilon$ and the diameter $D$, and the center of the regularization term is the last iterate $(x_t, y_t)$ instead of the initial point. Since the auxiliary problem is  $\ell$-strongly-convex-strongly-concave and $2\ell$-smooth with condition number being $\Theta(1)$, a wider range of base algorithms can be used to achieve the optimal complexity than solving the problem in Algorithm~\ref{algo:general-minimax} where the condition number is $\Theta(1/\epsilon)$.

\begin{algorithm}
    \caption{Inexact Proximal Point Method for Convex-Concave Minimax Problems}
    \label{algo:ppm}
    \begin{algorithmic}
        \STATE {\bfseries Input:} Stepsize $\alpha>0$,  accuracy $\hat\epsilon>0$, algorithm $\cA$, initialization $(x_0, y_0)$, iteration number $T$.
        \FOR{$t = 0, 1, \cdots T-1$}
            \STATE Apply $\cA$ to inexactly solve the smooth strongly-convex--strongly-concave problem
            \begin{equation*}
                (x_{t+1}, y_{t+1})\leftarrow\min_{x\in\cX} \max_{y\in\cY} \, \hat F_t(x,y) := F(x,y) + \frac{1}{2\alpha} \norm{x - x_t}^2 - \frac{1}{2\alpha} \norm{y - y_t}^2.
            \end{equation*}
            such that $\forall (x, y)\in \cX\times\cY$,
            \begin{equation*}
                \nabla_x \hat F_t(x_{t+1}, y_{t+1})^\top (x_{t+1} - x) - \nabla_y \hat F_t(x_{t+1}, y_{t+1})^\top (y_{t+1} - y) \leq \hat\epsilon.
            \end{equation*}
        \ENDFOR
        \STATE {\bfseries Output:} $(\bar x_{T+1}, \bar y_{T+1}) = (1/T)\sum_{t=0}^{T-1}(x_{t+1}, y_{t+1})$.
    \end{algorithmic}
\end{algorithm}

\begin{theorem}
    Under Assumption \ref{asp:obj}and given a $\delta$-inexact initialization oracle in Definition \ref{def:oracle:minmax} with $\delta\leq\cO(1/\sqrt{\epsilon})$, Algorithm \ref{algo:ppm} with $\hat\epsilon \leq \delta^2/(2\alpha T^2)$ and $\alpha = 1/\ell$ outputs an $\cO(\epsilon)$-saddle point after $T=\cO(1/\epsilon)$ iterations, and the reproducibility is $ 9\delta^2$.
    \label{thm:minimax-ppm-init}
\end{theorem}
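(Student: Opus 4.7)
The plan is to split the argument into (a) the convergence of the inexact proximal point iteration to an $\cO(\epsilon)$-saddle point, and (b) the reproducibility of the averaged output. Write $z_t = (x_t, y_t)$ and $z_t' = (x_t', y_t')$ for the two runs of Algorithm~\ref{algo:ppm} from $\delta$-inexact initializations (so $\|z_0 - z_0'\| \leq \delta$).

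For convergence, I would start from the inaccuracy condition (\ref{eq:inaccuracy_minmax}) on the inner problem and use convexity-concavity of $F$ to replace $\hat F_t$ with $F$ up to the quadratic prox term. This produces the textbook three-point inequality
\begin{equation*}
    F(x_{t+1}, y) - F(x, y_{t+1}) \leq \hat\epsilon + \frac{1}{2\alpha}\bigl(\|x_t - x\|^2 - \|x_{t+1} - x\|^2\bigr) + \frac{1}{2\alpha}\bigl(\|y_t - y\|^2 - \|y_{t+1} - y\|^2\bigr),
\end{equation*}
which telescopes over $t=0,\dots,T-1$. Applying Jensen to the averaged iterate and taking supremum over $(x,y) \in \cX\times\cY$ (whose diameter is $D$) yields $\mathrm{Gap}(\bar x_{T+1}, \bar y_{T+1}) \leq \hat\epsilon + D^2/(\alpha T)$. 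With $\alpha = 1/\ell$, $T = \Theta(\ell D^2/\epsilon)$, and $\hat\epsilon \leq \delta^2/(2\alpha T^2)$, the bound becomes $\cO(\epsilon) + \cO(\delta^2 \epsilon^2 /(\ell D^4))$; the condition $\delta \leq \cO(1/\sqrt{\epsilon})$ (treating $\ell,D$ as constants) collapses this to $\cO(\epsilon)$.

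For reproducibility, the key is two ingredients. First, let $\mathcal{P}(z)$ denote the exact saddle point of $\hat F_t$ with prox center $z$; since the operator $(\nabla_x F, -\nabla_y F) + N_{\cX\times\cY}$ is maximal monotone under Assumption~\ref{asp:obj}, its resolvent $\mathcal{P}$ is firmly non-expansive, and in particular $\|\mathcal{P}(z_t) - \mathcal{P}(z_t')\| \leq \|z_t - z_t'\|$. Second, because $\hat F_t$ is $(1/\alpha)$-strongly-convex--strongly-concave, (\ref{eq:inaccuracy_minmax}) upper-bounds the duality gap of the inner problem by $\hat\epsilon$, and the strong convexity-concavity lower-bounds the same gap by $(1/(2\alpha))\|z_{t+1} - \mathcal{P}(z_t)\|^2$. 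This gives $\|z_{t+1} - \mathcal{P}(z_t)\| \leq \sqrt{2\alpha\hat\epsilon}$ (and analogously for the primed run). Setting $d_t := \|z_t - z_t'\|$ and applying the triangle inequality with the non-expansiveness of $\mathcal{P}$ yields $d_{t+1} \leq d_t + 2\sqrt{2\alpha\hat\epsilon}$, so $d_t \leq \delta + 2t\sqrt{2\alpha\hat\epsilon}$. Averaging and using $\sqrt{2\alpha\hat\epsilon} \leq \delta/T$ gives $\|\bar z_{T+1} - \bar z_{T+1}'\| \leq \delta + (T+1)\sqrt{2\alpha\hat\epsilon} \leq 3\delta$, i.e., squared deviation at most $9\delta^2$.

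The main obstacle is the tight conversion of the variational-type inaccuracy $\hat\epsilon$ into the Euclidean bound $\|z_{t+1} - \mathcal{P}(z_t)\|^2 \leq 2\alpha\hat\epsilon$, which requires first exploiting convexity-concavity of $\hat F_t$ to dominate its duality gap by (\ref{eq:inaccuracy_minmax}) and then using its $(1/\alpha)$-strong-convexity--strong-concavity to control the gap from below by the squared distance to the unique saddle point. Once this is in place, the non-expansiveness of the exact resolvent ensures that the initialization discrepancy does not amplify, and the choice $\hat\epsilon \leq \delta^2/(2\alpha T^2)$ is exactly calibrated so that the cumulative inner-solver error contributed over $T$ outer steps remains of the same order as $\delta$, yielding the stated $9\delta^2$ bound.
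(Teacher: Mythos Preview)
Your proposal is correct and follows essentially the same approach as the paper: the convergence part is exactly the telescoping argument of Lemma~\ref{lm:inexact-ppm-converge}, and the reproducibility part combines non-expansiveness of the exact prox map (which the paper cites as Lemma~B.4 of \citet{zhang2022bring}) with the $(1/\alpha)$-strong-convexity--strong-concavity bound $\|z_{t+1}-z_t^*\|^2\leq 2\alpha\hat\epsilon$ to obtain the recursion $d_{t+1}\leq d_t+2\sqrt{2\alpha\hat\epsilon}$, just as in the paper. The only cosmetic difference is that the paper bounds each $d_t\leq 3\delta$ and then applies Jensen to the squares, whereas you average the $d_t$'s first; both routes give $9\delta^2$.
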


\begin{remark}
    The required accuracy $\hat \epsilon$ for the auxiliary problem is $\cO(\delta^2 \epsilon^{2})$. Given that the auxiliary problem is  $\ell$-strongly-convex-strongly-concave and $2\ell$-smooth, various linearly convergent algorithms such as EG, GDA, and Optimistic GDA \citep{gidel2018a} can find a point that satisfies the stopping criterion within $\cO(\log(1/(\delta\epsilon)))$ iterations. As a result, the total gradient complexity is  $\tilde{\cO}(1/\epsilon)$. In contrast, using GDA as the base algorithm in Algorithm~\ref{algo:general-minimax} will lead to a sub-optimal gradient complexity. 
\end{remark}

\begin{theorem}
    Under Assumption \ref{asp:obj} and given a $\delta$-inexact deterministic gradient oracle in Definition \ref{def:oracle:minmax} with $\delta\leq\cO(\epsilon)$, Algorithm \ref{algo:ppm} with $\hat\epsilon\leq\cO(\delta)$ and $\alpha = 1/\ell$ outputs an $\cO(\epsilon)$-saddle point after $T=\cO(1/\epsilon)$ iterations, and the reproducibility is $ \cO(\delta^2/\epsilon^2)$.
    \label{thm:minimax-ppm-grad}
\end{theorem}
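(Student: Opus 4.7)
The proof structure mirrors that of Theorem~\ref{thm:minimax-ppm-init}, with the essential additional complication that the inner base algorithm can now only access $\delta$-inexact gradients and therefore cannot drive its iterate arbitrarily close to the exact saddle point of each subproblem $\hat F_t$. For the $\cO(\epsilon)$-saddle point guarantee, I would run a standard proximal-point analysis on the outer loop: start from the stopping criterion
\[
  \nabla_x \hat F_t(x_{t+1},y_{t+1})^\top(x_{t+1}-x) - \nabla_y \hat F_t(x_{t+1},y_{t+1})^\top(y_{t+1}-y) \leq \hat\epsilon
\]
for every $(x,y)\in\cX\times\cY$, expand $\nabla \hat F_t = \nabla F \pm (z_{t+1}-z_t)/\alpha$, and apply the three-point identity $2(u-v)^\top(u-w) = \|u-v\|^2+\|u-w\|^2-\|v-w\|^2$ to the regularization pieces. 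Convex-concavity of $F$ turns the gradient inner products into function differences $F(x_{t+1},y)-F(x,y_{t+1})$; summing over $t=0,\ldots,T-1$ telescopes the squared-distance terms, and after averaging plus Jensen's inequality we obtain
\[
  \max_{y\in\cY} F(\bar x_{T+1},y) - \min_{x\in\cX} F(x,\bar y_{T+1}) \leq \hat\epsilon + \frac{D^2}{\alpha T}.
\]
Plugging $\alpha=1/\ell$, $T=\cO(\ell D^2/\epsilon)$, and $\hat\epsilon\leq\cO(\delta)=\cO(\epsilon)$ delivers the $\cO(\epsilon)$ gap.

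For the reproducibility bound, consider two runs with the same initialization but two different $\delta$-inexact oracles, producing $\{(x_t,y_t)\}$ and $\{(x_t',y_t')\}$, and let $d_t := \sqrt{\|x_t-x_t'\|^2+\|y_t-y_t'\|^2}$. Let $(x_t^*,y_t^*)$ be the exact saddle point of $\hat F_t$ centered at $(x_t,y_t)$, and similarly $(x_t^{*'},y_t^{*'})$ for the primed run. The argument rests on two ingredients. First, Lemma~\ref{lm:EG} applied to the subproblem (which is $\ell$-strongly-convex--strongly-concave and $2\ell$-smooth with $r=1/\alpha=\ell$) shows that after a logarithmic number of Inexact-EG inner steps warm-started from $(x_t,y_t)$,
\[
  \|(x_{t+1},y_{t+1})-(x_t^*,y_t^*)\|^2 \leq \cO(\delta^2/\ell^2) = \cO(\delta^2\alpha^2),
\]
and, combined with $(2\ell)$-smoothness plus the variational-inequality characterization of $(x_t^*,y_t^*)$ on the bounded domain, the $\hat\epsilon=\cO(\delta)$ gap criterion is satisfied. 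Second, the exact proximal map $(x_t,y_t)\mapsto(x_t^*,y_t^*)$ is non-expansive on smooth convex-concave saddle-point problems (firm non-expansiveness of the resolvent of a monotone operator), so $\|(x_t^*,y_t^*)-(x_t^{*'},y_t^{*'})\|\leq d_t$. Combining via the triangle inequality across both runs yields the affine recursion $d_{t+1}\leq d_t + \cO(\delta/\ell)$, hence $d_T\leq\cO(T\delta/\ell)=\cO(\delta/(\ell\epsilon))$. Since the algorithm outputs the averaged iterate, Jensen's inequality gives $\|\bar x_{T+1}-\bar x_{T+1}'\|^2 + \|\bar y_{T+1}-\bar y_{T+1}'\|^2 \leq \max_{t\leq T} d_t^2 = \cO(\delta^2/\epsilon^2)$.

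The main obstacle I anticipate is the translation step in the first ingredient above: converting the distance bound $\|(x_{t+1},y_{t+1})-(x_t^*,y_t^*)\|=\cO(\delta/\ell)$ supplied by Lemma~\ref{lm:EG} into the gap-based $\hat\epsilon=\cO(\delta)$ stopping criterion used by Algorithm~\ref{algo:ppm}. This demands a careful smoothness-plus-VI argument on the bounded domain (essentially Lemma~\ref{lm:stopping}) and honest bookkeeping of the constants involving the diameter $D$, so that the chosen $\hat\epsilon$ is simultaneously large enough to be achievable by Inexact-EG in $\tilde\cO(1)$ inner iterations and small enough to give an $\cO(\epsilon)$ outer gap. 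A secondary concern is ensuring the per-iteration deviation $\cO(\delta/\ell)$ is genuinely independent of $t$, so that $d_T$ grows only linearly in $T$ and not exponentially as would happen if the proximal map were a strict expansion.
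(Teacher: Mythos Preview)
Your proposal is correct and follows essentially the same approach as the paper: the outer-loop convergence is exactly Lemma~\ref{lm:inexact-ppm-converge}, the inner-solver accuracy $\|z_{t+1}-z_t^*\|=\cO(\delta/\ell)$ is obtained via Lemma~\ref{lm:EG} (the paper's written proof uses Inexact-GDA but explicitly remarks that Inexact-EG works the same way), the translation to the $\hat\epsilon=\cO(\delta)$ gap criterion is done through Lemma~\ref{lm:stopping} as you anticipated, and the reproducibility recursion $d_{t+1}\leq d_t+\cO(\delta/\ell)$ via non-expansiveness of the exact proximal map matches the paper's use of \eqref{eq:pf-thm-ppm-init-contract}. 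The only slip is the bookkeeping $T\delta/\ell=\cO(\delta/(\ell\epsilon))$, which should be $\cO(D^2\delta/\epsilon)$ since $T=\cO(\ell D^2/\epsilon)$, but this does not affect the final $\cO(\delta^2/\epsilon^2)$ bound.
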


\begin{remark}
    This theorem requires solving the auxiliary problem with a $\delta$-inexact gradient oracle. 
    In addition to Inexact-EG presented in Lemma \ref{lm:EG}, we show in Appendix \ref{sec-app:lemmas} that GDA with inexact gradients \eqref{eq:inexact-gda} can also converge linearly to the optimal point up to a $\cO(\delta^2)$ error.  Thus the total complexity is  $\cO((1/\epsilon)\log(1/\delta))$ using both Inexact-EG and Inexact-GDA.
\end{remark}
\section{Stochastic Gradient Oracle for Minimax Problems}

To provide a complete picture, in this section, we consider the stochastic minimax problem:
\begin{equation}
    \min_{x\in\cX} \max_{y\in\cY} \, F(x,y) = \bE_\xi[f(x,y;\xi)],
    \label{eq:minimax-stoc}
\end{equation}
where the expectation is taken over a random vector $\xi$. We have access to a $\delta$-inexact stochastic gradient oracle that can return unbiased gradients $\nabla f(x, y; \xi)$ with a bounded variance $\delta^2$ at each point $(x, y)$. We consider the popular algorithm called stochastic gradient descent ascent (SGDA).
The convergence behaviors of SGDA for the stochastic minimax problem \eqref{eq:minimax-stoc} are well-known in various settings. However, due to the randomness in the gradient oracle, independent runs of SGDA may lead to different outputs even with the same parameters. Following Definition \ref{def:deviation:minmax}, we further establish the $(\epsilon, \delta)$-deviation of SGDA in the theorem below. 

\begin{theorem}
    Under Assumptions \ref{asp:obj} and given an inexact stochastic gradient oracle in Definition \ref{def:oracle:minmax} with $\delta=\cO(1)$, the average iterates $(\bar x_T, \bar y_T)=(1/T)\sum_{t=0}^{T-1} (x_t, y_t)$ of SGDA with stepsize $1/(\ell\epsilon T)$ after $T=\Omega(1/\epsilon^2)$ iterations is an  $\cO(\epsilon)$-stationary point and the reproducibility is $\cO\roundBr*{\delta^2/(\epsilon^2 T)}$.
    \label{thm:sgda}
\end{theorem}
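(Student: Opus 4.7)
The argument decouples into a standard SGDA convergence analysis yielding the $\cO(\epsilon)$ weak-duality-gap bound and a coupling analysis between two independent runs yielding the $\cO(\delta^2/(\epsilon^2 T))$ reproducibility bound. The structural fact driving the latter is monotonicity of the saddle operator $\Phi(x,y)=(\nabla_x F(x,y),-\nabla_y F(x,y))$: although $F$ is only convex--concave, $\Phi$ is monotone, so the drift term in the distance recursion between two coupled runs has a definite sign and need not be cancelled by strong monotonicity.

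\textbf{Convergence.} I would run the textbook SGDA analysis. Writing $z_t=(x_t,y_t)$ and $\hat\Phi_t(z)=(\nabla_x f(z;\xi_t),-\nabla_y f(z;\xi_t))$, non-expansiveness of the projection gives, for any $z=(x,y)\in\cX\times\cY$,
\begin{equation*}
\|z_{t+1}-z\|^2\leq\|z_t-z\|^2-2\eta\inProd*{\hat\Phi_t(z_t),z_t-z}+\eta^2\|\hat\Phi_t(z_t)\|^2.
\end{equation*}
Taking conditional expectations, invoking unbiasedness and convex--concavity to lower bound $\inProd{\Phi(z_t),z_t-z}\geq F(x_t,y)-F(x,y_t)$, and using $\bE\|\hat\Phi_t(z_t)\|^2\leq 2\|\Phi(z_t)\|^2+2\delta^2\leq\cO(\ell^2 D^2+\delta^2)$, then summing over $t$, averaging, and taking $\sup$ over $(x,y)$ yields
\begin{equation*}
\bE\squareBr*{\max_{y\in\cY}F(\bar x_T,y)-\min_{x\in\cX}F(x,\bar y_T)}\leq\frac{D^2}{2\eta T}+\eta\,\cO(\ell^2 D^2+\delta^2).
\end{equation*}
Substituting $\eta=1/(\ell\epsilon T)$ and $T=\Omega(1/\epsilon^2)$ makes both terms $\cO(\epsilon)$ under $\delta=\cO(1)$.

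\textbf{Reproducibility.} Consider two runs with the same initialization and independent samples $\{\xi_t\},\{\xi_t'\}$ producing iterates $z_t,z_t'$. Projection non-expansiveness gives
\begin{equation*}
\|z_{t+1}-z_{t+1}'\|^2\leq\bigl\|z_t-z_t'-\eta\bigl(\hat\Phi_t(z_t;\xi_t)-\hat\Phi_t(z_t';\xi_t')\bigr)\bigr\|^2.
\end{equation*}
Expand the square and take conditional expectations. The linear cross term reduces, by unbiasedness, to $-2\eta\inProd{z_t-z_t',\Phi(z_t)-\Phi(z_t')}\leq 0$ by monotonicity. The squared-norm term splits, because the independence of $\xi_t,\xi_t'$ kills all noise cross products, into $\|\Phi(z_t)-\Phi(z_t')\|^2\leq\ell^2\|z_t-z_t'\|^2$ plus two noise variances each at most $\delta^2$. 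Hence
\begin{equation*}
\bE\|z_{t+1}-z_{t+1}'\|^2\leq(1+\eta^2\ell^2)\,\bE\|z_t-z_t'\|^2+2\eta^2\delta^2.
\end{equation*}
Iterating from $z_0=z_0'$, and noting $\eta^2\ell^2 T=1/(\epsilon^2 T)\leq 1$ under the stated choice of $T$, so $(1+\eta^2\ell^2)^T\leq e$, gives $\bE\|z_t-z_t'\|^2\leq 2e\eta^2\delta^2 t$ for all $t\leq T$. Convexity of $\|\cdot\|^2$ then yields
\begin{equation*}
\bE\squareBr*{\|\bar x_T-\bar x_T'\|^2+\|\bar y_T-\bar y_T'\|^2}\leq\frac{1}{T}\sum_{t=0}^{T-1}\bE\|z_t-z_t'\|^2=\cO(\eta^2\delta^2 T)=\cO\!\roundBr*{\frac{\delta^2}{\ell^2\epsilon^2 T}}.
\end{equation*}

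\textbf{Main obstacle.} The delicate step is the variance expansion: because the two runs use \emph{independent} oracle queries, the centered noises in the two iterates are uncorrelated with each other and with all history before step $t$, so $\bE\|\hat\Phi_t(z_t;\xi_t)-\hat\Phi_t(z_t';\xi_t')\|^2$ decomposes cleanly into an $\ell^2\|z_t-z_t'\|^2$ Lipschitz term plus an additive $2\delta^2$, with no coupling term. Paired with monotonicity eliminating the drift, this is what permits $\eta$ as large as $1/(\ell\epsilon T)$ while keeping the amplification $(1+\eta^2\ell^2)^T$ bounded by an absolute constant; any larger $\eta$ would blow up the recursion, and any smaller $\eta$ would destroy the $\cO(\epsilon)$ convergence bound, so the stated scaling is precisely the one that simultaneously hits the lower bound $\Omega(\delta^2/(\epsilon^2 T))$ and the $\Omega(1/\epsilon^2)$ iteration complexity.
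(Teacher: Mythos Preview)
Your argument is correct and follows essentially the same route as the paper's proof: the same projection-nonexpansiveness recursion for convergence, and the same monotonicity-plus-Lipschitz recursion $\bE\|z_{t+1}-z_{t+1}'\|^2\leq(1+\eta^2\ell^2)\,\bE\|z_t-z_t'\|^2+\cO(\eta^2\delta^2)$ for reproducibility, unrolled using $(1+\eta^2\ell^2)^T\leq e$ from $\eta^2\ell^2 T\leq 1$. One small slip: the convergence argument as written delivers the \emph{weak} duality gap $\max_{y}\bE[F(\bar x_T,y)]-\min_{x}\bE[F(x,\bar y_T)]$ rather than $\bE[\max_y F(\bar x_T,y)-\min_x F(x,\bar y_T)]$ (you cannot push $\max/\min$ inside the expectation after fixing $z$ in the recursion), but this is precisely the stochastic $\epsilon$-saddle-point notion used in the paper, so nothing is lost.
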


The $\cO(1/\epsilon^2)$ sample complexity of SGDA is known to be optimal when the objective $F(x,y)$ is convex-concave \citep{juditsky2011solving}. Moreover, our results suggest that SGDA is also optimally reproducible, as the lower-bound of $\Omega\roundBr*{\delta^2/(\epsilon^2 T)}$ for convex minimization problems \citep{ahn2022reproducibility} is also valid for minimax optimization according to our discussions in Lemma \ref{lm:lower-bound}.
\section{Conclusion}

In this work, instead of solely focusing on convergence performance, we investigate another crucial property of machine learning algorithms, i.e., algorithms should be reproducible against slight perturbations. We provide the first algorithms to simultaneously achieve optimal algorithmic reproducibility and near-optimal gradient complexity for both smooth convex minimization and smooth convex-concave minimax problems under various inexact oracle models.
We focus on the convex case as a first step since it is the most basic and fundamental setting in optimization. We believe a solid understanding of the reproducibility in convex optimization will shed insights for that of the more challenging nonconvex optimization. Note that some of the analysis and techniques used in this paper can be extended to the smooth nonconvex setting, aligning with the stability analysis for nonconvex objectives \citep{hardt2016train, lei2021stability}. The proposed regularized framework can be applied to nonconvex functions as well using the convergence analysis of regularization or proximal point-based methods \citep{allen2018make, yang2020catalyst}. However, the non-expansiveness property in Lemma \ref{lm:contraction} that is essential for the reproducibility analysis will not hold any more without the convexity assumption. One potential way to alleviate it is to impose additional structural assumptions on the gradients such as negative comonotonicity \citep{gorbunov2023convergence}. We leave a detailed study of the reproducibility in nonconvex optimization to future work.

Other possible improvements of our results include deriving optimal reproducibility with an accelerated convergence rate for smooth convex minimization problems under the inexact gradient oracle, removing the additional logarithmic terms in the complexity of our algorithms using techniques in \citet{allen2016optimal}, studying the reproducibility under the presence of mixed inexact oracles, and extending the results to nonsmooth settings.
Another interesting direction is to design simpler and more direct methods with both optimal reproducibility and convergence guarantees. A possible way is to directly unwrap the regularized algorithmic framework \ref{algo:general-min} or \ref{algo:general-minimax}, leading to Tikhonov regularization \citep{attouch2018combining} or anchoring methods \citep{yoon2021accelerated}.

\section*{Acknowledgements}{
    We thank the anonymous reviewers for their valuable suggestions.
    Liang Zhang gratefully acknowledges funding from the Max Planck ETH Center for Learning Systems (CLS).
    Amin Karbasi acknowledges funding in direct support of this work from NSF (IIS-1845032), ONR (N00014- 19-1-2406), and the AI Institute for Learning-Enabled Optimization at Scale (TILOS).
    Niao He is supported by ETH research grant, NCCR Automation, and Swiss National Science Foundation Project Funding No. 200021-207343.
}

\bibliographystyle{plainnat}
\bibliography{ref.bib}

\clearpage
\appendix

\section{Near-optimal Guarantees in the Minimization Case}

This section provides proof for the near-optimal guarantees of Algorithm \ref{algo:general-min} in the minimization case. We start with some commonly-used facts that follow from basic algebraic calculations. See \citet{bauschke2011convex} for an example.

\begin{lemma}
    The following facts will be used in the analysis. For any vectors $a, b\in\bR^d$, it holds that
    \begin{align*}
        & (i) \; 2a^\top b = \norm{a}^2 + \norm{b}^2 - \norm{a-b}^2, \\
        & (ii) \; 2a^\top b = \norm{a+b}^2 - \norm{a}^2 - \norm{a}^2, \\
        & (iii) \; -\gamma\norm{a}^2 - \frac{1}{\gamma}\norm{b}^2 \leq 2a^\top b \leq \gamma\norm{a}^2 + \frac{1}{\gamma}\norm{b}^2, \quad \forall\gamma>0, \\
        & (iv) \; \norm{\eta a + (1-\eta)b}^2 + \eta(1-\eta)\norm{a-b}^2 = \eta\norm{a}^2 + (1-\eta)\norm{b}^2, \quad \forall\eta\in\bR.
    \end{align*}
    \label{lm:fact}
\end{lemma}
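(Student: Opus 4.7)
The plan is to verify each identity by direct expansion of Euclidean norms via $\|v\|^2 = v^\top v$ and linearity of the inner product; no deep structural argument is needed, and the main ``obstacle'' is really just presenting the manipulations cleanly.

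For (i), I would expand $\|a-b\|^2 = \|a\|^2 - 2a^\top b + \|b\|^2$ using bilinearity, then solve for $2a^\top b$. Identity (ii) is symmetric: expand $\|a+b\|^2 = \|a\|^2 + 2a^\top b + \|b\|^2$ and rearrange (I would also silently correct the apparent typo so that the second $\|a\|^2$ on the right-hand side reads $\|b\|^2$, which is clearly the intended statement). These two facts are the standard polarization identities and take one line each.

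For (iii), the idea is to start from the nonnegativity of a squared norm. Specifically, for any $\gamma > 0$, write
\begin{equation*}
0 \le \bigl\|\sqrt{\gamma}\,a - \tfrac{1}{\sqrt{\gamma}}\,b\bigr\|^2 = \gamma\|a\|^2 - 2a^\top b + \tfrac{1}{\gamma}\|b\|^2,
\end{equation*}
which gives the upper bound $2a^\top b \le \gamma\|a\|^2 + \tfrac{1}{\gamma}\|b\|^2$. Applying the same argument to $\sqrt{\gamma}\,a + \tfrac{1}{\sqrt{\gamma}}\,b$ yields the matching lower bound. This is the usual Young/AM--GM inequality in vector form.

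For (iv), I would expand both terms on the left-hand side using (i) and bilinearity. Writing
\begin{equation*}
\|\eta a + (1-\eta)b\|^2 = \eta^2\|a\|^2 + 2\eta(1-\eta)\,a^\top b + (1-\eta)^2\|b\|^2
\end{equation*}
and
\begin{equation*}
\eta(1-\eta)\|a-b\|^2 = \eta(1-\eta)\|a\|^2 - 2\eta(1-\eta)\,a^\top b + \eta(1-\eta)\|b\|^2,
\end{equation*}
the cross terms $\pm 2\eta(1-\eta)a^\top b$ cancel. Collecting coefficients gives $\eta^2 + \eta(1-\eta) = \eta$ on $\|a\|^2$ and $(1-\eta)^2 + \eta(1-\eta) = 1-\eta$ on $\|b\|^2$, yielding exactly $\eta\|a\|^2 + (1-\eta)\|b\|^2$. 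Note the identity holds for all $\eta\in\mathbb{R}$ (not only $\eta\in[0,1]$), since the argument is purely algebraic and never uses $\eta(1-\eta) \ge 0$. Because all four parts are routine, there is no genuine ``hard step''; the only thing to be careful about in (iii) is that $\gamma$ be strictly positive so that $\sqrt{\gamma}$ and $1/\sqrt{\gamma}$ are well defined.
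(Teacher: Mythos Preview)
Your proposal is correct and matches the paper's approach: the paper does not give a detailed proof, stating only that these are ``commonly-used facts that follow from basic algebraic calculations'' with a reference to \citet{bauschke2011convex}, which is exactly the direct expansion you carry out. Your observation about the typo in (ii) (the second $\norm{a}^2$ should be $\norm{b}^2$) is also correct.
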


\subsection{Inexact Initialization Oracle}

This section contains proof of Lemma \ref{lm:contraction} and Theorem \ref{thm:min-init} for the near-optimal guarantees of Algorithm \ref{algo:general-min} in the inexact initialization case.

\begin{proof}[Proof of Lemma \ref{lm:contraction}]
    By the optimality conditions of $x_r^*$ and $(x_r^*)'$, we have that for any $x, x'\in\cX$,
    \begin{align*}
        & (\nabla F(x_r^*) + r(x_r^* - x_0))^\top(x - x_r^*) \geq 0, \\
        & (\nabla F((x_r^*)') + r((x_r^*)' - x_0'))^\top(x' - (x_r^*)') \geq 0.
    \end{align*}
    Taking $x'=x_r^*$ and $x=(x_r^*)'$ in the above equation, we obtain that
    \begin{equation*}
        (x_r^* - (x_r^*)')^\top\roundBr[\Big]{
            (\nabla F(x_r^*) + r(x_r^* - x_0)) -
            (\nabla F((x_r^*)') + r((x_r^*)' - x_0'))
        } \leq 0.
    \end{equation*}
    Since $\nabla F$ is monotone when $F$ is convex, rearranging terms, we get
    \begin{align*}
        0
        & \geq
        (x_r^* - (x_r^*)')^\top
        (\nabla F(x_r^*) - \nabla F((x_r^*)')) +
        r\norm{x_r^* - (x_r^*)'}^2 -
        r(x_r^* - (x_r^*)')^\top(x_0-x_0') \\
        & \geq
        r\norm{x_r^* - (x_r^*)'}^2 -
        r(x_r^* - (x_r^*)')^\top(x_0-x_0').
    \end{align*}
    Given $r>0$, this means
    \begin{align*}
        \norm{x_r^* - (x_r^*)'}^2
        & \leq
        (x_r^* - (x_r^*)')^\top(x_0-x_0') \\
        & \leq
        \norm{x_r^* - (x_r^*)'}\norm{x_0-x_0'}.
    \end{align*}
    Dividing both sides by $\norm{x_r^*-(x_r^*)'}$, the proof is complete.
\end{proof}

By converging sufficiently close to the optimal solution, we can ensure Algorithm \ref{algo:general-min} is reproducible. The near-optimal convergence rate is achieved using AGD \citep{nesterov1983method} as the base algorithm.

\begin{proof}[Proof of Theorem \ref{thm:min-init}]
    We first analyze the convergence guarantee. Let $x^*\in\arg\min_{x\in\cX} F(x)$ be one minimizer of $F(x)$, and $x_r^*=\arg\min_{x\in\cX} F_r(x)$ be the unique minimizer of $F_r(x)$. By the definition of $F_r(x)$, we have that
    \begin{equation}
        \begin{split}
            F(x_r) - F(x^*)
            & =
            F_r(x_r) - \frac{r}{2}\norm{x_r - x_0}^2 - F_r(x^*) + \frac{r}{2}\norm{x^* - x_0}^2 \\
            & \leq
            F_r(x_r) - F_r(x^*) + \frac{r}{2}\norm{x^* - x_0}^2 \\
            & \leq
            F_r(x_r) - F_r(x_r^*) + \frac{r}{2}\norm{x^* - x_0}^2 \\
            & \leq
            \epsilon_r + \frac{rD^2}{2}.
        \end{split}
        \label{eq:app-min-gap}
    \end{equation}
    $\epsilon_r$ and $r$ will be selected later. For reproducibility, we proceed as
    \begin{align*}
        \norm{x_r - x_r'}
        & \leq
        \norm{x_r - x_r^*} + \norm{x_r^* - (x_r^*)'} +
        \norm{(x_r^*)' - x_r'} \\
        & \leq
        \delta + 2\sqrt{\frac{2\epsilon_r}{r}}.
    \end{align*}
    where we use the optimality condition of $x_r^*$ by $r$-strong-convexity of $F_r(x)$:
    \begin{align*}
        \frac{r}{2}\norm{x_r - x_r^*}^2
        & \leq
        F_r(x_r) - F_r(x_r^*) \\
        & \leq
        \epsilon_r,
    \end{align*}
    the optimality condition of $(x_r^*)'$ and Lemma \ref{lm:contraction}. Setting
    \begin{equation*}
        r = \frac{\epsilon}{D^2}, \quad
        \epsilon_r = \frac{\epsilon}{2}\min\curlyBr*{
            1, \frac{\delta^2}{4D^2}
        },
    \end{equation*}
    we guarantee that $F(x_r) - F(x^*) \leq \epsilon$ and $\norm{x_r - x_r'} \leq 2\delta$. The gradient complexity of AGD to achieve $\epsilon_r$ approximation error on the function value gap of an $\ell$-smooth and $(\ell+r)$-strongly convex function is $\cO(\sqrt{(\ell+r)/r}\log(1/\epsilon_r))=\tilde\cO(\sqrt{\ell D^2/\epsilon})$, where $\tilde\cO$ hides logarithmic terms.
\end{proof}

\subsection{Inexact Deterministic Gradient Oracle} \label{sec-app:min-grad}

This section contains proof of Lemma \ref{lm:AGD} and Theorem \ref{thm:min-grad} for the guarantees in the inexact deterministic gradient case. We first study the convergence behavior of AGD \citep{nesterov1983method} for smooth and strongly-convex functions under the inexact gradient oracle. For the sake of simplicity and to enable a general analysis, we slightly abuse notation here to consider the optimization problem
\begin{equation*}
    \min_{x\in\cX} f(x),
\end{equation*}
where $f:\cX\rightarrow\bR$ satisfies the following assumption.

\begin{assumption}
    $f(x)$ is $\ell$-smooth and $\mu$-strongly convex on the closed convex domain $\cX$. 
    \label{asp:min-sc}
\end{assumption}

We consider the inexact gradient oracle defined below (referred to as $\delta$-oracle in this section).
\begin{definition} ($\delta$-oracle)
    At any querying point $x\in\cX$, the $\delta$-oracle returns a vector $g(x)\in\bR^d$ such that $\norm{g(x) - \nabla f(x)}^2\leq\delta^2$, where $\nabla f(x)$ is the true gradient of $f(x)$.
    \label{def:min-delta_grad}
\end{definition}

In previous work, \citet{devolder2013first} define a different inexact oracle that is motivated by the exact first-order oracle and study the convergence behavior of first-order algorithms including AGD.

\begin{definition} ($(\delta, \ell, \mu)$-oracle \citep{devolder2013first})
    At any querying point $x\in\cX$, the $(\delta, \ell, \mu)$-oracle returns approximate first-order information $(f_{\delta,\ell,\mu}(x), g_{\delta,\ell,\mu}(x))$ such that for any $y\in\cX$,
    \begin{equation*}
        \frac{\mu}{2}\norm{x-y}^2 \leq f(y) - (f_{\delta,\ell,\mu}(x) + g_{\delta,\ell,\mu}(x)^\top(y - x)) \leq \frac{\ell}{2}\norm{x-y}^2 + \delta.
    \end{equation*}
    \label{def:min-delta_ell_mu_grad}
\end{definition}

The lemma below characterizes that the two oracles can be transformed into each other (adapted from \citet{devolder2013first, devolder2014first}).

\begin{lemma}
    Under Assumption \ref{asp:min-sc}. A $\delta$-oracle can be transformed to a $(\delta', \ell', \mu')$-oracle with $\delta'=(1/(2\ell) + 1/\mu)\delta^2$, $\ell'=2\ell$, and $\mu'=\mu/2$. A $(\delta, \ell, \mu)$-oracle can be transformed to a $\delta'$-oracle for $\delta'$ defined in \eqref{eq:def-delta}.
    \label{lm:transform}
\end{lemma}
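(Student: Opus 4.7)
The plan splits into the two transformations claimed in the lemma, proved separately.

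For the first direction (turning a $\delta$-oracle into a $(\delta',\ell',\mu')$-oracle), I would set $g_{\delta',\ell',\mu'}(x) := g(x)$ and $f_{\delta',\ell',\mu'}(x) := f(x) - \delta^2/\mu$, and then verify Definition~\ref{def:min-delta_ell_mu_grad} through the decomposition
$$f(y) - f_{\delta',\ell',\mu'}(x) - g(x)^\top(y-x) \;=\; \bigl[f(y) - f(x) - \nabla f(x)^\top(y-x)\bigr] \,+\, \frac{\delta^2}{\mu} \,+\, \bigl[\nabla f(x) - g(x)\bigr]^\top(y-x).$$
The first bracket is squeezed between $(\mu/2)\|y-x\|^2$ and $(\ell/2)\|y-x\|^2$ by Assumption~\ref{asp:min-sc}, and the third bracket is bounded in absolute value by $\delta\|y-x\|$ via Cauchy--Schwarz and Definition~\ref{def:min-delta_grad}. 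A Young inequality with parameter $\ell$ yields $\delta\|y-x\|\leq(\ell/2)\|y-x\|^2+\delta^2/(2\ell)$ for the upper side, and with parameter $\mu/2$ yields $\delta\|y-x\|\leq(\mu/4)\|y-x\|^2+\delta^2/\mu$ for the lower side. The additive shift $+\delta^2/\mu$ baked into $f_{\delta',\ell',\mu'}$ is tuned to exactly cancel the $-\delta^2/\mu$ produced on the lower side, leaving $(\mu/4)\|y-x\|^2 = (\mu'/2)\|y-x\|^2$. Collecting constants reads off $\ell'=2\ell$, $\mu'=\mu/2$, and $\delta' = (1/(2\ell)+1/\mu)\delta^2$.

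For the second direction, I would set $g(x) := g_{\delta,\ell,\mu}(x)$ and aim to bound the error $e := g_{\delta,\ell,\mu}(x) - \nabla f(x)$. Evaluating Definition~\ref{def:min-delta_ell_mu_grad} at $y=x$ gives $0 \leq f(x) - f_{\delta,\ell,\mu}(x) \leq \delta$. Then, plugging $y = x+\tau v$ (for a free scalar $\tau > 0$ and direction $v$) into the lower half of Definition~\ref{def:min-delta_ell_mu_grad} and subtracting the $\ell$-smoothness upper bound $f(x+\tau v)\leq f(x) + \tau\nabla f(x)^\top v + (\ell/2)\tau^2\|v\|^2$ to eliminate $f(x+\tau v)$ yields
$$\tau\, e^\top v \;\leq\; \bigl[f(x) - f_{\delta,\ell,\mu}(x)\bigr] + \frac{\ell-\mu}{2}\tau^2\|v\|^2 \;\leq\; \delta + \frac{\ell-\mu}{2}\tau^2\|v\|^2.$$
A mirror argument (upper oracle bound combined with the strong-convexity lower bound for $f$) gives the same estimate with $-e$ in place of $e$, so $|e^\top v|$ is controlled for every $\tau>0$ and every $v$.

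To convert this bilinear estimate into a norm bound, I specialize $v = e$ and divide by $\tau$, obtaining $\|e\|^2 \leq \delta/\tau + ((\ell-\mu)\tau/2)\|e\|^2$. Optimizing at $\tau = \sqrt{2\delta/((\ell-\mu)\|e\|^2)}$ yields $\|e\|^2 \leq \|e\|\sqrt{2(\ell-\mu)\delta}$, hence $\|e\|^2 \leq 2(\ell-\mu)\delta$. This identifies $\delta'$ in \eqref{eq:def-delta} with $\sqrt{2(\ell-\mu)\delta}$. The main obstacle lies precisely in this second direction: whereas the $\delta$-oracle already provides a norm bound on the gradient error that Young's inequality converts directly into an additive inaccuracy, the $(\delta,\ell,\mu)$-oracle gives only a function-value sandwich, so a norm bound on the gradient error must be extracted by probing the oracle along the a priori unknown direction $e$ itself and optimizing the probe scale. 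This indirect extraction is what forces the unavoidable square-root dependence $\delta'\sim\sqrt{\delta}$ in the second transformation.
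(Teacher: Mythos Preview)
Your first direction matches the paper's construction and argument exactly, so there is nothing to add there.

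The second direction has a genuine gap. Your inequality
\[
\tau\, e^\top v \;\leq\; \delta + \frac{\ell-\mu}{2}\tau^2\|v\|^2
\]
comes from the oracle sandwich in Definition~\ref{def:min-delta_ell_mu_grad}, which is only asserted for $y\in\cX$. Hence the probe $y = x+\tau v$ is admissible only while it stays in the domain, i.e.\ only for $\tau\|v\|$ not exceeding the distance $r(x)$ from $x$ to the boundary of $\cX$ along the direction $v/\|v\|$. Your unconstrained optimization over $\tau$ silently assumes that the optimal scale $\tau^\star=\sqrt{2\delta/((\ell-\mu)\|e\|^2)}$ is feasible, which need not hold on a bounded $\cX$. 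This is precisely why \eqref{eq:def-delta} in the paper is a two-branch definition: the bound $\sqrt{2\delta(\ell-\mu)}$ you derived is only the first branch, valid when $\sqrt{2\delta/(\ell-\mu)}\le r(x)$; otherwise the best you can do is plug in the largest feasible step $\tau=r(x)$ (with $v=e/\|e\|$), which gives the second branch $\|e\|\le((\ell-\mu)/2)r(x)+\delta/r(x)$. Your identification of $\delta'$ with a single constant $\sqrt{2(\ell-\mu)\delta}$ is therefore incomplete; the fix is exactly the case split the paper makes.
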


\begin{proof}
    Given a $\delta$-oracle that returns $g(x)$ at any point $x\in\cX$, we construct a $(\delta', \ell', \mu')$-oracle as
    \begin{equation*}
        f_{\delta', \ell', \mu'}(x) = f(x) - \frac{\delta^2}{\mu}, \quad g_{\delta', \ell', \mu'}(x) = g(x).
    \end{equation*}    
    By $\ell$-smoothness of $f(x)$ and fact $(iii)$ in Lemma \ref{lm:fact}, we have that
    \begin{equation}
        \begin{split}
            f(y)
            & \leq
            f(x) + \nabla f(x)^\top (y - x) + \frac{\ell}{2}\norm{x - y}^2 \\
            & =
            f(x) + g(x)^\top (y - x) + (\nabla f(x) - g(x))^\top (y - x) + \frac{\ell}{2}\norm{x - y}^2 \\
            & \leq
            f(x) + g(x)^\top (y - x) + \ell\norm{x - y}^2  + \frac{\delta^2}{2\ell}. \\
        \end{split}
        \label{eq:app-delta-smooth}
    \end{equation}    
    Similarly by $\mu$-strong convexity of $f(x)$ and fact $(iii)$ in Lemma \ref{lm:fact}, we have that
    \begin{equation}
        \begin{split}
            f(y)
            & \geq
            f(x) + \nabla f(x)^\top (y - x) + \frac{\mu}{2}\norm{x - y}^2 \\
            & =
            f(x) + g(x)^\top (y - x) + (\nabla f(x) - g(x))^\top (y - x) + \frac{\mu}{2}\norm{x - y}^2 \\
            & \geq
            f(x) + g(x)^\top (y - x) + \frac{\mu}{4}\norm{x - y}^2  - \frac{\delta^2}{\mu}. \\
        \end{split}
        \label{eq:app-delta-mu}
    \end{equation}
    Combined the above two equations together, we obtain that
    \begin{equation*}
        \frac{\mu}{4}\norm{x-y}^2 \leq f(y) - \roundBr*{f(x) - \frac{\delta^2}{\mu} + g(x)^\top(y-x)} \leq \ell\norm{x-y}^2 + \roundBr*{\frac{1}{2\ell} + \frac{1}{\mu}}\delta^2.
    \end{equation*}
    This concludes the proof of the first part. For the second part, given a $(\delta, \ell, \mu)$-oracle in Definition \ref{def:min-delta_ell_mu_grad}, we construct a $\delta'$-oracle as follows: $ g(x) = g_{\delta, \ell, \mu}(x)$. Taking $y=x$ in Definition \ref{def:min-delta_ell_mu_grad}, we obtain $\forall\, x$,
    \begin{equation*}
        f_{\delta,\ell,\mu}(x) \leq f(x) \leq f_{\delta,\ell,\mu}(x) + \delta.
    \end{equation*}
    Therefore, by strong-convexity of $f(x)$, we have that $\forall\, x,y$,
    \begin{align*}
        f(y)
        & \geq
        f(x) + \nabla f(x)^\top (y - x) + \frac{\mu}{2}\norm{x - y}^2 \\
        & \geq
        f_{\delta,\ell,\mu}(x) + \nabla f(x)^\top (y - x) + \frac{\mu}{2}\norm{x - y}^2.
    \end{align*}
    Combined with the second part of Definition \ref{def:min-delta_ell_mu_grad}, we obtain that  $\forall\, x,y$,
    \begin{equation*}
        (\nabla f(x) - g_{\delta, \ell, \mu}(x))^\top(y - x) \leq \frac{\ell - \mu}{2}\norm{x - y}^2 + \delta.
    \end{equation*}
    Then by a similar proof as for the convex case in \citet{devolder2014first}. Let $\Delta(x)=\nabla f(x) - g_{\delta, \ell, \mu}(x)$ and $y=x + \min\{\sqrt{2\delta/(\ell-\mu)}, r(x)\} \Delta(x)/\norm{\Delta(x)}$ for $r(x)=\max\{r\in\bR \,|\, (x + r\Delta(x)/\norm{\Delta(x)}) \in\cX\}$. We have that
    \begin{equation}
        \norm{\nabla f(x) - g_{\delta, \ell, \mu}(x)} \leq
        \begin{cases}
            \sqrt{2\delta(\ell - \mu)}, \qquad\qquad  & \text{ when } \sqrt{\frac{2\delta}{\ell-\mu}} \leq r(x), \\
            \displaystyle \frac{\ell-\mu}{2}r(x) + \frac{\delta}{r(x)}, & \text{ otherwise}.
        \end{cases}
        \label{eq:def-delta}
    \end{equation}
    Since we use $g(x) = g_{\delta, \ell, \mu}(x)$, the proof is complete.
\end{proof}

\citet{devolder2013first} prove that AGD equipped with $(\delta, \ell, \mu)$-oracle in Definition \ref{def:min-delta_ell_mu_grad} converges to a $\cO(\delta\sqrt{\ell/\mu})$-neighborhood of the optimal solution with accelerated rate $T=\tilde\cO(\sqrt{\ell/\mu})$:
\begin{equation*}
    f(x_T) - f^* \leq \cO\roundBr*{
        \exp\roundBr*{-T\sqrt{\frac{\mu}{\ell}}} + \delta\sqrt{\frac{\ell}{\mu}}
    },
\end{equation*}
where $x_T$ is the output of $T$-step AGD and $f^*$ is the optimal value. They further establish a lower-bound showing tightness of the $\cO(\delta\sqrt{\ell/\mu})$ error for any first-order methods with accelerated rate.

Here, we are interested in the performance of AGD under the $\delta$-oracle in Definition \ref{def:min-delta_grad}. Motivated by the transformation in Lemma \ref{lm:transform}, we choose the parameters in AGD as follows:
\begin{equation}
    \begin{split}
        & x_{t+1} = \Pi_\cX\roundBr*{y_t - \frac{1}{2\ell}\, g(y_t)}, \\
        & y_{t+1} = x_{t+1} + \frac{2-\sqrt{\mu/\ell}}{2+\sqrt{\mu/\ell}}\,(x_{t+1} - x_t).
    \end{split}
    \label{eq:app-AGD}
\end{equation}
The results can be implied by \citet{devolder2013first} together with Lemma \ref{lm:transform}. We provide detailed proof in the following for completeness of the paper.

\begin{lemma}
    Under Assumption \ref{asp:min-sc}. Let $x^*$ be the unique minimizer of $f(x)$ and $\kappa=\ell/\mu$ be the condition number. Given an inexact $\delta$-oracle in Definition \ref{def:min-delta_grad}. Starting from $y_0=x_0$, AGD with updates \eqref{eq:app-AGD} for $t=0,1,\cdots,T-1$ converges with
    \begin{equation*}
        f(x_T) - f(x^*) \leq \exp\roundBr*{-\frac{T}{2\sqrt{\kappa}}}\roundBr*{f(x_0) - f(x^*) + \frac{\mu}{4}\norm{x_0 - x^*}^2} + \sqrt{\kappa}\roundBr*{\frac{1}{\ell} + \frac{2}{\mu}}\delta^2.
    \end{equation*}
    \label{lm:app-AGD}
\end{lemma}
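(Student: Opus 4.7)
The plan is to reduce the problem to the known convergence theory for AGD under the $(\delta, \ell, \mu)$-oracle of Devolder et al., combining with the oracle transformation already established in Lemma~\ref{lm:transform}. Concretely, I would first apply the first part of Lemma~\ref{lm:transform} to reinterpret the given $\delta$-oracle as a $(\delta', \ell', \mu')$-oracle with
\[
\delta' = \roundBr*{\frac{1}{2\ell} + \frac{1}{\mu}}\delta^2, \quad \ell' = 2\ell, \quad \mu' = \frac{\mu}{2}.
\]
Then I would verify that the updates in~\eqref{eq:app-AGD} are precisely the standard strongly-convex AGD iterations tied to $(\ell', \mu')$: the step size $1/(2\ell) = 1/\ell'$ and the momentum coefficient satisfies
\[
\frac{\sqrt{\ell'/\mu'} - 1}{\sqrt{\ell'/\mu'} + 1} = \frac{2\sqrt{\kappa} - 1}{2\sqrt{\kappa} + 1} = \frac{2 - \sqrt{\mu/\ell}}{2 + \sqrt{\mu/\ell}},
\]
which matches~\eqref{eq:app-AGD} exactly. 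Note also that because $y_0 = x_0$, this is consistent with the standard initialization used in the estimating-sequences analysis.

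Next, I would carry out the estimating-sequences analysis of AGD under a $(\delta, L, \mu)$-oracle, as in Devolder, Glineur, and Nesterov. The workflow is to introduce the Lyapunov-type sequence
\[
\Phi_t(x) = f(x_t) + \frac{\mu'}{2}\norm{x - v_t}^2 + \text{error terms},
\]
where the canonical estimate sequence $\phi_t$ satisfies $\phi_{t+1}(x) \leq (1 - \sqrt{\mu'/\ell'})\phi_t(x) + \sqrt{\mu'/\ell'}\,\underline{\ell}_\delta(x)$, with $\underline{\ell}_\delta$ being the linear underestimator supplied by the $(\delta, \ell, \mu)$-oracle. Iterating the recursion and invoking both the upper inequality
\[
f(x_{t+1}) \leq f(x_t) + g(y_t)^\top(x_{t+1} - y_t) + \ell'\norm{x_{t+1} - y_t}^2 + \delta',
\]
from~\eqref{eq:app-delta-smooth} and the corresponding lower inequality~\eqref{eq:app-delta-mu} yields
\[
f(x_T) - f(x^*) \leq \roundBr*{1 - \sqrt{\mu'/\ell'}}^T\roundBr*{f(x_0) - f(x^*) + \frac{\mu'}{2}\norm{x_0 - x^*}^2} + \frac{\delta'}{\sqrt{\mu'/\ell'}}.
\]
Plugging in $\sqrt{\mu'/\ell'} = 1/(2\sqrt{\kappa})$, $\mu'/2 = \mu/4$, and $\delta' = (1/(2\ell) + 1/\mu)\delta^2$, and using $(1-x)^T \leq e^{-Tx}$, then delivers the stated bound with the exponent $-T/(2\sqrt{\kappa})$ and the residual $\sqrt{\kappa}(1/\ell + 2/\mu)\delta^2$.

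The main obstacle is the bookkeeping in the estimating-sequence recursion: every call of the approximate gradient introduces an additive $\delta'$ that must be telescoped, and one must be careful that the inexact underestimators used to construct $\phi_t$ remain valid quadratic lower bounds for $f$ (hence the degradation $\mu' = \mu/2$). The neat algebraic identity used to obtain a non-exploding error accumulation is $\sum_{t=0}^{T-1}(1-\sqrt{\mu'/\ell'})^{T-1-t}\sqrt{\mu'/\ell'} \leq 1$, so the residual stays bounded by $\delta'/\sqrt{\mu'/\ell'}$ independently of $T$; verifying this, together with matching the constants in Lemma~\ref{lm:transform}, is where the delicate step lies. Once this is in place, the substitution of the transformed parameters is purely mechanical.
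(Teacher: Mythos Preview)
Your proposal is correct and is precisely the route the paper itself flags just before the proof: ``The results can be implied by \citet{devolder2013first} together with Lemma~\ref{lm:transform}.'' Your parameter bookkeeping is right: $\sqrt{\mu'/\ell'}=1/(2\sqrt{\kappa})$, $\mu'/2=\mu/4$, and $\delta'/\sqrt{\mu'/\ell'}=\sqrt{\kappa}(1/\ell+2/\mu)\delta^2$, so the substitution recovers the statement exactly.

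The paper, however, does not invoke Devolder et al.\ as a black box but writes out a self-contained Lyapunov argument. It starts from the same two inequalities you cite, \eqref{eq:app-delta-smooth} and \eqref{eq:app-delta-mu}, combines them (via the projection optimality condition) into a bound on $f(x_{t+1})-f(x)$ valid for all $x$, and then takes a $(1-\theta,\theta)$ convex combination at $x=x_t$ and $x=x^*$ with $\theta=1/(2\sqrt{\kappa})$. Defining $u_t$ by $\theta u_t = x_t-(1-\theta)x_{t-1}$ and exploiting the AGD momentum identity $y_t=(x_t+\theta u_t)/(1+\theta)$, it arrives at the one-step contraction
\[
\Delta_{t+1}+\tfrac{\mu}{4}\norm{u_{t+1}-x^*}^2 \le (1-\theta)\bigl(\Delta_t+\tfrac{\mu}{4}\norm{u_t-x^*}^2\bigr)+\roundBr*{\tfrac{1}{2\ell}+\tfrac{1}{\mu}}\delta^2,
\]
which unrolls to the claim. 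This is the estimating-sequences argument in disguise (your $v_t$ is their $u_t$), so the two proofs are equivalent at the technical level; the difference is only that the paper keeps everything explicit and avoids the detour through Definition~\ref{def:min-delta_ell_mu_grad}, while your reduction is cleaner to state but relies on the external Devolder--Glineur--Nesterov bound.
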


\begin{proof}
    By \eqref{eq:app-delta-smooth} in the proof of Lemma \ref{lm:transform}, we have that
    \begin{equation*}
        f(x_{t+1}) \leq
        f(y_t) + g(y_t)^\top (x_{t+1} - y_t) +
        \ell\norm{x_{t+1} - y_t}^2 +
        \frac{\delta^2}{2\ell}.
    \end{equation*}
    Similarly by \eqref{eq:app-delta-mu}, we know for any $x\in\cX$,
    \begin{equation*}
        f(x) \geq
        f(y_t) + g(y_t)^\top (x - y_t) +
        \frac{\mu}{4}\norm{x - y_t}^2 -
        \frac{\delta^2}{\mu}.
    \end{equation*}
    Combing the above two results, for any $x\in\cX$, we have
    \begin{align*}
        f(x_{t+1}) - f(x)
        & =
        f(x_{t+1}) - f(y_t) + f(y_t) - f(x) \\
        & \leq
        g(y_t)^\top (x_{t+1} - x) +
        \ell\norm{x_{t+1} - y_t}^2 -
        \frac{\mu}{4}\norm{x - y_t}^2 +
        \roundBr*{\frac{1}{2\ell} + \frac{1}{\mu}}\delta^2 \\
        & \leq
        -2\ell(x_{t+1} - y_t)^\top (x_{t+1} - x) +
        \ell\norm{x_{t+1} - y_t}^2 -
        \frac{\mu}{4}\norm{x - y_t}^2 +
        \roundBr*{\frac{1}{2\ell} + \frac{1}{\mu}}\delta^2 \\
        & =
        -\ell\norm{x_{t+1} - y_t}^2 +
        2\ell(x_{t+1} - y_t)^\top (x - y_t) -
        \frac{\mu}{4}\norm{x - y_t}^2 +
        \roundBr*{\frac{1}{2\ell} + \frac{1}{\mu}}\delta^2,
    \end{align*}
    where in the last inequality we use the optimality condition of the projection step such that $\forall\,x\in\cX$,
    \begin{equation*}
        \roundBr*{x_{t+1} - y_t + \frac{1}{2\ell}\,g(y_t)}^\top (x - x_{t+1}) \geq 0.
    \end{equation*}
    Let $\theta:=1/(2\sqrt{\kappa})=\sqrt{\mu/(4\ell)}$. Setting $x=x_t$ and $x=x^*$ in the above equation, we get
    \begin{align*}
        & (1 - \theta) (f(x_{t+1}) - f(x_t)) \leq
        -\ell(1 - \theta)\norm{x_{t+1} - y_t}^2 +
        2\ell(1 - \theta)(x_{t+1} - y_t)^\top (x_t - y_t) \\
        & \qquad\qquad\qquad\qquad\qquad\qquad\qquad -
        \frac{\mu}{4}(1 - \theta)\norm{x_t - y_t}^2  +
        (1 - \theta)\roundBr*{\frac{1}{2\ell} + \frac{1}{\mu}}\delta^2, \\
        & \qquad\;\,
        \theta (f(x_{t+1}) - f(x^*)) \leq
        -\ell\theta\norm{x_{t+1} - y_t}^2 +
        2\ell\theta(x_{t+1} - y_t)^\top (x^* - y_t) \\
        & \qquad\qquad\qquad\qquad\qquad\qquad\qquad -
        \frac{\mu}{4}\theta\norm{x^* - y_t}^2 +
        \theta\roundBr*{\frac{1}{2\ell} + \frac{1}{\mu}}\delta^2.
    \end{align*}
    Let $\Delta_{t}:=f(x_t) - f(x^*) \geq 0$. Summing the above two up, by fact $(i)$ in Lemma \ref{lm:fact}, we obtain
    \begin{align*}
        \Delta_{t+1} - (1 - \theta)\Delta_t
        & \leq
        -\ell\norm{x_{t+1} - y_t}^2 + 2\ell(x_{t+1} - y_t)^\top
        ((1 - \theta)x_t + \theta x^* - y_t) -
        \frac{\mu}{4}\theta\norm{x^* - y_t}^2 \\
        & \qquad -
        \frac{\mu}{4}(1 - \theta)\norm{x_t - y_t}^2  +
        \roundBr*{\frac{1}{2\ell} + \frac{1}{\mu}}\delta^2,  \\
        & =
        \ell\norm{y_t - (1-\theta)x_t - \theta x^*}^2 - \ell\norm{x_{t+1} - (1-\theta)x_t - \theta x^*}^2 -
        \frac{\mu}{4}\theta\norm{x^* - y_t}^2 \\
        & \qquad -
        \frac{\mu}{4}(1 - \theta)\norm{x_t - y_t}^2  +
        \roundBr*{\frac{1}{2\ell} + \frac{1}{\mu}}\delta^2. \\
    \end{align*}
    Let $\theta u_t := x_t - (1-\theta)x_{t-1}$ for $t\geq 1$. From the update \eqref{eq:app-AGD} of AGD, we observe
    \begin{align*}
        (1 + \theta)y_t
        & = (1 + \theta) x_t + (1 - \theta)(x_t - x_{t-1}) \\
        & = 2x_t - (1-\theta)x_{t-1} \\
        & = x_t + \theta u_t.
    \end{align*}
    Rearranging terms, we can get $x_t = (1+\theta)y_t - \theta u_t$ and thus
    \begin{align*}
        y_t - (1 - \theta)x_t
        & =
        y_t - (1 - \theta)((1 + \theta)y_t - \theta u_t) \\
        & =
        y_t - ((1-\theta^2)y_t - (1-\theta)\theta u_t) \\
        & =
        \theta(\theta y_t + (1-\theta)u_t).
    \end{align*}
    It is easy to verify that the above also holds when $u_0=x_0=y_0$. Since $\ell\theta^2=\mu/4$, we have that
    \begin{align*}
        \Delta_{t+1} - (1 - \theta)\Delta_t
        & \leq
        \frac{\mu}{4}\norm{\theta y_t + (1-\theta)u_t - x^*}^2 -
        \frac{\mu}{4}\norm{u_{t+1} - x^*}^2 -
        \frac{\mu}{4}\theta\norm{x^* - y_t}^2 +
        \roundBr*{\frac{1}{2\ell} + \frac{1}{\mu}}\delta^2 \\
        & =
        \frac{\mu}{4}\norm{(1-\theta)(u_t-x^*)+\theta(y_t-x^*)}^2 -
        \frac{\mu}{4}\norm{u_{t+1} - x^*}^2 -
        \frac{\mu}{4}\theta\norm{x^* - y_t}^2 +
        \roundBr*{\frac{1}{2\ell} + \frac{1}{\mu}}\delta^2 \\
        & \leq
        \frac{\mu}{4}(1-\theta)\norm{u_t - x^*}^2 -
        \frac{\mu}{4}\norm{u_{t+1} - x^*}^2 +
        \roundBr*{\frac{1}{2\ell} + \frac{1}{\mu}}\delta^2,
    \end{align*}
    where we use fact $(iv)$ in Lemma \ref{lm:fact}. Rearranging terms, we then obtain
    \begin{equation*}
        \Delta_{t+1} + \frac{\mu}{4}\norm{u_{t+1} - x^*}^2 \leq
        (1-\theta)\roundBr*{\Delta_t+\frac{\mu}{4}\norm{u_t - x^*}^2} +
        \roundBr*{\frac{1}{2\ell} + \frac{1}{\mu}}\delta^2.
    \end{equation*}
    Unrolling the recursion, we have that
    \begin{align*}
        f(x_T) - f(x^*)
        & \leq
        \Delta_T + \frac{\mu}{4}\norm{u_T - x^*}^2 \\
        & \leq
        (1-\theta)^T\roundBr*{\Delta_0 + \frac{\mu}{4}\norm{u_0 - x^*}^2} +
        \roundBr*{(1-\theta)^{T-1} + \cdots + (1-\theta) + 1}
        \roundBr*{\frac{1}{2\ell} + \frac{1}{\mu}}\delta^2 \\
        & \leq
        \exp(-\theta T)\roundBr*{
        f(x_0) - f(x^*) + \frac{\mu}{4}\norm{u_0 - x^*}^2} + 
        \frac{1}{\theta}\roundBr*{\frac{1}{2\ell} + \frac{1}{\mu}}\delta^2 \\
        & =
        \exp\roundBr*{-\frac{T}{2\sqrt{\kappa}}}\roundBr*{
        f(x_0) - f(x^*) + \frac{\mu}{4}\norm{x_0 - x^*}^2} + 
        \sqrt{\kappa}\roundBr*{\frac{1}{\ell} + \frac{2}{\mu}}\delta^2,
    \end{align*}
    where we use the fact that $1+\eta\leq e^\eta, \forall \eta\in\bR$.
\end{proof}

Lemma \ref{lm:AGD} immediately follows from Lemma \ref{lm:app-AGD}. With the above results at hand, we are ready to show proof of Theorem \ref{thm:min-grad} below.

\begin{proof}[Proof of Theorem \ref{thm:min-grad}]
    For the convergence guarantee, similarly to the perturbed initialization case in \eqref{eq:app-min-gap}, for $x^*\in\arg\min_{x\in\cX} F(x)$ and $x_r^*=\arg\min_{x\in\cX} F_r(x)$, we have that
    \begin{equation*}
        F(x_r) - F(x^*) \leq
        F_r(x_r) - F_r(x_r^*) + \frac{rD^2}{2}.
    \end{equation*}
    For the reproducibility guarantee, using $r$-strong-convexity of $F_r(x)$, we can obtain that
    \begin{align*}
        \norm{x_r - x_r'}
        & \leq
        \norm{x_r - x_r^*} + \norm{x_r^* - x_r'} \\
        & \leq
        \sqrt{\frac{2(F_r(x_r) - F_r(x_r^*))}{r}} +
        \sqrt{\frac{2(F_r(x_r') - F_r(x_r^*))}{r}}.
    \end{align*}
    Applying Lemma \ref{lm:AGD}, if \ref{eq:AGD} is used as the base algorithm $\cA$ and $x_r$ is the output given initialization $y_0=x_0$ after $T$ iterations, since $r=\epsilon/D^2\leq\ell$, we know that
    \begin{align*}
        F_r(x_r) - F_r(x_r^*)
        & \leq
        \exp\roundBr*{-\frac{T}{2}\sqrt{\frac{r}{\ell + r}}}
        \roundBr*{F_r(x_0) - F_r(x_r^*) + \frac{r}{4}\norm{x_0 - x_r^*}^2} + 
        \sqrt{\frac{\ell+r}{r}}\roundBr*{\frac{1}{\ell+r} + \frac{2}{r}}\delta^2 \\
        & \leq
        \exp\roundBr*{-\frac{T}{2}\sqrt{\frac{r}{2\ell}}}
        \roundBr*{F_r(x_0) - F_r(x_r^*) + \frac{r}{4}\norm{x_0 - x_r^*}^2} + 5\delta^2\sqrt{\frac{\ell}{2r^3}}.
    \end{align*}
    When setting $T=\cO(\sqrt{\ell/r}\log(r^{3/2}/\delta^2))$, this means the algorithm converges to $F_r(x_r) - F_r(x_r^*) \leq 6\delta^2\sqrt{\ell/(2r^3)}$ and $\norm{x_r - x_r^*}^2\leq12\delta^2\sqrt{\ell/(2r^5)}$. Therefore, since $r=\epsilon/D^2$, we have that
    \begin{equation*}
        F(x_r) - F(x^*) \leq \cO\roundBr*{\frac{\delta^2}{\epsilon^{3/2}} + \epsilon},
    \end{equation*}
    and the reproducibility is $\norm{x_r - x_r'}^2 \leq \cO(\delta^2/\epsilon^{5/2})$.
\end{proof}

The results suggest that to achieve $\epsilon$-approximation error on the function value gap, we need to set $\delta\leq\cO(\epsilon^{5/4})$, which is a smaller regime compared to $\delta\leq\cO(\epsilon)$ in the previous work \citep{ahn2022reproducibility} when $\epsilon\leq1$. Furthermore, optimal reproducibility $\cO(\delta^2/\epsilon^2)$ is not attained. We observe from the proof that the additional $\cO(\sqrt{\kappa})$-factor in the last term of the error bound in Lemma \ref{lm:app-AGD} leads to this degradation. Since we set $r=\cO(\epsilon)$ to balance the convergence rate and approximation error introduced through regularization, this factor can be $\cO(\sqrt{1/\epsilon})$. Based on the lower-bound in \citet{devolder2013first} for $(\delta,\ell,\mu)$-oracle such that this $\cO(\sqrt{\kappa})$-factor is unavoidable for an accelerated convergence rate and the transformation between the two inexact oracles in Lemma \ref{lm:transform}, we thus make the conjecture here that the above results cannot be further improved. Algorithms that achieve optimal convergence and reproducibility under this setting require better designs and we leave it for future work.
\section{Preliminary Results in the Minimax Case} \label{sec-app:minimax-pre}

In this section, we provide proof of some preliminary results in the minimax setting. We start with a proof of the lower-bounds in Lemma \ref{lm:lower-bound}. Sub-optimal guarantees of gradient descent ascent (GDA) in the deterministic case, as well as optimal guarantees of stochastic gradient descent ascent (SGDA), are provided in Section \ref{sec-app:gda}. Sub-optimal results of Extragradient (EG) are proved in Section \ref{sec-app:eg}.

Before that, we introduce some notations and helpful lemmas that will be used in the analysis. We let $z=(x,y)$ and $\tilde\nabla F(z)=(\nabla_x F(x,y), -\nabla_y F(x,y))$ for simplicity of the notation in the remaining of the paper. The following results will be frequently used.

\begin{lemma}
    Under Assumption \ref{asp:obj}, the operator $\tilde\nabla F$ is monotone and $\ell$-Lipschitz. That is, $\forall z_1, z_2\in\cX\times\cY$, $\norm{\tilde\nabla F(z_1) - \tilde\nabla F(z_2)}\leq\ell\norm{z_1 - z_2}$ and $(\tilde\nabla F(z_1) - \tilde\nabla F(z_2))^\top(z_1 - z_2) \geq 0$. Moreover, $\forall z\in\cX\times\cY$, $\norm{\tilde\nabla F(z)}\leq L$ where we define $L:=\min\norm{\tilde\nabla F(z^*)} + \sqrt{2}\ell D$ for minimum taking w.r.t. any saddle point $z^*=(x^*, y^*)$ of $F(x,y)$.
    \label{lm:operator}
\end{lemma}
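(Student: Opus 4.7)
\medskip

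\noindent\textbf{Proof proposal.} The statement has three parts, and I will handle them in order of increasing delicacy: Lipschitz continuity, boundedness, and finally monotonicity.

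For the $\ell$-Lipschitz property, I would simply note that for $z_i=(x_i,y_i)$,
\begin{equation*}
\|\tilde\nabla F(z_1)-\tilde\nabla F(z_2)\|^2 = \|\nabla_x F(z_1)-\nabla_x F(z_2)\|^2 + \|\nabla_y F(z_1)-\nabla_y F(z_2)\|^2 = \|\nabla F(z_1)-\nabla F(z_2)\|^2,
\end{equation*}
because flipping the sign on the $y$-block does not change the norm. Then $\ell$-smoothness of $F$ on $\cX\times\cY$ from Assumption~\ref{asp:obj} finishes it.

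For the bound $\|\tilde\nabla F(z)\|\le L$, pick any saddle point $z^*=(x^*,y^*)$ realizing $\min\|\tilde\nabla F(z^*)\|$. By the triangle inequality and the Lipschitz property just proved,
\begin{equation*}
\|\tilde\nabla F(z)\| \le \|\tilde\nabla F(z^*)\| + \ell\|z-z^*\| \le \|\tilde\nabla F(z^*)\| + \ell\sqrt{\|x-x^*\|^2+\|y-y^*\|^2} \le \|\tilde\nabla F(z^*)\| + \sqrt{2}\,\ell D,
\end{equation*}
using the assumed diameter bounds on $\cX$ and $\cY$ in the last step.

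The monotonicity is the main step. Expanding,
\begin{equation*}
(\tilde\nabla F(z_1)-\tilde\nabla F(z_2))^\top(z_1-z_2) = (\nabla_x F(z_1)-\nabla_x F(z_2))^\top(x_1-x_2) - (\nabla_y F(z_1)-\nabla_y F(z_2))^\top(y_1-y_2).
\end{equation*}
The plan is to bound each of the two inner products below by a four-point combination of function values and show that the two combinations cancel exactly. For the $x$-part, I apply convexity of $F(\cdot,y_1)$ at $x_1$ and $F(\cdot,y_2)$ at $x_2$, yielding
\begin{equation*}
(\nabla_x F(z_1)-\nabla_x F(z_2))^\top(x_1-x_2) \ge F(x_1,y_1)-F(x_2,y_1)+F(x_2,y_2)-F(x_1,y_2).
\end{equation*}
For the $y$-part, I use concavity of $F(x_1,\cdot)$ at $y_1$ and $F(x_2,\cdot)$ at $y_2$; after tracking signs carefully (this is the trickiest piece) I obtain
\begin{equation*}
-(\nabla_y F(z_1)-\nabla_y F(z_2))^\top(y_1-y_2) \ge F(x_1,y_2)-F(x_1,y_1)+F(x_2,y_1)-F(x_2,y_2).
\end{equation*}
Summing the two bounds, the right-hand sides cancel telescopically, giving the desired $\ge 0$. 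The only real obstacle is the bookkeeping of signs on the concavity side, since the operator $\tilde\nabla F$ carries a minus on the $y$-block; once that is aligned with the concavity inequality $F(x,y')\le F(x,y)+\nabla_y F(x,y)^\top(y'-y)$, the cancellation is immediate.
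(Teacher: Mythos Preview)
Your proof is correct and matches the paper's approach for the Lipschitz and boundedness claims essentially verbatim. For monotonicity, the paper simply cites \citet{rockafellar1970monotone} as a well-known fact, whereas you supply the standard four-point convexity/concavity cancellation argument directly; this is a harmless elaboration rather than a different route.
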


\begin{proof}
    Lipschitzness of $\tilde\nabla F$ directly follows from $\ell$-smoothness of $F(x,y)$. The fact that $\tilde\nabla F$ is monotone when $F(x,y)$ is convex-concave is well-known in the literature (e.g., see Theorem 1 in \citet{rockafellar1970monotone}). For the last statement, taking any saddle point $z^*$, we have that $\forall z\in\cX\times\cY$,
    \begin{align*}
        \norm{\tilde\nabla F(z)}
        & \leq
        \norm{\tilde\nabla F(z^*)} + \norm{\tilde\nabla F(z) - \tilde\nabla F(z^*)} \\
        & \leq
        \norm{\tilde\nabla F(z^*)} + \ell\norm{z - z^*}.
    \end{align*}
    The proof is complete since the domain $\cX$ and $\cY$ have a diameter of $D$.
\end{proof}

\begin{lemma}
    Under Assumption \ref{asp:obj}. For some integer $T\geq 1$, let $z_t=(x_t, y_t)$ for $t=0,1,\cdots, T-1$ and $\bar z_T=(\bar x_T, \bar y_T)=(1/T)\sum_{t=0}^{T-1}(x_t,y_t)$. If $\,\forall z\in\cX\times\cY$, $(1/T)\sum_{t=0}^{T-1} \tilde\nabla F(z_t)^\top (z_t - z)\leq\epsilon$, then it satisfies that $\max_{y\in\cY} F(\bar x_T, y) - \min_{x\in\cX} F(x, \bar y_T)\leq\epsilon$.
    \label{lm:duality}
\end{lemma}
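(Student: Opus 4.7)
The plan is to combine three standard ingredients: the convex-concave first-order inequality for $F$ at each iterate, telescoping via the averaged iterate, and Jensen's inequality (convexity in $x$, concavity in $y$) applied to the average.

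First I would fix an arbitrary $z=(x,y)\in\cX\times\cY$ and, for each $t\in\{0,\dots,T-1\}$, use the convexity of $F(\cdot,y_t)$ and concavity of $F(x_t,\cdot)$ to write
\begin{align*}
    F(x_t, y_t) - F(x, y_t) &\leq \nabla_x F(x_t, y_t)^\top (x_t - x), \\
    F(x_t, y) - F(x_t, y_t) &\leq \nabla_y F(x_t, y_t)^\top (y - y_t).
\end{align*}
Adding these and recalling $\tilde\nabla F(z_t) = (\nabla_x F(x_t,y_t), -\nabla_y F(x_t, y_t))$ yields
\begin{equation*}
    F(x_t, y) - F(x, y_t) \leq \tilde\nabla F(z_t)^\top (z_t - z).
\end{equation*}

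Next I would average this bound over $t=0,\dots,T-1$ and invoke the hypothesis, obtaining
\begin{equation*}
    \frac{1}{T}\sum_{t=0}^{T-1}\bigl[F(x_t, y) - F(x, y_t)\bigr] \leq \frac{1}{T}\sum_{t=0}^{T-1}\tilde\nabla F(z_t)^\top(z_t - z) \leq \epsilon.
\end{equation*}
Applying Jensen's inequality separately to the two sums (convexity of $F(\cdot,y)$ gives $F(\bar x_T, y)\leq (1/T)\sum_t F(x_t, y)$, and concavity of $F(x,\cdot)$ gives $F(x,\bar y_T)\geq (1/T)\sum_t F(x, y_t)$) produces
\begin{equation*}
    F(\bar x_T, y) - F(x, \bar y_T) \leq \epsilon.
\end{equation*}
Since this holds for every $(x,y)\in\cX\times\cY$, I would conclude by taking the supremum over $y$ and infimum over $x$ to recover the claimed duality gap bound.

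There is no real obstacle here beyond bookkeeping; the only subtlety is ensuring the gradient used in the convex-concave inequalities is the one evaluated at $(x_t,y_t)$ (so that it matches $\tilde\nabla F(z_t)$ in the hypothesis), rather than at mixed points like $(x_t,y)$ or $(x,y_t)$, which is exactly what the chosen splitting $F(x_t,y)-F(x,y_t) = [F(x_t,y_t)-F(x,y_t)] + [F(x_t,y)-F(x_t,y_t)]$ arranges.
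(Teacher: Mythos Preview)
Your proposal is correct and follows essentially the same argument as the paper: both decompose $F(x_t,y)-F(x,y_t)$ through $F(x_t,y_t)$, apply the first-order convex-concave inequalities at $(x_t,y_t)$, average, invoke Jensen, and then optimize over $(x,y)$. The only cosmetic difference is that the paper picks $x=\arg\min_u F(u,\bar y_T)$ and $y=\arg\max_v F(\bar x_T,v)$ explicitly (using compactness of $\cX\times\cY$), whereas you take sup/inf; these are equivalent here.
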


\begin{proof}
    Since $F(x,y)$ is convex-concave, we get that $\forall z=(x,y)\in\cX\times\cY$,
    \begin{align*}
        F(x_t, y) - F(x, y_t)
        & =
        F(x_t, y) - F(x_t, y_t) + F(x_t, y_t) - F(x, y_t) \\
        & \leq
        \nabla_x F(x_t, y_t)^\top (x_t - x) -
        \nabla_y F(x_t, y_t)^\top (y_t - y) \\
        & =
        \tilde\nabla F(z_t)^\top (z_t - z).
    \end{align*}
    Summing up from $t=0$ to $T-1$ and dividing both sides by $T$, by Jensen's inequality, we have that
    \begin{align*}
        F(\bar x_T, y) - F(x, \bar y_T)
        & \leq
        \frac{1}{T}\sum_{t=0}^{T-1} \tilde\nabla F(z_t)^\top (z_t - z) \\
        & \leq
        \epsilon.
    \end{align*}
    Taking $y=\arg\max_{v\in\cY} F(\bar x_T, v)$ and $x=\arg\min_{u\in\cX} F(u,\bar y_T)$, we conclude the proof.
\end{proof}

\subsection{Lower-bounds for Reproducibility} \label{sec-app:minimax-lower}

The lower-bounds follow from the minimization setting \citep{ahn2022reproducibility}.

\begin{lemma}
    For smooth convex-concave minimax optimization under Assumption \ref{asp:obj}, the reproducibility, i.e., $(\epsilon, \delta)$-deviation, of any algorithm $\cA$ is at least $(i)$ $\Omega(\delta^2)$ for the inexact initialization oracle; $(ii)$ $\Omega(\delta^2/\epsilon^2)$ for the deterministic inexact gradient oracle; $(iii)$ $\Omega(\delta^2/(T\epsilon^2))$ for the stochastic gradient oracle, where $T$ is the total number of iterations of the algorithm.
    \label{lm:lower-bound}
\end{lemma}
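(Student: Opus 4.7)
The plan is a direct reduction from the minimization lower bounds of \citet{ahn2022reproducibility}, following the hint already stated in the main text: constrain the $y$-variable to a single point so that the minimax problem collapses to an ordinary minimization problem. Concretely, for each of the three oracle models I would take any hard instance $\min_{x\in\cX}F(x)$ witnessing the minimization lower bound (with $\cX$ of diameter $D$, $F$ convex and $\ell$-smooth), fix an arbitrary $y_0\in\bR^d$, set $\cY=\{y_0\}$, and define $\tilde F(x,y)=F(x)$ on $\cX\times\cY$. This $\tilde F$ is trivially $\ell$-smooth, convex in $x$, concave (in fact constant) in $y$; $\cY$ has diameter $0\le D$; so Assumption~\ref{asp:obj} is satisfied.

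Next I would check that the optimality and reproducibility measures in Definitions \ref{def:oracle:minmax}--\ref{def:deviation:minmax} restrict cleanly to the minimization measures of \citet{ahn2022reproducibility} under this embedding. Since $\cY$ is a singleton, any valid $\hat y=\hat y'=y_0$, so the $(\epsilon,\delta)$-deviation $\norm{\hat x-\hat x'}^2+\norm{\hat y-\hat y'}^2$ equals $\norm{\hat x-\hat x'}^2$, the minimization deviation. The duality gap simplifies to $\max_{y\in\cY}\tilde F(\hat x,y)-\min_{x\in\cX}\tilde F(x,\hat y)=F(\hat x)-\min_{x\in\cX}F(x)$, so an $\epsilon$-saddle point is an $\epsilon$-optimal point for $F$. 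The same is true in expectation for the weak duality gap.

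It then remains to verify that each minimax inexact oracle on $\tilde F$ implements (with the same $\delta$) the corresponding minimization oracle on $F$. For the initialization oracle, $\norm{(x_0,y_0)-(u_0,v_0)}^2\le\delta^2/4$ forces $y_0=v_0$ and leaves $\norm{x_0-u_0}^2\le\delta^2/4$, the minimization condition. For the deterministic gradient oracle, $\nabla\tilde F(x,y)=(\nabla F(x),0)$, and any $G(x,y)=(G_x(x,y),G_y(x,y))$ with $\norm{G-\nabla\tilde F}^2\le\delta^2$ yields $\norm{G_x(x,y_0)-\nabla F(x)}^2\le\delta^2$; conversely, a $\delta$-inexact $g(x)$ for $F$ lifts to $G(x,y)=(g(x),0)$, a valid $\delta$-inexact minimax oracle. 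The stochastic case is analogous: $\nabla f(x,y_0;\xi)=(g(x;\xi),0)$ is unbiased with variance $\le\delta^2$ iff $g(\cdot;\xi)$ is. Hence any minimax algorithm $\cA$ that produces, on $\tilde F$, an $\epsilon$-saddle point $(\hat x,y_0)$ with deviation $\rho$ induces a minimization algorithm on $F$ producing an $\epsilon$-optimal $\hat x$ with deviation $\rho$; the same calls to the minimax oracle translate one-to-one into calls to the minimization oracle, so the iteration count $T$ is preserved (relevant for part $(iii)$). Plugging into the minimization lower bounds of \citet{ahn2022reproducibility} then yields $\Omega(\delta^2)$, $\Omega(\delta^2/\epsilon^2)$, and $\Omega(\delta^2/(T\epsilon^2))$ respectively.

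The proof is essentially bookkeeping, so I do not expect any genuine obstacle; the only point that needs care is ensuring that the ``for some reference point $(u_0,v_0)$'' in the initialization oracle and the ``$\cX\times\cY$'' quantifiers in the gradient oracles do not accidentally give the adversary less flexibility in the embedded problem than in the original minimization setting. Both are immediate because $\cY$ is a singleton and $G_y$ can be taken to be $0$. A small cosmetic choice is whether to take $\cY$ to be a singleton (diameter $0$) or a tiny ball, but the singleton choice is the cleanest and is consistent with the discussion preceding the lemma.
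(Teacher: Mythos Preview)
Your proposal is correct and follows essentially the same reduction as the paper: restrict $\cY$ to a singleton $\{y_0\}$ so the minimax problem collapses to a convex minimization problem, observe that the duality gap becomes the function-value gap and the $(\epsilon,\delta)$-deviation becomes $\norm{\hat x-\hat x'}^2$, and then invoke the minimization lower bounds of \citet{ahn2022reproducibility}. Your treatment is in fact more careful than the paper's about verifying that each inexact oracle on $\tilde F$ corresponds to the matching minimization oracle on $F$ with the same $\delta$, but the argument is otherwise identical.
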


\begin{proof}
    The lower-bound of reproducibility in \citet{ahn2022reproducibility} for smooth convex minimization problems is also a valid lower-bound for smooth convex-concave minimax problems. To show this, we consider a special case of the minimax problem \eqref{eq:minimax} where the domain $\cY$ is a singleton, i.e., $\cY=\{y_0\}$ for some $y_0$. Then the original smooth convex-concave minimax problem $\min_{x\in\cX}\max_{y\in\cY} F(x,y)$ is equivalent to the smooth convex minimization problem $\min_{x\in\cX} F(x,y_0)$. For all three inexact oracles, let $(\hat x, \hat y)$ and $(\hat x', \hat y')$ be the $\epsilon$-approximate outputs of independent two runs of the same algorithm, i.e., the duality gap can be upper-bounded by $\epsilon$, then the reproducibility $\norm{\hat x- \hat x'}^2+\norm{\hat y- \hat y'}^2 = \norm{\hat x- \hat x'}^2$ since $\hat y=\hat y'=y_0$. Moreover, $\hat x$ and $\hat x'$ are also $\epsilon$-approximate solutions of the function $F(x,y_0)$ based on the definition of duality gap. Thus the lower-bound in the minimization setting \citep{ahn2022reproducibility} directly implies the lower-bound in the minimax setting. To be specific, the lower-bound is: $(i)$ $\Omega(\delta^2)$ for the inexact initialization case; $(ii)$ $\Omega(\delta^2/\epsilon^2)$ for the inexact deterministic gradient case; and $(iii)$ $\Omega(\delta^2/(T\epsilon^2))$ for the stochastic gradient case.
\end{proof}

\subsection{Guarantees of Gradient Descent Ascent} \label{sec-app:gda}

This section provides proof of Theorem \ref{thm:gda} for the sub-optimal guarantees of GDA in the deterministic setting and Theorem \ref{thm:sgda} for the optimal guarantees of SGDA in the stochastic setting. We first provide a general analysis and then expand it for three different inexact oracles in subsequent sections.

\subsubsection{General Analysis}

We consider (stochastic) gradient descent ascent (GDA/SGDA) outlined in Algorithm \ref{algo:gda} for solving minimax problems \eqref{eq:minimax} or \eqref{eq:minimax-stoc}. The algorithm iteratively updates the variables $x_t$ and $y_t$ using exact gradients $\nabla F(x_t, y_t)$, or inexact gradients $G(x_t, y_t)$, or stochastic gradients $\nabla f(x_t,y_t;\xi_t)$ based on different types of the inexact oracles in Definition \ref{def:oracle:minmax}.

\begin{algorithm}[t]
    \caption{Gradient Descent Ascent}
    \label{algo:gda}
    \begin{algorithmic}
        \STATE {\bfseries Input:} Stepsize $\alpha>0$,  initialization $(x_0, y_0)$, number of iterations $T>0$.
        \FOR{$t = 0, 1, \cdots T-1$}
            \STATE $y_{t+1} = \Pi_\cY(y_t + \alpha\nabla_y F(x_t,y_t))$,
            \STATE $x_{t+1} = \Pi_\cX(x_t - \alpha\nabla_x F(x_t,y_t))$.
        \ENDFOR
        \STATE {\bfseries Output:} $(\bar x_T, \bar y_T) = (1/T)\sum_{t=0}^{T-1}(x_t, y_t)$.
    \end{algorithmic}
\end{algorithm}

We first analyze the behavior of GDA with access to exact gradients. It is well-known that the last iterate of GDA can diverge even for bilinear functions \citep{mertikopoulos2018cycles, bailey2018multiplicative, gidel2019negative}, and the average iterates converge with a sub-optimal rate $\cO(1/\sqrt{T})$. We provide proof for completeness.

\begin{lemma}
    Under Assumption \ref{asp:obj}. When setting the stepsize to $\alpha=1/(\ell\sqrt{T})$, the average iterates $(\bar x_T, \bar y_T)$ of GDA converges with
    \begin{equation*}
        \max_{y\in\cY} F(\bar x_T, y) -
        \min_{x\in\cX} F(x, \bar y_T) \leq \frac{\ell D^2 + L^2/(2\ell)}{\sqrt{T}},
    \end{equation*}
    This suggests $\cO(1/\epsilon^2)$ gradient complexity is required to achieve $\epsilon$-saddle point.
    \label{lm:gda-converge}
\end{lemma}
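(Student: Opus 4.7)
The plan is to follow the standard projected (extra-)gradient style analysis, exploiting the monotonicity/Lipschitzness of the saddle operator $\tilde\nabla F$ established in Lemma \ref{lm:operator} and the duality-gap reduction provided by Lemma \ref{lm:duality}. Writing $z_t=(x_t,y_t)$, the GDA update is exactly $z_{t+1}=\Pi_{\cX\times\cY}(z_t-\alpha\tilde\nabla F(z_t))$, so the whole proof can be conducted in the lifted space using the single operator $\tilde\nabla F$.

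First I would invoke the firm non-expansiveness of the Euclidean projection onto $\cX\times\cY$ to obtain, for every $z\in\cX\times\cY$,
\[
\norm{z_{t+1}-z}^2 \;\le\; \norm{z_t-\alpha\tilde\nabla F(z_t)-z}^2.
\]
Expanding the right-hand side via fact $(ii)$ of Lemma \ref{lm:fact} and rearranging yields the per-iteration inequality
\[
2\alpha\,\tilde\nabla F(z_t)^\top(z_t-z) \;\le\; \norm{z_t-z}^2 - \norm{z_{t+1}-z}^2 + \alpha^2\norm{\tilde\nabla F(z_t)}^2.
\]
This is the usual "descent-type" bound for projected gradient on a monotone operator; no monotonicity is used yet, only the projection step and a single completing-the-square.

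Next I would telescope this inequality over $t=0,\dots,T-1$, divide by $2\alpha T$, and use the two uniform bounds from the hypotheses and Lemma \ref{lm:operator}: $\norm{z_0-z}^2\le 2D^2$ (since both $\cX$ and $\cY$ have diameter $D$) and $\norm{\tilde\nabla F(z_t)}\le L$. Dropping the non-positive $-\norm{z_T-z}^2$ term gives, for every $z\in\cX\times\cY$,
\[
\frac{1}{T}\sum_{t=0}^{T-1}\tilde\nabla F(z_t)^\top(z_t-z) \;\le\; \frac{D^2}{\alpha T} + \frac{\alpha L^2}{2}.
\]
Plugging in $\alpha=1/(\ell\sqrt{T})$ balances the two terms and produces the stated $\bigl(\ell D^2 + L^2/(2\ell)\bigr)/\sqrt{T}$ rate, uniformly in $z$.

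Finally, I would invoke Lemma \ref{lm:duality} directly: since the averaged variational inequality bound above holds for every $z\in\cX\times\cY$, the duality gap of $\bar z_T=(1/T)\sum_{t=0}^{T-1}z_t$ inherits the same bound, finishing the proof. I do not anticipate any real obstacle — the only subtlety is ensuring the variational-inequality-to-duality-gap conversion in Lemma \ref{lm:duality} uses the iterates $z_t$ (not $\bar z_T$) on the left-hand side, which is precisely how that lemma is stated, so the chain closes cleanly. The $\cO(1/\epsilon^2)$ complexity statement then follows by setting the right-hand side equal to $\epsilon$ and solving for $T$.
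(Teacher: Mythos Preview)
Your proposal is correct and essentially identical to the paper's own proof: it uses the nonexpansiveness of the projection to get the one-step inequality, telescopes and bounds with $\norm{z_0-z}^2\le 2D^2$ and $\norm{\tilde\nabla F(z_t)}\le L$ from Lemma~\ref{lm:operator}, then applies Lemma~\ref{lm:duality} and substitutes $\alpha=1/(\ell\sqrt{T})$. The only cosmetic discrepancy is that you invoke ``firm non-expansiveness'' and ``fact~$(ii)$'' when plain nonexpansiveness and a direct expansion of $\norm{\cdot}^2$ suffice, exactly as in the paper.
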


\begin{proof}
    Recall $z_t=(x_t,y_t)$ and $\tilde\nabla F(z_t)=(\nabla_x F(x_t,y_t), -\nabla_y F(x_t,y_t))$. The GDA updates in Algorithm \ref{algo:gda} can be simplified to
    \begin{equation}
        z_{t+1} = \Pi_{\cX\times\cY} (z_t - \alpha \tilde\nabla F(z_t)).
        \label{eq:gda-det-exact}
    \end{equation}
    Since the projection step is nonexpansive \citep{bauschke2011convex}, we have that $\forall z=(x,y)\in\cX\times\cY$,
    \begin{align*}
        \norm{z_{t+1} - z}^2
        & \leq
        \norm{z_t - \alpha\tilde\nabla F(z_t) - z}^2 \\
        & =
        \norm{z_t - z}^2 - 2\alpha \tilde\nabla F(z_t)^\top(z_t - z) + \alpha^2\norm{\tilde\nabla F(z_t)}^2.
    \end{align*}
    Rearranging terms and using Lemma \ref{lm:operator}, we can obtain that
    \begin{equation*}
        \tilde\nabla F(z_t)^\top(z_t - z) \leq
        \frac{1}{2\alpha} \roundBr*{\norm{z_t - z}^2 - \norm{z_{t+1} - z}^2} + \frac{\alpha L^2}{2}.
    \end{equation*}
    Taking summation from $t=0$ to $T-1$ and dividing both sides by $T$, by Lemma \ref{lm:duality}, we thus have
    \begin{equation*}
        \max_{y\in\cY} F(\bar x_T, y) - \min_{x\in\cX} F(x, \bar y_T)
        \leq \frac{D^2}{\alpha T} + \frac{\alpha L^2}{2}.
    \end{equation*}
    When setting $\alpha=1/(\ell\sqrt{T})$, this means the complexity is required to be $T\geq (\ell D^2 + L^2/(2\ell))^2/\epsilon^2$ to achieve an $\epsilon$-saddle point such that $\max_{y\in\cY} F(\bar x_T, y) - \min_{x\in\cX} F(x, \bar y_T)\leq \epsilon$.
\end{proof}

\begin{lemma}
    Under Assumption \ref{asp:obj}, the GDA update \eqref{eq:gda-det-exact} is $(1+\alpha^2\ell^2)$-expansive. That is, if $(x_{t+1}, y_{t+1})$ is obtained through 1-step of the update given $(x_t, y_t)$, and $(x_{t+1}', y_{t+1}')$ is obtained given $(x_t', y_t')$, we have that
    \begin{equation*}
        \norm{x_{t+1} - x_{t+1}'}^2 + \norm{y_{t+1} - y_{t+1}'}^2 \leq (1+\alpha^2\ell^2)
        \roundBr*{\norm{x_t - x_t'}^2 + \norm{y_t - y_t'}^2}.
    \end{equation*}
    \label{lm:gda-expansive}
\end{lemma}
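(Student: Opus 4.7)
The plan is to apply the standard expansiveness argument for fixed-point iterations of monotone operators, leveraging the two properties of $\tilde\nabla F$ already established in Lemma~\ref{lm:operator}: $\ell$-Lipschitzness and monotonicity.

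First, I would rewrite the two parallel GDA trajectories in the compact form $z_{t+1} = \Pi_{\cX\times\cY}(z_t - \alpha\tilde\nabla F(z_t))$ and $z_{t+1}' = \Pi_{\cX\times\cY}(z_t' - \alpha\tilde\nabla F(z_t'))$, and then invoke the non-expansiveness of the Euclidean projection onto the convex set $\cX\times\cY$ to get
\begin{equation*}
    \norm{z_{t+1} - z_{t+1}'}^2 \leq \norm{(z_t - z_t') - \alpha(\tilde\nabla F(z_t) - \tilde\nabla F(z_t'))}^2.
\end{equation*}

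Next, I would expand the right-hand side via fact $(i)$ of Lemma~\ref{lm:fact} into three pieces: $\norm{z_t - z_t'}^2$, the cross term $-2\alpha(\tilde\nabla F(z_t) - \tilde\nabla F(z_t'))^\top(z_t - z_t')$, and the quadratic term $\alpha^2\norm{\tilde\nabla F(z_t) - \tilde\nabla F(z_t')}^2$. By monotonicity of $\tilde\nabla F$ (Lemma~\ref{lm:operator}), the cross term is non-positive and may be discarded. The quadratic term is bounded by $\alpha^2\ell^2\norm{z_t - z_t'}^2$ using $\ell$-Lipschitzness of $\tilde\nabla F$ (also from Lemma~\ref{lm:operator}). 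Combining these two observations immediately yields the claimed $(1+\alpha^2\ell^2)$ expansiveness constant.

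I do not expect any real obstacle here: the proof is essentially a three-line computation. The only subtlety worth flagging is that monotonicity of $\tilde\nabla F$ crucially relies on the convex-concave assumption in Assumption~\ref{asp:obj}; without this, the cross term could have arbitrary sign and one would only obtain the cruder bound $(1+\alpha\ell)^2$ from Lipschitzness alone. Using monotonicity saves the $2\alpha\ell$ cross term and gives the tighter $(1+\alpha^2\ell^2)$ factor stated in the lemma, which will be essential later when translating expansiveness into reproducibility bounds of order $\cO(\delta^2)$ after $T$ iterations.
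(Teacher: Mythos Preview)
Your proposal is correct and follows exactly the same approach as the paper: apply non-expansiveness of the projection, expand the squared norm, drop the cross term via monotonicity of $\tilde\nabla F$, and bound the quadratic term via $\ell$-Lipschitzness (both from Lemma~\ref{lm:operator}). The paper's proof is essentially the three-line computation you describe; the only cosmetic difference is that the expansion $\norm{a-b}^2=\norm{a}^2-2a^\top b+\norm{b}^2$ is used directly rather than invoking fact~$(i)$ of Lemma~\ref{lm:fact}.
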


\begin{proof}
    Recall $z_t=(x_t,y_t)$ and $z_t'=(x_t',y_t')$. By the updates of GDA \eqref{eq:gda-det-exact}, we get that
    \begin{align*}
        \norm{z_{t+1} - z_{t+1}'}^2
        & \leq
        \norm{(z_t-z_t') - \alpha(\tilde\nabla F(z_t)-\tilde\nabla F(z_t'))}^2 \\
        & \leq
        \norm{z_t - z_t'}^2 +
        \alpha^2\norm{\tilde\nabla F(z_t)-\tilde\nabla F(z_t')}^2 -2\alpha (\tilde\nabla F(z_t)-\tilde\nabla F(z_t'))^\top(z_t - z_t') \\
        & \leq
        (1 + \alpha^2\ell^2)\roundBr*{\norm{x_t - x_t'}^2 + \norm{y_t - y_t'}^2},
    \end{align*}
    where we use the fact that the projection step is nonexpansive and Lemma \ref{lm:operator}.
\end{proof}

\subsubsection{Inexact Initialization Oracle}

\begin{theorem}[Restate Theorem \ref{thm:gda}, part $(i)$]
    Under Assumptions \ref{asp:obj}. The average iterate $(\bar x_T, \bar y_T)$ of GDA satisfies $\max_{y\in\cY} F(\bar x_T, y) - \min_{x\in\cX} F(x, \bar y_T) \leq \cO(\epsilon)$ with complexity $T=\cO(1/\epsilon^2)$ if setting stepsize $\alpha=1/(\ell\sqrt{T})$. The reproducibility, i.e., $(\epsilon, \delta)$-deviation between outputs $(\bar x_T, \bar y_T)$ and $(\bar x_T', \bar y_T')$ of two independent runs given different initialization is $\norm{\bar x_T - \bar x_T'}^2 + \norm{\bar y_T - \bar y_T'}^2 \leq \cO(\delta^2)$.
\end{theorem}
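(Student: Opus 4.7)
The convergence statement is already delivered by Lemma \ref{lm:gda-converge}, so the only real work is the reproducibility bound. My plan is to run two independent copies of GDA in parallel from initializations $(x_0,y_0)$ and $(x_0',y_0')$ satisfying $\norm{x_0-x_0'}^2+\norm{y_0-y_0'}^2\le\delta^2/4$, and to track the pairwise distance $\norm{z_t-z_t'}^2$ along the trajectories using the expansiveness bound of Lemma \ref{lm:gda-expansive}. Since both runs share the same stepsize and exact gradient oracle, that lemma applies step by step, yielding
\begin{equation*}
\norm{z_t-z_t'}^2 \le (1+\alpha^2\ell^2)^t \norm{z_0-z_0'}^2.
\end{equation*}

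Next, I would plug in the prescribed stepsize $\alpha=1/(\ell\sqrt{T})$ so that $\alpha^2\ell^2 = 1/T$. Then for every $t\le T$,
\begin{equation*}
(1+\alpha^2\ell^2)^t \le (1+1/T)^T \le e,
\end{equation*}
giving $\norm{z_t-z_t'}^2 \le e\,\norm{z_0-z_0'}^2 \le e\delta^2/4$ uniformly in $t$. This is the key place where the scaling of $\alpha$ with $T$ matters: if $\alpha$ were constant we would face exponential blowup, but the $\cO(1/\sqrt{T})$ stepsize is exactly what keeps the per-iterate expansion harmless over the horizon $T=\cO(1/\epsilon^2)$.

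Finally, I would convert the iterate-wise bound into one for the averages. Applying Jensen's inequality (or equivalently convexity of $\norm{\cdot}^2$) to $\bar z_T - \bar z_T' = (1/T)\sum_{t=0}^{T-1}(z_t-z_t')$ gives
\begin{equation*}
\norm{\bar x_T-\bar x_T'}^2 + \norm{\bar y_T-\bar y_T'}^2 = \norm{\bar z_T - \bar z_T'}^2 \le \frac{1}{T}\sum_{t=0}^{T-1}\norm{z_t-z_t'}^2 \le \frac{e\,\delta^2}{4} = \cO(\delta^2),
\end{equation*}
which is the advertised bound. The only genuinely nonroutine part is realizing that the expansion factor $(1+\alpha^2\ell^2)^T$ must be controlled by a constant; everything else is a direct composition of Lemmas \ref{lm:gda-converge} and \ref{lm:gda-expansive} with Jensen's inequality.
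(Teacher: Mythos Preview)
Your argument is correct and mirrors the paper's proof exactly: invoke Lemma~\ref{lm:gda-converge} for convergence, iterate Lemma~\ref{lm:gda-expansive} with $\alpha^2\ell^2=1/T$ so that $(1+1/T)^t\le e$, and pass to averages via Jensen. The only slip is a harmless constant: the $\delta$-inexact initialization oracle gives $\norm{z_0-u_0}^2\le\delta^2/4$ for a fixed reference $u_0$, so two independent starts satisfy $\norm{z_0-z_0'}\le\delta$ (hence $\norm{z_0-z_0'}^2\le\delta^2$, not $\delta^2/4$), but this does not affect the $\cO(\delta^2)$ conclusion.
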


\begin{proof}
    The convergence analysis directly follows from Lemma \ref{lm:gda-converge}. For the reproducibility analysis, by Lemma \ref{lm:gda-expansive} and the choice that $\alpha=1/(\ell\sqrt{T})$, we have that for $t=1,2,\cdots,T-1$,
    \begin{align*}
        \norm{x_t - x_t'}^2 + \norm{y_t - y_t'}^2
        & \leq
        (1+\alpha^2\ell^2) \roundBr*{\norm{x_{t-1} - x_{t-1}'}^2 + \norm{y_{t-1} - y_{t-1}'}^2} \\
        & =
        \roundBr*{1 + \frac{1}{T}}\roundBr*{\norm{x_{t-1} - x_{t-1}'}^2 + \norm{y_{t-1} - y_{t-1}'}^2} \\
        & \leq
        \roundBr*{1 + \frac{1}{T}}^t \roundBr*{\norm{x_0 - x_0'}^2 + \norm{y_0 - y_0'}^2}.
    \end{align*}
    The above also trivially holds for $t=0$. Therefore, by Jensen's inequality, we can obtain that
    \begin{align*}
        \norm{\bar x_T - \bar x_T'}^2 + \norm{\bar y_T - \bar y_T'}^2
        & \leq
        \frac{1}{T} \sum_{t=0}^{T-1} \roundBr*{\norm{x_t - x_t'}^2 + \norm{y_t - y_t'}^2} \\
        & \leq
        \delta^2 \cdot \frac{1}{T} \sum_{t=0}^{T-1} \roundBr*{1 + \frac{1}{T}}^t \\
        & \leq
        e\delta^2.
    \end{align*}
    The choice of $\alpha$ is to avoid exponential dependence on $\ell$ in the reproducibility bound.
\end{proof}

\subsubsection{Inexact Deterministic Gradient Oracle} \label{sec-app:gda-inexact}

When only given an inexact gradient oracle in Definition \ref{def:oracle:minmax}, the updates of GDA become
\begin{equation}
    z_{t+1} = \Pi_{\cX\times\cY}(z_t - \alpha \tilde G(z_t)),
    \label{eq:gda-det-inexact}
\end{equation}
where we let $\tilde G(z_t)=(G_x(x_t, y_t), -G_y(x_t, y_t))$ for the inexact gradients.

\begin{theorem}[Restate Theorem \ref{thm:gda}, part $(ii)$]
    Under Assumptions \ref{asp:obj}. Given an inexact deterministic gradient oracle in Definition \ref{def:oracle:minmax} with $\delta\leq\cO(\epsilon)$. The average iterate $(\bar x_T, \bar y_T)$ of GDA satisfies $\max_{y\in\cY} F(\bar x_T, y) - \min_{x\in\cX} F(x, \bar y_T) \leq \cO(\epsilon)$ with complexity $T=\cO(1/\epsilon^2)$ if setting stepsize $\alpha=1/(\ell\sqrt{T})$. Furthermore, the reproducibility is $\norm{\bar x_T - \bar x_T'}^2 + \norm{\bar y_T - \bar y_T'}^2 \leq \cO(\delta^2/\epsilon^2)$.
\end{theorem}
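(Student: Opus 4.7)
The plan is to establish the two parts separately: first the convergence guarantee under inexact gradients, then the reproducibility bound, both building on the machinery used in the proof of Theorem~\ref{thm:gda} part $(i)$ and Lemma~\ref{lm:gda-converge}.

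For convergence, I would write $\tilde G(z_t) = \tilde\nabla F(z_t) + e_t$ with $\|e_t\|\le\delta$, and redo the one-step expansion $\norm{z_{t+1}-z}^2\le\norm{z_t-\alpha\tilde G(z_t)-z}^2$ from the proof of Lemma~\ref{lm:gda-converge}. This introduces two new error terms compared to the exact case: a cross term $-2\alpha e_t^\top(z_t-z)$ controlled by $2\alpha\delta\cdot\sqrt{2}D$ via Cauchy--Schwarz, and an $\alpha^2\norm{\tilde G(z_t)}^2\le2\alpha^2(L^2+\delta^2)$ contribution. Telescoping and dividing by $T$, then invoking Lemma~\ref{lm:duality}, gives a duality-gap bound of the form $D^2/(\alpha T) + \alpha(L^2+\delta^2) + \cO(\delta D)$. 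With $\alpha=1/(\ell\sqrt{T})$, $T=\cO(1/\epsilon^2)$, and $\delta\le\cO(\epsilon)$, every term is $\cO(\epsilon)$.

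For the reproducibility bound — the more delicate part — I would consider two runs from the same initialization ($z_0=z_0'$) but with independent inexact oracle outputs $e_t,e_t'$, each of norm at most $\delta$. By nonexpansiveness of projection,
\begin{equation*}
\norm{z_{t+1}-z_{t+1}'}^2 \le \norm{(z_t-z_t') - \alpha(\tilde\nabla F(z_t)-\tilde\nabla F(z_t')) - \alpha(e_t-e_t')}^2.
\end{equation*}
Applying the inequality $\norm{a-b}^2 \le (1+\eta)\norm{a}^2 + (1+1/\eta)\norm{b}^2$ with $a = (z_t-z_t')-\alpha(\tilde\nabla F(z_t)-\tilde\nabla F(z_t'))$ and $b=\alpha(e_t-e_t')$ separates the noise from the deterministic dynamics. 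The first piece is controlled by Lemma~\ref{lm:gda-expansive} as $(1+\alpha^2\ell^2)\norm{z_t-z_t'}^2$, and the second piece is at most $(1+1/\eta)\cdot 4\alpha^2\delta^2$. This yields the recursion
\begin{equation*}
\norm{z_{t+1}-z_{t+1}'}^2 \le (1+\eta)(1+\alpha^2\ell^2)\norm{z_t-z_t'}^2 + 4(1+1/\eta)\alpha^2\delta^2.
\end{equation*}

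The main obstacle, and the place where care is needed, is choosing $\eta$ so that the amplification factor does not blow up exponentially over $T$ iterations while the noise coefficient stays manageable. Since $\alpha^2\ell^2=1/T$, the choice $\eta=1/T$ makes $(1+\eta)(1+\alpha^2\ell^2)\le 1+\cO(1/T)$ and $(1+1/\eta)\alpha^2\delta^2 = \cO(\delta^2/\ell^2)$. Unrolling from $z_0=z_0'$ gives $\norm{z_T-z_T'}^2 \le \sum_{s=0}^{T-1}(1+\cO(1/T))^s\cdot\cO(\delta^2/\ell^2) = \cO(T\delta^2/\ell^2)$, since $(1+\cO(1/T))^T$ is bounded by a constant. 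The same bound holds for every $\norm{z_t-z_t'}^2$ with $t\le T$, so Jensen's inequality yields $\norm{\bar z_T-\bar z_T'}^2 \le (1/T)\sum_{t=0}^{T-1}\norm{z_t-z_t'}^2 = \cO(T\delta^2/\ell^2)$. Finally, substituting $T=\cO(1/\epsilon^2)$ gives the advertised reproducibility of $\cO(\delta^2/\epsilon^2)$, absorbing $\ell$ into the constant.
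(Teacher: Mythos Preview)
Your proposal is correct and follows essentially the same approach as the paper. The only minor technical difference is that for the reproducibility recursion the paper works with the unsquared norm via the triangle inequality, obtaining $\norm{z_{t+1}-z_{t+1}'}\le\sqrt{1+\alpha^2\ell^2}\,\norm{z_t-z_t'}+2\alpha\delta$ and then $\norm{z_t-z_t'}\le 2\alpha\delta\, t\,(1+\alpha^2\ell^2)^{T/2}$, whereas you run the recursion on squared norms using Young's inequality with $\eta=1/T$; both rely on the same two ingredients (Lemma~\ref{lm:gda-expansive} and $(1+1/T)^T\le e$) and yield the same $\cO(T\delta^2/\ell^2)=\cO(\delta^2/\epsilon^2)$ bound after averaging.
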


\begin{proof}
    We first show the optimization guarantee. By the GDA updates in \eqref{eq:gda-det-inexact} and Definition \ref{def:oracle:minmax} such that $\norm{\tilde\nabla F(z_t) - \tilde G(z_t)}^2\leq\delta^2$, we have that for any $z=(x,y)\in\cX\times\cY$,
    \begin{equation}
        \begin{split}
            \norm{z_{t+1} - z}^2
            & \leq
            \norm{z_t - z}^2 - 2\alpha \tilde G(z_t)^\top(z_t - z) + \alpha^2\norm{\tilde G(z_t)}^2 \\
            & \leq
            \norm{z_t - z}^2 - 2\alpha \tilde\nabla F(z_t)^\top(z_t - z) + 2\alpha^2\norm{\tilde\nabla F(z_t)}^2 \\
            & \qquad +
            2\alpha(\tilde\nabla F(z_t) - \tilde G(z_t))^\top(z_t - z) + 2\alpha^2\norm{\tilde G(z_t) - \tilde\nabla F(z_t)}^2 \\
            & \leq
            \norm{z_t - z}^2 - 2\alpha \tilde\nabla F(z_t)^\top(z_t - z) + 2\alpha^2L^2 + 2\sqrt{2}\alpha\delta D + 2\alpha^2\delta^2.
        \end{split}
        \label{eq:pf-gda-inexact-prelim}
    \end{equation}
    Taking summation from $t=0$ to $T-1$, we obtain that
    \begin{equation*}
        \frac{1}{T}\sum_{t=0}^{T-1} \tilde\nabla F(z_t)^\top(z_t - z)
        \leq \frac{\norm{z_0-z}^2}{2\alpha T} + \alpha(L^2 + \delta^2) + \sqrt{2}\delta D.
    \end{equation*}
    Supposing $\delta\leq \epsilon/(2\sqrt{2}D)$ and setting $\alpha=1/(\ell\sqrt{T})$, by Lemma \ref{lm:duality}, this means
    \begin{equation*}
        \max_{y\in\cY} F(\bar x_T, y) - \min_{x\in\cX} F(x, \bar y_T)
        \leq \frac{\ell D^2 + (L^2 + \delta^2)/\ell}{\sqrt{T}} + \frac{\epsilon}{2}.
    \end{equation*}
    $\epsilon$-saddle point is guaranteed when $T=c/\epsilon^2$ for some constant $c\geq 4(\ell D^2 + (L^2 + \delta^2)/\ell)^2$.
    
    We then prove the reproducibility guarantee. Let $\{z_t\}_{t=1}^T$ and $\{z_t'\}_{t=1}^T$ be the trajectories of two independent runs of GDA with the same initial point $z_0\in\cX\times\cY$ and stepsize $\alpha>0$. By the GDA updates \eqref{eq:gda-det-inexact} and Lemma \ref{lm:gda-expansive}, we have that
    \begin{equation}
        \begin{split}
            \norm{z_{t+1} - z_{t+1}'}
            & \leq
            \norm{(z_t - z_t') - \alpha (\tilde G(z_t) - \tilde G(z_t'))} \\
            & \leq
            \norm{(z_t - z_t') - \alpha (\tilde\nabla F(z_t) - \tilde\nabla F(z_t'))} + 2\alpha\delta \\
            & \leq
            \sqrt{1 + \alpha^2\ell^2}\norm{z_t - z_t'} + 2\alpha\delta.
        \end{split}
        \label{eq:pf-gda-inexact}
    \end{equation}
    Since the initialization $z_0=z_0'$ is the same, we obtain that for any $t=1,2,\cdots, T-1$,
    \begin{align*}
        \norm{z_t - z_t'}
        & \leq
        \roundBr*{\sqrt{1 + \alpha^2\ell^2}}^t \norm{z_0 - z_0'} + 2\alpha\delta\left(1 + \sqrt{1 + \alpha^2\ell^2} + \cdots + \roundBr*{\sqrt{1 + \alpha^2\ell^2}}^{t-1}\right) \\
        & =
        2\alpha\delta\sum_{i=0}^{t-1}(1+\alpha^2\ell^2)^{i/2} \\
        & \leq
        2\alpha\delta\cdot t(1+\alpha^2\ell^2)^{T/2}.
    \end{align*}
    The above also holds for $t=0$ denoting $\sum_{i=0}^{-1}=0$. Setting $\alpha=1/(\ell\sqrt{T})$, the reproducibility is
    \begin{align*}
        \norm{\bar x_T - \bar x_T'}^2 + \norm{\bar y_T - \bar y_T'}^2
        & \leq
        \frac{1}{T} \sum_{t=0}^{T-1} \roundBr*{\norm{x_t - x_t'}^2 + \norm{y_t - y_t'}^2} \\
        & \leq
        \frac{4\alpha^2\delta^2}{T} (1+\alpha^2\ell^2)^T \sum_{t=0}^{T-1} t^2 \\
        & \leq
        \frac{4e}{3\ell^2}\cdot \delta^2T,
    \end{align*}
    which is $\cO(\delta^2/\epsilon^2)$ when $T=c/\epsilon^2$ as required in the convergence analysis of GDA.
\end{proof}

\subsubsection{Stochastic Gradient Oracle} \label{sec-app:gda-stoc}

For the stochastic minimax problem \eqref{eq:minimax-stoc}, with access to a stochastic gradient oracle in Definition \ref{def:oracle:minmax}, SGDA updates for $t=0,1,\cdots, T-1$,
\begin{equation}
    z_{t+1} = \Pi_{\cX\times\cY}(z_t - \alpha \tilde\nabla f(z_t;\xi_t)),
    \label{eq:sgda}
\end{equation}
where $\tilde\nabla f(z_t;\xi_t) =(\nabla_x f(x_t,y_t;\xi_t), -\nabla_y f(x_t,y_t;\xi_t))$ and $\{\xi_t\}_{t=0}^{T-1}$ are i.i.d. samples.

\begin{proof}[Proof of Theorem \ref{thm:sgda}]
    We first show the convergence guarantee. By the SGDA updates in \eqref{eq:sgda}, given all the information up to iteration $t$ and taking expectation with respect to $\xi_t$, we have $\forall z\in\cX\times\cY$,
    \begin{align*}
        \bE_{\xi_t}\norm{z_{t+1} - z}^2
        & \leq
        \norm{z_t - z}^2 - 2\alpha\bE[\tilde\nabla f(z_t;\xi_t)^\top(z_t - z)] + \alpha^2\bE\norm{\tilde\nabla f(z_t;\xi_t)}^2 \\
        & =
        \norm{z_t - z}^2 - 2\alpha\tilde\nabla F(z_t)^\top(z_t - z) +
        \alpha^2\bE\norm{\tilde\nabla f(z_t;\xi_t)}^2.
    \end{align*}
    Taking full expectation, rearranging terms, and summing up from $t=0$ to $T-1$, we have that
    \begin{equation*}
        \frac{1}{T}\sum_{t=0}^{T-1} \bE\left[\tilde\nabla F(z_t)^\top(z_t - z)\right] \leq \frac{\norm{z_0-z}^2}{2\alpha T} + \frac{\alpha(L^2+\delta^2)}{2}.
    \end{equation*}
    Therefore, by slightly modifying the proof of Lemma \ref{lm:duality} through taking expectations, and then setting $x=\arg\min_{u\in\cX} \bE[F(u, \bar y_T)]$ and $y=\arg\max_{v\in\cY} \bE[F(\bar x_T, v)]$, we get
    \begin{align*}
        \max_{y\in\cY} \bE[F(\bar x_T, y)] -
        \min_{x\in\cX} \bE[F(x, \bar y_T)]
        \leq
        \frac{D^2}{\alpha T} + \frac{\alpha(L^2 + \delta^2)}{2}.
    \end{align*}
    We obtain that $\max_{y\in\cY} \bE[F(\bar x_T, y)] -\min_{x\in\cX} \bE[F(x, \bar y_T)] \leq (\ell D^2 + (L^2+\delta^2)/(2\ell))\epsilon$ if the inexactness $\delta=\cO(1)$, and we set $\alpha=1/(\ell\epsilon T)$, $T\geq 1/\epsilon^2$.
    
    We then show the reproducibility guarantee. For two independent runs of SGDA \eqref{eq:sgda} with output $\{z_t\}_{t=1}^T$ and $\{z_t'\}_{t=1}^T$, by Lemma \ref{lm:gda-expansive}, we have that for any $t=0,1,\cdots,T-1$,
    \begin{align*}
        \bE_{\xi_t, \xi_t'}\norm{z_{t+1} - z_{t+1}'}^2
        & \leq
        \bE\norm{(z_t - z_t') - \alpha(\tilde \nabla f(z_t;\xi_t) - \tilde \nabla f(z_t';\xi_t'))}^2 \\
        & =
        \norm{z_t - z_t'}^2 - 2\alpha (\tilde\nabla F(z_t) - \tilde\nabla F(z_t'))^\top (z_t - z_t') + \alpha^2\bE\norm{\tilde \nabla f(z_t;\xi_t) - \tilde \nabla f(z_t';\xi_t')}^2 \\
        & =
        \norm{z_t - z_t'}^2 - 2\alpha (\tilde\nabla F(z_t) - \tilde\nabla F(z_t'))^\top (z_t - z_t') + \alpha^2\bE\norm{\tilde \nabla F(z_t) - \tilde \nabla F(z_t')}^2 \\
        & \qquad +
        \alpha^2\bE\norm{(\tilde \nabla f(z_t;\xi_t) - \tilde \nabla f(z_t';\xi_t')) - (\tilde \nabla F(z_t) - \tilde \nabla F(z_t'))}^2 \\
        & \leq
        (1 + \alpha^2\ell^2) \norm{z_t - z_t'}^2 + 4\alpha^2\delta^2.
    \end{align*}
    Unrolling the recursion, noticing $z_0=z_0'$, we have that for any $t=0,1,\cdots,T-1$,
    \begin{equation*}
        \bE\norm{z_t - z_t'}^2 \leq 4\alpha^2\delta^2\sum_{i=0}^{t-1} (1 + \alpha^2\ell^2)^i.     
    \end{equation*}
    Since $T\geq1/\epsilon^2$, we know $\alpha=1/(\ell\epsilon T)\leq 1/(\ell\sqrt{T})$. The reproducibility is thus
    \begin{align*}
        \bE\squareBr*{\norm{\bar x_T - \bar x_T'}^2 + \norm{\bar y_T - \bar y_T'}^2}
        & \leq
        \frac{1}{T}\sum_{t=0}^{T-1}
        \bE\squareBr*{\norm{x_t - x_t'}^2 + \norm{y_t - y_t'}^2} \\
        & \leq
        \frac{4\alpha^2\delta^2}{T} \sum_{t=1}^{T-1}
        \sum_{i=0}^{t-1}(1+\alpha^2\ell^2)^i \\
        & \leq
        \frac{4\alpha^2\delta^2}{T} \sum_{t=1}^{T-1}
        t\roundBr*{1+\frac{1}{T}}^T \\
        & \leq
        2e\delta^2\alpha^2 T \\
        & =
        \frac{2e}{\ell^2}\cdot\frac{\delta^2}{\epsilon^2 T}.
    \end{align*}
    The last step uses the choice of $\alpha$ such that $\alpha^2 T=1/(\ell^2\epsilon^2 T)$.
\end{proof}

\subsection{Guarantees of Extragradient} \label{sec-app:eg}

This section provides proof of Theorem \ref{thm:eg} for the sub-optimal guarantees of Extragradient (EG).

\subsubsection{General Analysis}

For deterministic smooth convex-concave minimax optimization, Extragradient \citep{korpelevich1976extragradient, tseng1995linear} (EG), summarized in Algorithm \ref{algo:eg}, achieves the optimal $\cO(1/\epsilon)$ convergence rate. When only given inexact gradients or stochastic gradients, the true gradients are just replaced by $G(x_t, y_t)$ or $\nabla f(x_t, y_t;\xi_t)$. We provide proof of its $\cO(1/\epsilon)$ convergence for completeness. The proof is standard in the literature, e.g., see \citet{nemirovski2004prox} or Section 4.5 of \citet{bubeck2015convex}.

\begin{algorithm}
    \caption{Extragradient}
    \label{algo:eg}
    \begin{algorithmic}
        \STATE {\bfseries Input:} Stepsize $\alpha>0$,  initialization $(x_0, y_0)$, number of iterations $T>0$.
        \FOR{$t = 0, 1, \cdots T-1$}
            \STATE $y_{t+1/2} = \Pi_\cY(y_t + \alpha\nabla_y F(x_t,y_t))$,
            \STATE $x_{t+1/2} = \Pi_\cX(x_t - \alpha\nabla_x F(x_t,y_t))$.
            \STATE $y_{t+1} = \Pi_\cY(y_t +
        \alpha\nabla_y F(x_{t+1/2},y_{t+1/2}))$,
            \STATE $x_{t+1} = \Pi_\cX(x_t -
        \alpha\nabla_x F(x_{t+1/2},y_{t+1/2}))$.
        \ENDFOR
        \STATE {\bfseries Output:} $(\bar x_{T+1/2}, \bar y_{T+1/2}) = (1/T)\sum_{t=0}^{T-1}(x_{t+1/2}, y_{t+1/2})$.
    \end{algorithmic}
\end{algorithm}

\begin{lemma}
    Under Assumption \ref{asp:obj}. When setting the stepsize to $\alpha= 1/\ell$, the average iterates $(\bar x_{T+1/2}, \bar y_{T+1/2})$ of EG converges with
    \begin{equation*}
        \max_{y\in\cY} F(\bar x_{T+1/2}, y) -
        \min_{x\in\cX} F(x, \bar y_{T+1/2}) \leq \frac{\ell D^2}{T}.
    \end{equation*}
    This suggests $\cO(1/\epsilon)$ gradient complexity is required to achieve $\epsilon$-saddle point.
    \label{lm:eg-converge}
\end{lemma}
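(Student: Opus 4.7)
The plan is to follow the classical Extragradient analysis in the spirit of Nemirovski's Mirror-Prox. Denote $z_t = (x_t, y_t)$, $z_{t+1/2} = (x_{t+1/2}, y_{t+1/2})$, and $\tilde\nabla F$ as in Lemma~\ref{lm:operator}. I would first rewrite the two projection steps compactly as
\begin{align*}
    z_{t+1/2} &= \Pi_{\cX\times\cY}\bigl(z_t - \alpha\, \tilde\nabla F(z_t)\bigr), \\
    z_{t+1}   &= \Pi_{\cX\times\cY}\bigl(z_t - \alpha\, \tilde\nabla F(z_{t+1/2})\bigr),
\end{align*}
and invoke the standard projection inequality $\langle u - \Pi_{\cX\times\cY}(u), \Pi_{\cX\times\cY}(u) - z\rangle \geq 0$ for every feasible $z$. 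Applied to the half-step with $z = z_{t+1}$ and to the full step with arbitrary $z\in\cX\times\cY$, this yields two inner-product bounds that will form the backbone of the recursion.

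Next, the plan is to combine these two inequalities in the familiar way. Starting from
\begin{equation*}
    \alpha\, \tilde\nabla F(z_{t+1/2})^\top(z_{t+1/2} - z) = \alpha\, \tilde\nabla F(z_{t+1/2})^\top(z_{t+1} - z) + \alpha\, \tilde\nabla F(z_{t+1/2})^\top(z_{t+1/2} - z_{t+1}),
\end{equation*}
I would upper-bound the first term via the full-step projection inequality (producing the telescoping $\tfrac12\|z_t-z\|^2 - \tfrac12\|z_{t+1}-z\|^2 - \tfrac12\|z_{t+1}-z_t\|^2$), and rewrite the second term by adding and subtracting $\alpha\,\tilde\nabla F(z_t)$. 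The half-step projection inequality handles the $\alpha\,\tilde\nabla F(z_t)^\top(z_{t+1/2}-z_{t+1})$ piece, while the remaining cross term $\alpha\,(\tilde\nabla F(z_{t+1/2}) - \tilde\nabla F(z_t))^\top(z_{t+1/2}-z_{t+1})$ is controlled by $\ell$-Lipschitzness of $\tilde\nabla F$ (Lemma~\ref{lm:operator}) and Young's inequality (fact $(iii)$ in Lemma~\ref{lm:fact}).

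The resulting one-step inequality has the form
\begin{equation*}
    \alpha\,\tilde\nabla F(z_{t+1/2})^\top(z_{t+1/2} - z) \leq \tfrac12\|z_t - z\|^2 - \tfrac12\|z_{t+1} - z\|^2 - \tfrac{1 - \alpha^2\ell^2}{2}\|z_{t+1/2} - z_t\|^2,
\end{equation*}
so the choice $\alpha = 1/\ell$ kills the last term and leaves a clean telescoping bound. Summing from $t=0$ to $T-1$, dividing by $\alpha T$, and applying Lemma~\ref{lm:duality} (with the sequence $\{z_{t+1/2}\}$ in place of $\{z_t\}$) turns the left-hand side into the duality gap at $(\bar x_{T+1/2}, \bar y_{T+1/2})$, and the right-hand side into $\|z_0 - z\|^2/(2\alpha T) \leq \ell D^2/T$ using Assumption~\ref{asp:obj}.

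The only subtle step is matching the constants to reach exactly $\ell D^2/T$: Assumption~\ref{asp:obj} bounds each coordinate diameter by $D$, so naively $\|z_0-z\|^2 \leq 2D^2$. I would absorb this factor either by redefining $D$ to mean the joint diameter, or by observing that the stated bound is a standard convention in the convex-concave Extragradient literature. The main obstacle is purely bookkeeping in the cross-term cancellation; once the stepsize $\alpha = 1/\ell$ matches the Lipschitz constant, the proof is mechanical.
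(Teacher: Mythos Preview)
Your proposal is correct and follows essentially the same Mirror-Prox style argument as the paper: both use the two projection optimality conditions, combine them via the three-point identity to produce a telescoping bound plus a cross term, control the cross term with $\ell$-Lipschitzness of $\tilde\nabla F$, and finish with Lemma~\ref{lm:duality}. Your worry about the constant is unnecessary: with $\alpha = 1/\ell$ and $\norm{z_0 - z}^2 \leq 2D^2$ (each coordinate block bounded by $D^2$), you get exactly $\norm{z_0 - z}^2/(2\alpha T) \leq \ell D^2/T$, so no redefinition or convention is needed.
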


\begin{proof}
    Recall $z_t=(x_t, y_t)$ and $\tilde\nabla F(z_t)=(\nabla_x F(x_t, y_t), -\nabla_y F(x_t, y_t))$. The EG updates in Algorithm \ref{algo:eg} can be simplified to
    \begin{equation}
        \begin{split}
            & z_{t+1/2} = \Pi_{\cX\times\cY}(z_t - \alpha \tilde\nabla F(z_t)), \\
            & z_{t+1} = \Pi_{\cX\times\cY}(z_t - \alpha \tilde\nabla F(z_{t+1/2})).
        \end{split}
        \label{eq:egvi-det-exact}
    \end{equation}
    By fact $(i)$ in Lemma \ref{lm:fact}, we have that for any $z\in\cX\times\cY$,
    \begin{align*}
        \norm{z_{t+1} - z}^2 + \norm{z_{t+1} - z_t}^2 - \norm{z_t - z}^2
        & =
        2(z_{t+1} - z_t)^\top (z_{t+1} - z) \\
        & \leq
        2\alpha \tilde\nabla F(z_{t+1/2})^\top (z - z_{t+1}),
    \end{align*}
    where we use the optimality condition of the projection step such that $(\Pi_\cC(u) - u)^\top(v - \Pi_\cC(u))\geq 0, \forall v\in\cC$. For the same reason, we can obtain that
    \begin{align*}
        \norm{z_{t+1/2} - z_t}^2 +
        \norm{z_{t+1/2} - z_{t+1}}^2 - \norm{z_t - z_{t+1}}^2
        & =
        2(z_{t+1/2} - z_t)^\top (z_{t+1/2} - z_{t+1}) \\
        & \leq
        2\alpha \tilde\nabla F(z_t)^\top (z_{t+1} - z_{t+1/2}).
    \end{align*}
    Summing up the above two inequalities, we get
    \begin{equation}
        \begin{split}
            \norm{z_{t+1} - z}^2
            & \leq
            \norm{z_t - z}^2 - \norm{z_{t+1/2} - z_t}^2 -
            \norm{z_{t+1/2} - z_{t+1}}^2 +
            2\alpha \tilde\nabla F(z_{t+1/2})^\top (z - z_{t+1}) \\
            & \qquad +
            2\alpha \tilde\nabla F(z_t)^\top (z_{t+1} - z_{t+1/2}) \\
            & =
            \norm{z_t - z}^2 - \norm{z_{t+1/2} - z_t}^2 -
            \norm{z_{t+1/2} - z_{t+1}}^2 +
            2\alpha \tilde\nabla F(z_{t+1/2})^\top (z - z_{t+1/2}) \\
            & \qquad +
            2\alpha (\tilde\nabla F(z_t) - \tilde\nabla F(z_{t+1/2}))^\top
            (z_{t+1} - z_{t+1/2}).
        \end{split}
        \label{eq:pf-eg-converge-prelim}
    \end{equation}
    According to Lemma \ref{lm:operator}, we can obtain
    \begin{align*}
        (\tilde\nabla F(z_t) - \tilde\nabla F(z_{t+1/2}))^\top
        (z_{t+1} - z_{t+1/2})
        & \leq
        \ell\norm{z_t - z_{t+1/2}}\norm{z_{t+1} - z_{t+1/2}} \\
        & \leq
        \frac{\ell}{2}\norm{z_t - z_{t+1/2}}^2 +
        \frac{\ell}{2}\norm{z_{t+1} - z_{t+1/2}}^2.
    \end{align*}
    Therefore, rearranging terms, by the choice of stepsize $\alpha\leq1/\ell$, we have $\forall z\in\cX\times\cY$,
    \begin{equation}
        \begin{split}
            & \tilde\nabla F(z_{t+1/2})^\top (z_{t+1/2} - z) \\
            & \quad \leq
            \frac{1}{2\alpha}\roundBr*{\norm{z_t - z}^2 - \norm{z_{t+1} - z}^2} - \frac{1}{2\alpha}\norm{z_{t+1/2} - z_t}^2 - \frac{1}{2\alpha}\norm{z_{t+1/2} - z_{t+1}}^2 \\
            & \quad \qquad +
            (\tilde\nabla F(z_t) - \tilde\nabla F(z_{t+1/2}))^\top
            (z_{t+1} - z_{t+1/2}) \\
            & \quad \leq
            \frac{1}{2\alpha}\roundBr*{\norm{z_t - z}^2 - \norm{z_{t+1} - z}^2}  - \roundBr*{\frac{1}{2\alpha} - \frac{\ell}{2}}\norm{z_{t+1/2} - z_t}^2 - \roundBr*{\frac{1}{2\alpha} - \frac{\ell}{2}}\norm{z_{t+1/2} - z_{t+1}}^2 \\
            & \quad \leq
            \frac{1}{2\alpha}\roundBr*{\norm{z_t - z}^2 - \norm{z_{t+1} - z}^2}.
        \end{split}
        \label{eq:pf-eg-middle}
    \end{equation}
    Taking summation from $t=0$ to $T-1$, by Lemma \ref{lm:duality}, we have
    \begin{equation*}
        \max_{y\in\cY} F(\bar x_{T+1/2}, y) - \min_{x\in\cX} F(x, \bar y_{T+1/2})\leq \frac{\norm{z_0 - z}^2}{2\alpha T}.
    \end{equation*}
    Since $\norm{z_0-z}^2\leq 2D^2$ and $\alpha=1/\ell$, the proof is complete.
\end{proof}

The following results are motivated from \citet{boob2023optimal}.

\begin{lemma}
    Under Assumption \ref{asp:obj}. Let $z_{t+1}=(x_{t+1}, y_{t+1})$ be obtained through 1-step of EG update \eqref{eq:egvi-det-exact} given $z_t=(x_t, y_t)$, and $z_{t+1}'$ is obtained given $z_t'$. Setting $\alpha\leq1/\ell$, then we have
    \begin{equation*}
        \norm{z_{t+1} - z_{t+1}'} \leq \norm{z_t - z_t'} + 2L\ell^2\alpha^3.
    \end{equation*}
    \label{lm:eg-expansive}
\end{lemma}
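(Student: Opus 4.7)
The plan is to compare each EG iterate $z_{t+1}$ against the ``ideal'' implicit proximal-point iterate $\hat z_{t+1}$ started from the same $z_t$, which is exactly non-expansive for a monotone operator. The extra error $2L\ell^2\alpha^3$ will then come entirely from how well one EG step mimics one step of the proximal point method, and it should naturally come out cubic in $\alpha$.

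First I would introduce the auxiliary iterate $\hat z_{t+1} \in \cX\times\cY$ as the unique solution of the implicit equation $\hat z_{t+1} = \Pi_{\cX\times\cY}(z_t - \alpha \tilde\nabla F(\hat z_{t+1}))$, and analogously $\hat z_{t+1}'$ from $z_t'$. Well-posedness and single-valuedness follow from $\tilde\nabla F + N_{\cX\times\cY}$ being maximal monotone (Lemma~\ref{lm:operator}), so that its resolvent $(I + \alpha(\tilde\nabla F + N_{\cX\times\cY}))^{-1}$ is well-defined and (firmly) non-expansive. This yields the clean contraction $\|\hat z_{t+1} - \hat z_{t+1}'\| \leq \|z_t - z_t'\|$, which is the property we want to transfer onto EG.

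The core of the proof is to show $\|z_{t+1} - \hat z_{t+1}\| \leq L\ell^2\alpha^3$ via a three-step chain. Since $z_t \in \cX\times\cY$ and $\Pi_{\cX\times\cY}$ is non-expansive, $\|\hat z_{t+1} - z_t\| \leq \alpha\|\tilde\nabla F(\hat z_{t+1})\| \leq \alpha L$ by Lemma~\ref{lm:operator}. Comparing $z_{t+1/2} = \Pi_{\cX\times\cY}(z_t - \alpha\tilde\nabla F(z_t))$ with $\hat z_{t+1} = \Pi_{\cX\times\cY}(z_t - \alpha\tilde\nabla F(\hat z_{t+1}))$ and using $\ell$-Lipschitzness of $\tilde\nabla F$ gives
\begin{equation*}
\|z_{t+1/2} - \hat z_{t+1}\| \leq \alpha\|\tilde\nabla F(z_t) - \tilde\nabla F(\hat z_{t+1})\| \leq \alpha\ell\|z_t - \hat z_{t+1}\| \leq \alpha^2\ell L.
\end{equation*}
Repeating the same reasoning once more with $z_{t+1} = \Pi_{\cX\times\cY}(z_t - \alpha\tilde\nabla F(z_{t+1/2}))$ and $\hat z_{t+1}$ yields $\|z_{t+1} - \hat z_{t+1}\| \leq \alpha\ell\|z_{t+1/2} - \hat z_{t+1}\| \leq \alpha^3\ell^2 L$. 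The symmetric bound $\|z_{t+1}' - \hat z_{t+1}'\| \leq \alpha^3\ell^2 L$ holds by the same argument.

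Combining everything through the triangle inequality gives
\begin{equation*}
\|z_{t+1} - z_{t+1}'\| \leq \|z_{t+1} - \hat z_{t+1}\| + \|\hat z_{t+1} - \hat z_{t+1}'\| + \|\hat z_{t+1}' - z_{t+1}'\| \leq \|z_t - z_t'\| + 2L\ell^2\alpha^3,
\end{equation*}
as desired. The main obstacle I anticipate is cleanly justifying the existence and non-expansiveness of the implicit proximal-point iterate in the constrained setting via the resolvent of a maximal monotone operator; once that machinery is in place, the remaining work is just iterated application of projection non-expansiveness and the Lipschitz bound. The hypothesis $\alpha \leq 1/\ell$ does not appear to be strictly used in the displayed chain above, but it is the natural regime in which the cubic error is small and compatible with the EG convergence bounds in Lemma~\ref{lm:eg-converge}.
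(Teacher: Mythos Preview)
Your proposal is correct and follows essentially the same approach as the paper's proof: the paper's fixed point $u_t$ of the map $P_{z_t}(z)=\Pi_{\cX\times\cY}(z_t-\alpha\tilde\nabla F(z))$ is exactly your implicit proximal-point iterate $\hat z_{t+1}$, and the cubic bound $\|z_{t+1}-u_t\|\leq(\alpha\ell)^2\|z_t-u_t\|\leq\alpha^3\ell^2 L$ together with the non-expansiveness $\|u_t-u_t'\|\leq\|z_t-z_t'\|$ are obtained by the same chain of projection/Lipschitz estimates you outline. The only notable difference is the justification for existence: the paper uses that $P_{z_t}$ is $\alpha\ell$-Lipschitz (hence nonexpansive when $\alpha\leq 1/\ell$) and invokes a fixed-point theorem on the bounded convex set, whereas you appeal directly to the resolvent of the maximal monotone operator $\tilde\nabla F+N_{\cX\times\cY}$, which indeed works for any $\alpha>0$ and explains why the hypothesis $\alpha\leq 1/\ell$ is not strictly needed in your argument.
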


\begin{proof}
    For any $z=(x,y)\in\cX\times\cY$, we define an operator $P_{z_t}(\cdot): \cX\times\cY\rightarrow\cX\times\cY$ as $P_{z_t}(z) = \Pi_{\cX\times\cY}(z_t - \alpha\tilde\nabla F(z))$, and the EG updates can be written as $z_{t+1}=P_{z_t}(P_{z_t}(z_t))$. When the stepsize $\alpha\leq1/\ell$, the operator $P_{z_t}(\cdot)$ is nonexpansive, i.e., $\forall z_1, z_2\in\cX\times\cY$,
    \begin{align*}
        \norm{P_{z_t}(z_1) - P_{z_t}(z_2)}
        & \leq
        \alpha\norm{\tilde\nabla F(z_1) - \tilde\nabla F(z_2)} \\
        & \leq
        \alpha\ell\norm{z_1 - z_2} \\
        & \leq
        \norm{z_1 - z_2}.
    \end{align*}
    Since the domain $\cX\times\cY$ is a nonempty bounded closed convex set, by Theorem 4.19 in \citet{bauschke2011convex}, the nonexpansive operator $P_{z_t}(\cdot)$ admits fixed points. Denote one fixed point as $u_t\in\cX\times\cY$ such that $u_t=\Pi_{\cX\times\cY}(z_t - \alpha\tilde\nabla F(u_t))=P_{z_t}(u_t)$. The nonexpansiveness of $P_{z_t}(\cdot)$ implies
    \begin{equation}
        \begin{split}
            \norm{z_{t+1} - u_t}
            & =
            \norm{P_{z_t}(P_{z_t}(z_t)) - P_{z_t}(P_{z_t}(u_t))} \\
            & \leq
            (\alpha\ell)^2\norm{z_t - u_t} \\
            & \leq
            \alpha^2\ell^2\cdot\alpha\norm{\tilde\nabla F(u_t)} \\
            & \leq
            \alpha^3\ell^2 L.
        \end{split}
        \label{eq:pf-lm-eg-expansive-1st}
    \end{equation}
    The same holds true for $z_{t+1}'$ and $u_t'=P_{z_t'}(u_t')$ defined for $z_t'$. As a result, we can obtain that
    \begin{equation}
        \begin{split}
            \norm{z_{t+1}-z_{t+1}'}
            & \leq
            \norm{z_{t+1} - u_t} + \norm{u_t - u_t'} + \norm{u_t' - z_{t+1}'} \\
            & \leq
            \norm{u_t - u_t'} + 2L\ell^2\alpha^3.
        \end{split}
        \label{eq:pf-mid-step-eg-expansive}
    \end{equation}
    By optimality conditions of $u_t=\Pi_{\cX\times\cY}(z_t - \alpha\tilde\nabla F(u_t))$ and $u_t'=\Pi_{\cX\times\cY}(z_t' - \alpha\tilde\nabla F(u_t'))$, we obtain that for any $z, z'\in\cX\times\cY$,
    \begin{align*}
        & (u_t - z_t + \alpha\tilde\nabla F(u_t))^\top (z - u_t) \geq 0, \\
        & (u_t' - z_t' + \alpha\tilde\nabla F(u_t'))^\top (z' - u_t') \geq 0.
    \end{align*}
    Taking $z=u_t'$ and $z'=u_t$ and using the fact that $\tilde\nabla F$ is monotone by Lemma \ref{lm:operator}, we obtain that
    \begin{align*}
        \norm{u_t - u_t'}^2
        & \leq
        (u_t - u_t')^\top(z_t - z_t') - \alpha (\tilde\nabla F(u_t) - \tilde\nabla F(u_t'))^\top(u_t - u_t') \\
        & \leq
        \norm{u_t - u_t} \norm{z_t - z_t'}.
    \end{align*}
    Combined with \eqref{eq:pf-mid-step-eg-expansive}, the proof is complete since $\norm{u_t - u_t'}\leq\norm{z_t - z_t'}$.
\end{proof}

\begin{remark}
    We can alternatively derive the relation between $\norm{z_{t+1} - z_{t+1}'}$ and $\norm{z_t - z_t'}$ as follows:
    \begin{align*}
        \norm{z_{t+1} - z_{t+1}'}^2
        & \leq
        \norm{z_t - z_t'}^2 - 2\alpha(z_t - z_t')^\top(\tilde\nabla F(z_{t+1/2}) - \tilde\nabla F(z_{t+1/2}')) + \alpha^2\norm{\tilde\nabla F(z_{t+1/2}) - \tilde\nabla F(z_{t+1/2}')}^2 \\
        & \leq
        \norm{z_t - z_t'}^2 + 2\alpha\ell\norm{z_t - z_t'}\norm{z_{t+1/2} - z_{t+1/2}'} + \alpha^2\ell^2\norm{z_{t+1/2} - z_{t+1/2}'}^2 \\
        & \leq
        (1 + 2\alpha\ell\sqrt{1+\alpha^2\ell^2} + \alpha^2\ell^2(1+\alpha^2\ell^2))\norm{z_t - z_t'}^2 \\
        & =
        \left(1 + \alpha\ell\sqrt{1 + \alpha^2\ell^2}\right)^2\norm{z_t - z_t'}^2.
    \end{align*}
    Here, we use Lemma \ref{lm:operator} and \ref{lm:gda-expansive}. The above results will lead to reproducibility that grows with $\cO(e^T)$, which is similar to the results of AGD for the minimization setting \citep{attia2021algorithmic}.
    \label{rmk:eg-exp}
\end{remark}

\subsubsection{Inexact Initialization Oracle} \label{sec:app-eg-init}

\begin{theorem}[Restate Theorem \ref{thm:eg}, part $(i)$]
    Under Assumptions \ref{asp:obj}. The average iterate $(\bar x_{T+1/2}, \bar y_{T+1/2})$ of EG satisfies $\max_{y\in\cY} F(\bar x_{T+1/2}, y) - \min_{x\in\cX} F(x, \bar y_{T+1/2}) \leq \cO(\epsilon)$ with complexity $T=\cO(1/\epsilon)$ if setting stepsize $\alpha=1/\ell$. Furthermore, the reproducibility, i.e., $(\epsilon, \delta)$-deviation between outputs of two independent runs of EG given different initialization is $\norm{\bar x_{T+1/2} - \bar x_{T+1/2}'}^2 + \norm{\bar y_{T+1/2} - \bar y_{T+1/2}'}^2 \leq \cO(\min\{\delta^2e^{\nicefrac{1}{\epsilon}}, \delta^2 + 1/\epsilon^2, D^2\})$.
\end{theorem}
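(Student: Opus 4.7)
The convergence claim is a direct corollary of Lemma \ref{lm:eg-converge}: plugging $\alpha = 1/\ell$ gives a duality-gap bound of $\ell D^2/T$, so $T = \lceil \ell D^2 / \epsilon\rceil$ iterations suffice. The entire work lies in the reproducibility bound, where we must establish the minimum of three different estimates $\delta^2 e^{\nicefrac{1}{\epsilon}}$, $\delta^2 + 1/\epsilon^2$, and $D^2$. The plan is to prove each of the three bounds separately and then take the pointwise minimum.

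The $\cO(D^2)$ bound is immediate from Assumption \ref{asp:obj}, since $\bar x_{T+1/2}, \bar x_{T+1/2}' \in \cX$ and $\bar y_{T+1/2}, \bar y_{T+1/2}' \in \cY$ both have diameter $D$, so $\|\bar x_{T+1/2} - \bar x_{T+1/2}'\|^2 + \|\bar y_{T+1/2} - \bar y_{T+1/2}'\|^2 \leq 2D^2$. The exponential bound $\cO(\delta^2 e^{\nicefrac{1}{\epsilon}})$ follows essentially from Remark \ref{rmk:eg-exp}: for two trajectories $\{z_t\}$ and $\{z_t'\}$ driven by the same stepsize, the calculation in that remark gives $\|z_{t+1} - z_{t+1}'\|^2 \leq (1 + \alpha\ell\sqrt{1+\alpha^2\ell^2})^2 \|z_t - z_t'\|^2$. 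With $\alpha = 1/\ell$ and $\|z_0 - z_0'\|^2 \leq \delta^2$, iterating yields $\|z_t - z_t'\|^2 \leq (1+\sqrt{2})^{2t}\delta^2$. A short projection step then shows $\|z_{t+1/2} - z_{t+1/2}'\| \leq (1+\alpha\ell)\|z_t - z_t'\| = 2\|z_t - z_t'\|$, and Jensen's inequality applied to the average iterate delivers the bound $\cO(\delta^2 e^{cT}) = \cO(\delta^2 e^{\nicefrac{1}{\epsilon}})$ after absorbing constants.

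The most delicate bound is the $\cO(\delta^2 + 1/\epsilon^2)$ estimate, which is where Lemma \ref{lm:eg-expansive} becomes essential. Applying that lemma iteratively gives the linear growth $\|z_t - z_t'\| \leq \|z_0 - z_0'\| + 2L\ell^2\alpha^3 t \leq \delta + (2L/\ell)\, t$ under $\alpha = 1/\ell$. Combining with the single-step expansion $\|z_{t+1/2} - z_{t+1/2}'\| \leq (1+\alpha\ell)\|z_t - z_t'\| = 2\|z_t - z_t'\|$, followed by Jensen on the average and using $(1/T)\sum_{t=0}^{T-1} t^2 \leq T^2/3$, gives
\begin{equation*}
\|\bar x_{T+1/2} - \bar x_{T+1/2}'\|^2 + \|\bar y_{T+1/2} - \bar y_{T+1/2}'\|^2 \leq \cO(\delta^2) + \cO(L^2 T^2/\ell^2) = \cO(\delta^2 + 1/\epsilon^2)
\end{equation*}
since $T = \cO(1/\epsilon)$. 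Taking the minimum of the three bounds completes the proof.

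\textbf{Main obstacle.} The routine step of iterating a contraction or expansion inequality is straightforward; the subtle point is that the naive monotonicity-plus-Lipschitz argument (Remark \ref{rmk:eg-exp}) only yields exponential-in-$T$ growth, whereas the improved additive bound requires the fixed-point trick in Lemma \ref{lm:eg-expansive} that exploits nonexpansiveness of the map $P_{z_t}(\cdot) = \Pi_{\cX\times\cY}(z_t - \alpha\tilde\nabla F(\cdot))$ when $\alpha\ell \leq 1$. Deciding which of these two bounds dominates depends on the regime of $\delta$ versus $\epsilon$, which is why the theorem's guarantee is naturally stated as a minimum over the three estimates.
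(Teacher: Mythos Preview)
Your proposal is correct and follows essentially the same approach as the paper's proof: the convergence part invokes Lemma~\ref{lm:eg-converge}, the $\cO(D^2)$ bound is the trivial diameter bound, the exponential estimate comes from Remark~\ref{rmk:eg-exp} passed through the half-step and Jensen, and the $\cO(\delta^2+1/\epsilon^2)$ estimate iterates Lemma~\ref{lm:eg-expansive} to get linear growth $\|z_t-z_t'\|\le\delta+(2L/\ell)t$ before averaging. The only cosmetic difference is that the paper controls the half-step via Lemma~\ref{lm:gda-expansive} (giving factor $\sqrt{1+\alpha^2\ell^2}=\sqrt{2}$) whereas you use the triangle inequality (giving factor $1+\alpha\ell=2$); both are valid and yield the same order.
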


\begin{proof}
    The convergence part directly follows from Lemma \ref{lm:eg-converge} with $T=c/\epsilon$ for some constant $c\geq\ell D^2$. For reproducibility, by Lemma \ref{lm:gda-expansive}, \ref{lm:eg-expansive} and the stepsize $\alpha=1/\ell$, we have that for $t=1,2,\cdots, T-1$,
    \begin{align*}
        \norm{x_{t+1/2} - x_{t+1/2}'}^2 + \norm{y_{t+1/2} - y_{t+1/2}'}^2
        & \leq
        (1+\alpha^2\ell^2)\left(\norm{x_t - x_t'}^2 + \norm{y_t - y_t'}^2\right) \\
        & \leq
        2(\norm{z_0 - z_0'} + 2L\ell^2\alpha^3 t)^2 \\
        & \leq
        2\left(\delta + \frac{2L}{\ell}t\right)^2.
    \end{align*}
    The above also holds for $t=0$. Therefore, by Jensen's inequality, we obtain
    \begin{align*}
        \norm{\bar x_{T+1/2} - \bar x_{T+1/2}'}^2 + \norm{\bar y_{T+1/2} - \bar y_{T+1/2}'}^2
        & \leq
        \frac{1}{T}\sum_{t=0}^{T-1} \left(\norm{x_{t+1/2} - x_{t+1/2}'}^2 + \norm{y_{t+1/2} - y_{t+1/2}'}^2\right) \\
        & \leq
        \frac{2}{T}\sum_{t=0}^{T-1} \left(\delta + \frac{2L}{\ell}t\right)^2 \\
        & \leq
        4\delta^2 + \frac{16L^2}{3\ell^2}T^2.
    \end{align*}
    Alternatively, by Remark \ref{rmk:eg-exp}, we know that $\norm{z_{t+1/2} - z_{t+1/2}'}^2 \leq 2(1+\sqrt{2})^{2t}\delta^2$, and thus the reproducibility is $\norm{\bar x_{T+1/2} - \bar x_{T+1/2}'}^2 + \norm{\bar y_{T+1/2} - \bar y_{T+1/2}'}^2 \leq \cO(e^T\delta^2)$. The proof is complete by taking the minimum between the two results and replacing $T$ with $c/\epsilon$.
\end{proof}

\subsubsection{Inexact Deterministic Gradient Oracle} \label{sec:app-eg-grad}

When only given inexact gradient $(G_x(x_t, y_t), G_y(x_t, y_t))$, the updates of EG becomes
\begin{align*}
    & z_{t+1/2} = \Pi_{\cX\times\cY}(z_t - \alpha \tilde G(z_t)), \\
    & z_{t+1} = \Pi_{\cX\times\cY}(z_t - \alpha \tilde G(z_{t+1/2})),
\end{align*}
where exact gradients $\tilde\nabla F(z_t)$ in \eqref{eq:egvi-det-exact} are replaced by $\tilde G(z_t)=(G_x(x_t,y_t), -G_y(x_t,y
_t))$.

\begin{theorem}[Restate Theorem \ref{thm:eg}, part $(ii)$]
    Under Assumptions \ref{asp:obj}. Given an inexact deterministic gradient oracle in Definition \ref{def:oracle:minmax} with $\delta\leq\cO(\epsilon)$. The average iterate $(\bar x_{T+1/2}, \bar y_{T+1/2})$ of EG satisfies $\max_{y\in\cY} F(\bar x_{T+1/2}, y) - \min_{x\in\cX} F(x, \bar y_{T+1/2}) \leq \cO(\epsilon)$ with complexity $T=\cO(1/\epsilon)$ if setting stepsize $\alpha=1/\ell$. Furthermore, the reproducibility is $\cO(\min\{\delta^2e^{\nicefrac{1}{\epsilon}}, 1/\epsilon^2, D^2\})$.
\end{theorem}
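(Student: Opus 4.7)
\textbf{Proof Plan for Theorem \ref{thm:eg}, part $(ii)$.}

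For the convergence part, my plan is to mirror the proof of Lemma \ref{lm:eg-converge}, replacing the exact operator $\tilde\nabla F$ by the inexact operator $\tilde G$ and carrying the inexactness through every inequality. Writing $\tilde G(z) = \tilde\nabla F(z) + e(z)$ with $\|e(z)\|\le\delta$, the derivation up to \eqref{eq:pf-eg-converge-prelim} picks up an additive cross term of the form $2\alpha\, e(z_{t+1/2})^\top(z-z_{t+1/2}) + 2\alpha(e(z_t)-e(z_{t+1/2}))^\top(z_{t+1}-z_{t+1/2})$, which is bounded by $\cO(\alpha\delta D)$ after using Cauchy--Schwarz and the bounded diameter. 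Summing from $t=0$ to $T-1$ and invoking Lemma \ref{lm:duality}, the duality gap is at most $D^2/(\alpha T) + \cO(\delta D)$. With $\alpha=1/\ell$, $T=\cO(1/\epsilon)$, and $\delta\le\cO(\epsilon)$, this gives an $\cO(\epsilon)$-saddle point.

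For reproducibility, there are three bounds to establish, and I plan to derive each one independently and then take the minimum. The $D^2$ bound is trivial from Assumption \ref{asp:obj}. The exponential bound $\cO(\delta^2 e^{1/\epsilon})$ uses the expansive recursion of Remark \ref{rmk:eg-exp}: letting $\tilde G, \tilde G'$ be the two runs' inexact oracles, one shows by triangle inequality and the argument in the remark that
\begin{equation*}
\|\tilde z_{t+1}-\tilde z_{t+1}'\| \le (1+\alpha\ell\sqrt{1+\alpha^2\ell^2})\|\tilde z_t-\tilde z_t'\| + C\alpha\delta,
\end{equation*}
where the extra $C\alpha\delta$ absorbs the discrepancy $\tilde G-\tilde\nabla F$ and $\tilde G'-\tilde\nabla F$ at both the midpoint and full-step. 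Since $z_0=z_0'$, unrolling gives $\|\tilde z_T-\tilde z_T'\|^2 \le \cO(\delta^2 e^T)$, and averaging preserves this order; with $T=\cO(1/\epsilon)$ we get $\cO(\delta^2 e^{1/\epsilon})$.

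The $\cO(1/\epsilon^2)$ bound is the most delicate and uses the fixed-point argument of Lemma \ref{lm:eg-expansive}. For each $t$, let $u_t$ be the fixed point of the \emph{exact} operator $P_{z_t}(\cdot)=\Pi_{\cX\times\cY}(z_t-\alpha\tilde\nabla F(\cdot))$, and let $z_{t+1}^{\mathrm{ex}}$ denote the hypothetical EG iterate with exact gradients starting from $z_t$. By nonexpansiveness of projection and $\ell$-smoothness, $\|\tilde z_{t+1}-z_{t+1}^{\mathrm{ex}}\|\le \alpha\delta(1+\alpha\ell)$, while Lemma \ref{lm:eg-expansive} yields $\|z_{t+1}^{\mathrm{ex}}-u_t\|\le\alpha^3\ell^2 L$. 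Combined with $\|u_t-u_t'\|\le\|z_t-z_t'\|$ (from monotonicity, as in the lemma's proof), this produces the one-step recursion
\begin{equation*}
\|\tilde z_{t+1}-\tilde z_{t+1}'\| \le \|\tilde z_t-\tilde z_t'\| + 2\alpha^3\ell^2 L + 2\alpha\delta(1+\alpha\ell).
\end{equation*}
Iterating with $\tilde z_0=\tilde z_0'$ gives $\|\tilde z_T-\tilde z_T'\|\le \cO(T/\ell + T\delta/\ell)$; squaring and using $T=\cO(1/\epsilon)$, $\delta\le\cO(\epsilon)$ yields $\cO(1/\epsilon^2)$, which then transfers to the averaged iterate by Jensen.

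The main obstacle I anticipate is the fixed-point argument for the $\cO(1/\epsilon^2)$ bound: one must carefully verify that the fixed-point $u_t$ can still be defined using exact gradients even though the algorithm never accesses them, and then control the compounding of (i) the exact-to-inexact gap per step, (ii) the double application of the nonexpansive operator in EG, and (iii) the monotone contraction between $u_t$ and $u_t'$. The key insight making this work is that only the inexactness contributes per-step error while the $\|u_t-u_t'\|\le\|z_t-z_t'\|$ bound keeps the recursion additive rather than multiplicative, preventing exponential blowup.
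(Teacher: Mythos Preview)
Your proposal is correct and follows essentially the same approach as the paper. The only cosmetic difference is in the $\cO(1/\epsilon^2)$ bound: you introduce an intermediate ``exact'' iterate $z_{t+1}^{\mathrm{ex}}$ and decompose $\|z_{t+1}-u_t\|\le\|z_{t+1}-z_{t+1}^{\mathrm{ex}}\|+\|z_{t+1}^{\mathrm{ex}}-u_t\|$, whereas the paper bounds $\|z_{t+1}-u_t\|$ directly by peeling off the inexactness at each of the two EG substeps; both routes yield the same additive recursion $\|z_{t+1}-z_{t+1}'\|\le\|z_t-z_t'\|+2\alpha^3\ell^2 L+2\alpha\delta(1+\alpha\ell)$. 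One small point you glossed over: the output is the average of the \emph{half-steps} $z_{t+1/2}$, so before applying Jensen you need to relate $\|z_{t+1/2}-z_{t+1/2}'\|$ to $\|z_t-z_t'\|$ via the one-step GDA expansion (as in \eqref{eq:pf-gda-inexact}), which the paper does explicitly.
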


\begin{proof}
    Let $\Delta(z_t)=\tilde G(z_t) - \tilde\nabla F(z_t)$. We know $\norm{\Delta(z_t)}\leq\delta$ by Definition \ref{def:oracle:minmax}. Using \eqref{eq:pf-eg-converge-prelim} in the proof of Lemma \ref{lm:eg-converge}, we have that $\forall z\in\cX\times\cY$,
    \begin{equation}
        \begin{split}
            \norm{z_{t+1} - z}^2
            & \leq
            \norm{z_t - z}^2 - \norm{z_{t+1/2} - z_t}^2 -
            \norm{z_{t+1/2} - z_{t+1}}^2 +
            2\alpha \tilde G(z_{t+1/2})^\top (z - z_{t+1/2}) \\
            & \qquad +
            2\alpha (\tilde G(z_t) - \tilde G(z_{t+1/2}))^\top
            (z_{t+1} - z_{t+1/2}) \\
            & =
            \norm{z_t - z}^2 - \norm{z_{t+1/2} - z_t}^2 -
            \norm{z_{t+1/2} - z_{t+1}}^2 +
            2\alpha \tilde \nabla F(z_{t+1/2})^\top (z - z_{t+1/2}) \\
            & \qquad +
            2\alpha (\tilde \nabla F(z_t) - \tilde \nabla F(z_{t+1/2}))^\top
            (z_{t+1} - z_{t+1/2}) +
            2\alpha \Delta(z_{t+1/2})^\top (z - z_{t+1/2}) \\
            & \qquad +
            2\alpha (\Delta(z_t) - \Delta(z_{t+1/2}))^\top
            (z_{t+1} - z_{t+1/2}) \\
            & \leq
            \norm{z_t - z}^2 - \norm{z_{t+1/2} - z_t}^2 -
            \norm{z_{t+1/2} - z_{t+1}}^2 +
            2\alpha \tilde \nabla F(z_{t+1/2})^\top (z - z_{t+1/2}) \\
            & \qquad +
            2\alpha (\tilde \nabla F(z_t) - \tilde \nabla F(z_{t+1/2}))^\top
            (z_{t+1} - z_{t+1/2}) + 6\sqrt{2}\alpha\delta D.
        \end{split}
        \label{eq:pf-eg-inexact-grad}
    \end{equation}
    The above is the same as \eqref{eq:pf-eg-converge-prelim} up to an additional error in $\cO(\delta)$. Following the same proof after \eqref{eq:pf-eg-converge-prelim}, with $\alpha=1/\ell$, we obtain that
    \begin{equation*}
        \max_{y\in\cY} F(\bar x_{T+1/2}, y) -
        \min_{x\in\cX} F(x, \bar y_{T+1/2}) \leq \frac{\ell D^2}{T} + 3\sqrt{2}\delta D.
    \end{equation*}
    When $\delta\leq\epsilon/(6\sqrt{2} D)$ and $T=c/\epsilon$ for some constant $c\geq2\ell D^2/\epsilon$, we get $\epsilon$-saddle point.
    
    We then show the reproducibility guarantee. Let $u_t=\Pi_{\cX\times\cY}(z_t - \alpha\tilde\nabla F(u_t))$ be the same as in the proof of Lemma \ref{lm:eg-expansive}. Similarly to \eqref{eq:pf-lm-eg-expansive-1st}, we have that
    \begin{align*}
        \norm{z_{t+1} - u_t}
        & \leq
        \alpha\norm{\tilde G(z_{t+1/2}) - \tilde\nabla F(u_t)} \\
        & \leq
        \alpha\norm{\tilde\nabla F(z_{t+1/2}) - \tilde\nabla F(u_t)} + \alpha\norm{\tilde G(z_{t+1/2}) - \tilde\nabla F(z_{t+1/2})} \\
        & \leq
        \alpha\ell\norm{z_{t+1/2} - u_t} + \alpha\delta \\
        & \leq
        \alpha^2\ell\norm{\tilde G(z_t) - \tilde\nabla F(u_t)} + \alpha\delta \\
        & \leq
        \alpha^2\ell^2\norm{z_t - u_t} + (1+\alpha\ell)\alpha\delta \\
        & \leq
        \alpha^3\ell^2L + (1+\alpha\ell)\alpha\delta.
    \end{align*}
    As a result, the same as \eqref{eq:pf-mid-step-eg-expansive}, since $\alpha=1/\ell$, we can obtain that $\forall t=0,1,\cdots, T-1$,
    \begin{align*}
        \norm{z_t-z_t'}
        & \leq
        \norm{z_{t-1} - z_{t-1}'} + 2\alpha^3\ell^2L + 2(1+\alpha\ell)\alpha\delta \\
        & \leq
        t(2\alpha^3\ell^2L + 2(1+\alpha\ell)\alpha\delta) \\
        & \leq
        \frac{2t}{\ell}(L+2\delta).
    \end{align*}
    Therefore, by Jensen's inequality and \eqref{eq:pf-gda-inexact} in Section \ref{sec-app:gda-inexact} for the guarantee of GDA, we know
    \begin{align*}
        \norm{\bar x_{T+1/2} - \bar x_{T+1/2}'}^2 + \norm{\bar y_{T+1/2} - \bar y_{T+1/2}'}^2
        & \leq
        \frac{1}{T}\sum_{t=0}^{T-1} \left(\norm{x_{t+1/2} - x_{t+1/2}'}^2 + \norm{y_{t+1/2} - y_{t+1/2}'}^2\right) \\
        & \leq
        \frac{1}{T}\sum_{t=0}^{T-1} 2\left((1+\alpha^2\ell^2)\norm{z_t-z_t'}^2 + 4\alpha^2\delta^2\right) \\
        & \leq
        \frac{1}{T}\sum_{t=0}^{T-1} \frac{8}{\ell^2} \left(2t^2(L+2\delta)^2 + \delta^2\right) \\
        & \leq
        \frac{128}{3\ell^2}\delta^2 T^2 + \frac{32L^2}{3\ell^2}T^2 + \frac{8}{\ell^2}\delta^2.
    \end{align*}
    Note that $T=c/\epsilon$ and $\delta\leq\cO(\epsilon)$. Thus the reproducibility is $\cO(1/\epsilon^2)$. Alternatively, by Remark \ref{rmk:eg-exp} and similarly to \eqref{eq:pf-gda-inexact}, we have that
    \begin{align*}
        \norm{z_{t+1} - z_{t+1}'}
        & \leq
        \sqrt{\norm{z_t - z_t'}^2 + 2\alpha\ell\norm{z_t - z_t'}\norm{z_{t+1/2} - z_{t+1/2}'} + \alpha^2\ell^2\norm{z_{t+1/2} - z_{t+1/2}'}} + 2\alpha\delta \\
        & \leq
        \sqrt{\left(1+\alpha\ell\sqrt{1+\alpha^2\ell^2}\right)^2\norm{z_t - z_t'}^2 + 4\alpha^2\ell\delta \left(1+\alpha\ell\sqrt{1+\alpha^2\ell^2}\right) \norm{z_t - z_t'} + 4\alpha^4\ell^2\delta^2} + 2\alpha\delta \\
        & =
        \left(1+\alpha\ell\sqrt{1+\alpha^2\ell^2}\right)\norm{z_t - z_t'} + 2\alpha\delta(1 + \alpha\ell) \\
        & =
        (1+\sqrt{2})\norm{z_t - z_t'} + \frac{4\delta}{\ell}.
    \end{align*}
    Thus $\norm{z_{t+1/2}-z_{t+1/2}'}\leq\sqrt{2}\norm{z_t-z_t'}+2\delta/\ell\leq\cO(e^T\delta/\ell)$ and the reproducibility is $\cO(\delta^2e^{\nicefrac{1}{\epsilon}})$. The proof is complete by taking the minimum between the two results.
\end{proof}

\subsubsection{More Discussions} \label{sec-app:eg-discuss}

In this section, we show that Extragradient can also be optimally reproducible by a different selection of parameters. Although it will suffer from a sub-optimal convergence rate $\cO(1/\epsilon^{3/2})$ instead of $\cO(1/\epsilon)$, this is still an improvement on the $\cO(1/\epsilon^2)$ rate of GDA.

\begin{theorem}
    Under Assumptions \ref{asp:obj}. The average iterate $(\bar x_{T+1/2}, \bar y_{T+1/2})$ of Extragradient satisfies that $\max_{y\in\cY} F(\bar x_{T+1/2}, y) - \min_{x\in\cX} F(x, \bar y_{T+1/2}) \leq \cO(\epsilon)$ with complexity $T=\cO(1/(\delta^{1/2}\epsilon^{3/2}))$ if setting stepsize $\alpha=\min\{1/\ell, (\delta/(2\ell^2 T))^{1/3}\}$. The reproducibility is $\cO(\delta^2)$.
\end{theorem}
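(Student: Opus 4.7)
The plan is to reuse the two building blocks already established for EG — the $\cO(1/(\alpha T))$ duality-gap bound (Lemma \ref{lm:eg-converge}) and the additive expansiveness bound $\norm{z_{t+1}-z_{t+1}'}\leq \norm{z_t-z_t'}+2L\ell^2\alpha^3$ (Lemma \ref{lm:eg-expansive}) — and to exploit the fact that the per-step perturbation caused by EG is $\cO(\alpha^3)$ rather than multiplicative. This cubic dependence on $\alpha$ is the key: by shrinking $\alpha$ aggressively, the accumulated irreproducibility over $T$ steps can be driven down to $\cO(\delta^2)$, at the price of a slower duality-gap decay. I would not need any new lemma.

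Inspecting the proof of Lemma \ref{lm:eg-converge}, the bound $\max_{y}F(\bar x_{T+1/2},y)-\min_x F(x,\bar y_{T+1/2})\leq D^2/(\alpha T)$ actually holds for any $\alpha\leq 1/\ell$, not only $\alpha=1/\ell$. So the convergence part reduces to ensuring $\alpha T\gtrsim D^2/\epsilon$. For reproducibility, take the two inexact-initialized trajectories $\{z_t\}$ and $\{z_t'\}$ with $\norm{z_0-z_0'}\leq \delta/2$. Unrolling Lemma \ref{lm:eg-expansive} gives $\norm{z_t-z_t'}\leq \delta/2+2L\ell^2\alpha^3\, t$, and because $\alpha\leq 1/\ell$, the GDA-style expansiveness in Lemma \ref{lm:gda-expansive} (applied to the extrapolation half-step) yields $\norm{z_{t+1/2}-z_{t+1/2}'}^2\leq 2\norm{z_t-z_t'}^2$. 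Jensen's inequality on the average iterate then produces
\begin{equation*}
\norm{\bar x_{T+1/2}-\bar x_{T+1/2}'}^2+\norm{\bar y_{T+1/2}-\bar y_{T+1/2}'}^2 \leq \frac{2}{T}\sum_{t=0}^{T-1}\bigl(\tfrac{\delta}{2}+2L\ell^2\alpha^3 t\bigr)^2 \leq \cO\bigl(\delta^2+L^2\ell^4\alpha^6 T^2\bigr).
\end{equation*}

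To make this $\cO(\delta^2)$, I would demand $L^2\ell^4\alpha^6 T^2\lesssim \delta^2$, i.e.\ $\alpha^3\lesssim \delta/(L\ell^2 T)$, which is exactly what the stated choice $\alpha=\min\{1/\ell,(\delta/(2\ell^2 T))^{1/3}\}$ achieves (absorbing $L$ into constants). Plugging this $\alpha$ into the convergence bound gives $D^2/(\alpha T)=\cO\bigl(D^2(\ell^2/\delta)^{1/3}/T^{2/3}\bigr)$, so setting $T=\cO\bigl(\ell D^3/(\delta^{1/2}\epsilon^{3/2})\bigr)$ produces an $\cO(\epsilon)$-saddle point. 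I should also verify the boundary case where the $1/\ell$ arm of the min is active: this happens when $\delta$ is large relative to $1/T$, in which case $\alpha=1/\ell$ recovers the original $\cO(\ell D^2/T)$ convergence and the reproducibility bound reduces to $\cO(\delta^2+L^2/\ell^2\cdot T^2/\ell^6)$, which is still $\cO(\delta^2)$ in the parameter regime of interest.

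I don't expect a genuine obstacle here: both ingredients are already in the paper, and the whole exercise is a single parameter-balancing computation. The one subtle point to be careful about is tracking the constants $L$ and the $(1+\alpha^2\ell^2)\leq 2$ factor from passing between $z_t$ and $z_{t+1/2}$, to make sure the final reproducibility really is $\cO(\delta^2)$ and not $\cO(\delta^2)+$ lower-order terms that could dominate when $\delta\to 0$.
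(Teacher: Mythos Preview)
Your proposal is correct and follows essentially the same route as the paper's proof: reuse the duality-gap bound $D^2/(\alpha T)$ from Lemma~\ref{lm:eg-converge} (valid for any $\alpha\leq 1/\ell$), unroll Lemma~\ref{lm:eg-expansive} to get $\norm{z_t-z_t'}\leq \delta+2L\ell^2\alpha^3 t$, pass to the half-step via the $(1+\alpha^2\ell^2)\leq 2$ factor, average, and balance $\alpha^3 T$ against $\delta/\ell^2$. The only cosmetic slip is that the initial distance should be $\norm{z_0-z_0'}\leq \delta$ (not $\delta/2$), and the paper handles the two arms of the $\min$ in one line via $1/\alpha\leq \ell+(2\ell^2 T/\delta)^{1/3}$ rather than a case split, but these do not affect the argument.
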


\begin{proof}
    The same as Section \ref{sec:app-eg-init}, by the choice of stepsize $\alpha$ such that $\alpha^3 T\leq \delta/2\ell^2$, we obtain
    \begin{align*}
        \norm{\bar x_{T+1/2} - \bar x_{T+1/2}'}^2 + \norm{\bar y_{T+1/2} - \bar y_{T+1/2}'}^2
        & \leq
        \frac{2}{T}\sum_{t=0}^{T-1} \left(\delta + 2L\ell^2\alpha^3 t\right)^2 \\
        & \leq
        4\delta^2 + 4(2L\ell^2\alpha^3T)^2 \\
        & \leq
        4(L^2+1)\delta^2.
    \end{align*}
    By Lemma \ref{lm:eg-converge}, when the stepsize $\alpha\leq1/\ell$, we have that
    \begin{align*}
        \max_{y\in\cY} F(\bar x_{T+1/2}, y) - \min_{x\in\cX} F(x, \bar y_{T+1/2})
        & \leq
        \frac{D^2}{\alpha T} \\
        & \leq
        \frac{\ell D^2}{T} + \frac{D^2(2\ell^2/\delta)^{1/3}}{T^{2/3}}.
    \end{align*}
    This means a $\cO(1/(\delta^{1/2}\epsilon^{3/2}))$ convergence rate with reproducibility $\cO(\delta^2)$. In the case $\delta=\cO(1)$, the gradient complexity is $\cO(1/\epsilon^{3/2})$.
\end{proof}

\begin{theorem}
    Under Assumptions \ref{asp:obj}. Given an inexact deterministic gradient oracle in Definition \ref{def:oracle:minmax} with $\delta\leq\cO(\epsilon)$. The average iterate $(\bar x_{T+1/2}, \bar y_{T+1/2})$ of EG satisfies $\max_{y\in\cY} F(\bar x_{T+1/2}, y) - \min_{x\in\cX} F(x, \bar y_{T+1/2}) \leq \cO(\epsilon)$ with complexity $T=\cO(1/(\epsilon\sqrt{\delta}))$ if setting stepsize $\alpha=\min\{1/\ell, (\delta/(2\ell^2))^{1/2}\}$. The reproducibility is $\cO(\delta^2/\epsilon^2)$.
\end{theorem}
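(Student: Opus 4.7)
The plan is to mirror the two previous theorems and reuse the inexact-gradient machinery already developed for Theorem~\ref{thm:eg}(ii) and Lemma~\ref{lm:eg-expansive}, but tune the stepsize $\alpha$ more conservatively so as to suppress the $\alpha^3\ell^2 L$ drift term that is responsible for the suboptimal reproducibility at $\alpha=1/\ell$. Concretely, I will plug the new choice $\alpha=\min\{1/\ell,\sqrt{\delta/(2\ell^2)}\}$ into the two key recursions already proved in Section~\ref{sec:app-eg-grad}, and verify both the convergence and the deviation bounds.

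For convergence, I start from the generic bound obtained in the proof of Theorem~\ref{thm:eg}(ii), namely
\[
    \max_{y\in\cY} F(\bar x_{T+1/2},y)-\min_{x\in\cX} F(x,\bar y_{T+1/2})\le \frac{D^2}{\alpha T}+3\sqrt{2}\,\delta D,
\]
which holds whenever $\alpha\le 1/\ell$. The hypothesis $\delta\le\cO(\epsilon)$ directly controls the bias term $3\sqrt{2}\,\delta D\le\epsilon/2$, and the choice $\alpha=\sqrt{\delta/(2\ell^2)}$ turns $D^2/(\alpha T)$ into $\sqrt{2}\,\ell D^2/(T\sqrt{\delta})$, which is $\le \epsilon/2$ once $T\ge c/(\epsilon\sqrt{\delta})$ for a suitable constant $c=\cO(\ell D^2)$. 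This yields the stated $\cO(1/(\epsilon\sqrt{\delta}))$ complexity.

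For reproducibility, I invoke the deviation recursion
\[
    \norm{z_t-z_t'}\le t\bigl(2\alpha^3\ell^2 L+2(1+\alpha\ell)\alpha\delta\bigr)
\]
established in the proof of Theorem~\ref{thm:eg}(ii) via Lemma~\ref{lm:eg-expansive} adapted to inexact gradients. The crucial calculation is that under the new stepsize $\alpha^2\ell^2\le\delta/2$ and $\alpha\ell\le 1$, so
\[
    2\alpha^3\ell^2 L=2\alpha(\alpha^2\ell^2)L\le \alpha\delta L,\qquad 2(1+\alpha\ell)\alpha\delta\le 4\alpha\delta,
\]
collapsing the bound to $\norm{z_t-z_t'}\le (L+4)\alpha\delta\,t$. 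Combining with the GDA-style half-step expansion $\norm{z_{t+1/2}-z_{t+1/2}'}^2\le 2(1+\alpha^2\ell^2)\norm{z_t-z_t'}^2+8\alpha^2\delta^2$ that appeared in the inexact-gradient proof, averaging over $t$ and using Jensen's inequality gives
\[
    \norm{\bar x_{T+1/2}-\bar x_{T+1/2}'}^2+\norm{\bar y_{T+1/2}-\bar y_{T+1/2}'}^2 \le \cO(\alpha^2\delta^2 T^2)+\cO(\alpha^2\delta^2).
\]
Substituting $\alpha^2=\delta/(2\ell^2)$ and $T=\Theta(1/(\epsilon\sqrt{\delta}))$ yields $\cO(\delta^3 T^2/\ell^2)=\cO(\delta^2/\epsilon^2)$, as claimed.

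The main subtlety is to make sure that the $L$-dependent drift term $2\alpha^3\ell^2 L$, which is independent of $\delta$ and was the reproducibility bottleneck at $\alpha=1/\ell$, is brought down to the same order as the genuine $\alpha\delta$ term coming from the inexact gradients. The choice $\alpha\asymp\sqrt{\delta}/\ell$ is precisely the threshold that accomplishes this balance while still letting $\alpha\le 1/\ell$ so that all the previous monotone-operator estimates remain valid; the rest of the argument is a book-keeping exercise using bounds already derived in the appendix.
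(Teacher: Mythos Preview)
Your proposal is correct and follows essentially the same approach as the paper: both use the convergence bound $D^2/(\alpha T)+3\sqrt{2}\delta D$ from Section~\ref{sec:app-eg-grad}, the deviation recursion $\norm{z_t-z_t'}\le t\bigl(2\alpha^3\ell^2L+2(1+\alpha\ell)\alpha\delta\bigr)$, and the key observation that $\alpha^2\ell^2\le\delta/2$ collapses the drift term to order $\alpha\delta$, yielding a final reproducibility of $\cO(\delta^2(\alpha T)^2)=\cO(\delta^2/\epsilon^2)$. The paper phrases the last substitution via $\alpha T=c/\epsilon$ rather than plugging in $\alpha^2=\delta/(2\ell^2)$ and $T^2=\Theta(1/(\epsilon^2\delta))$ separately, but the computation is identical.
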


\begin{proof}
    The same as Section \ref{sec:app-eg-grad}, since $\alpha\ell\leq 1$ and $\alpha^2\leq\delta/(2\ell^2)$, we have that
    \begin{align*}
        \norm{\bar x_{T+1/2} - \bar x_{T+1/2}'}^2 + \norm{\bar y_{T+1/2} - \bar y_{T+1/2}'}^2
        & \leq
        8\alpha^2\delta^2 + \frac{4}{T}\sum_{t=0}^{T-1}(2\alpha^3\ell^2L + 4\alpha\delta)^2 t^2 \\
        & \leq
        8\left((2L\ell^2\alpha^2)^2\alpha^2 T^2 + 8\delta^2\alpha^2 T^2 + \delta^2\alpha^2\right) \\
        & \leq
        8(L^2 + 9) \delta^2 (\alpha T)^2.
    \end{align*}
    When the stepsize $\alpha\leq1/\ell$, we also have
    \begin{align*}
        \max_{y\in\cY} F(\bar x_{T+1/2}, y) - \min_{x\in\cX} F(x, \bar y_{T+1/2})
        & \leq
        \frac{D^2}{\alpha T} + 3\sqrt{2}\delta D \\
        & \leq
        \frac{\ell D^2}{T} + \frac{\sqrt{2}\ell D^2}{T\sqrt{\delta}} + 3\sqrt{2}\delta D.
    \end{align*}
    To guarantee $\cO(\epsilon)$-saddle point, we need to ensure $\delta\leq\cO(\epsilon)$ and $\alpha T = c/\epsilon$ for some constant $c$. This means a $\cO(1/(\epsilon\sqrt{\delta}))$ convergence rate with reproducibility $\cO(\delta^2/\epsilon^2)$. Since $\delta\leq\cO(\epsilon)$, the gradient complexity is $\cO(1/\epsilon^{3/2})$.
\end{proof}

Finally, we want to mention that the analysis can also be extended to reproducibility under stochastic gradient oracle and stability of Extragradient \citep{boob2023optimal} that matches with SGDA. We will not provide all details here. The key is to select stepsize $\alpha$ to balance the convergence $\cO(1/(\alpha T))$ in Lemma \ref{lm:eg-converge} and the error term $\cO(\alpha^3 T)$ that appears according to Lemma \ref{lm:eg-expansive}. Moreover, we also acknowledge that it is unclear whether the analysis of EG is tight since the specific lower-bound is unknown. We leave this problem for future exploration.
\section{Near-optimal Guarantees in the Minimax Case}

This section discusses near-optimal guarantees for algorithmic reproducibility and gradient complexity in smooth convex-concave minimax optimization.

\subsection{Useful Lemmas} \label{sec-app:lemmas}

We first establish the convergence behavior of gradient descent ascent (GDA) and Extragradient (EG) \citep{korpelevich1976extragradient} for smooth and strongly-convex--strongly-concave (SC-SC) functions under the inexact gradient oracle in Definition \ref{def:oracle:minmax}. For the sake of simplicity and to enable a general analysis, we slightly abuse notation here to consider the minimax optimization problem
\begin{equation*}
    \min_{x\in\cX} \max_{y\in\cY} f(x,y),
\end{equation*}
where $f: \cX\times\cY\rightarrow\bR$ satisfies the following assumption.
\begin{assumption}
    The function $f(x,y)$ is $\ell$-smooth and $\mu$--strongly-convex--strongly-concave on the closed convex domain $\cX\times\cY$.
    \label{asp:minimax-scsc}
\end{assumption}

\begin{assumption}
    We assume the existence of an inexact gradient oracle that returns a vector $g(x,y)=(g_x(x,y), g_y(x,y))$ at any querying point $(x,y)\in\cX\times\cY$ such that $\norm{\nabla f(x,y) - g(x,y)}^2\leq\delta^2$ where $\nabla f(x,y)=(\nabla_x f(x,y), \nabla_y f(x,y))$ is the true gradient at $(x,y)$.
    \label{asp:minimax-grad-scsc}
\end{assumption}

The lemma below shows the convergence behavior of GDA under the inexact gradient oracle presented above, also referred to as \ref{eq:inexact-gda}.

\begin{lemma}
    Under Assumption \ref{asp:minimax-scsc}. Let $z^*=(x^*, y^*)\in\cX\times\cY$ be the unique saddle point of $f(x,y)$ and $\kappa:=\ell/\mu$ be the condition number. Given an inexact gradient oracle in Assumption \ref{asp:minimax-grad-scsc}. Denote $z_t=(x_t, y_t)$ and $\tilde g(z_t) = (g_x(x_t, y_t), -g_y(x_t, y_t))$. Starting from $z_0\in\cX\times\cY$, GDA that updates for $t=0,1,\cdots,T-1$,
    \begin{equation}
        z_{t+1} = \Pi_{\cX\times\cY}(z_t - \alpha\tilde g(z_t)),
        \label{eq:inexact-gda}
        \tag{Inexact-GDA}
    \end{equation}
    with stepsize $\alpha=\mu/(4\ell^2)$ converges with
    \begin{equation*}
        \norm{z_T - z^*}^2 \leq
        \exp\left(-\frac{T}{8\kappa^2}\right)\norm{z_0 - z^*}^2 + \left(\frac{1}{\ell^2} + \frac{2}{\mu^2}\right)\delta^2.
    \end{equation*}
    \label{lm:app-gda-inexact-gradient}
\end{lemma}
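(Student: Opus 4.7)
The plan is to exploit the fact that the saddle point $z^* = (x^*, y^*)$ is a fixed point of the projected exact-gradient operator, i.e., $z^* = \Pi_{\cX\times\cY}(z^* - \alpha \tilde\nabla f(z^*))$ for every $\alpha > 0$, which follows from the first-order optimality conditions of a saddle point on the convex domain. Combined with nonexpansiveness of $\Pi_{\cX\times\cY}$, the \ref{eq:inexact-gda} update immediately gives
\begin{equation*}
    \|z_{t+1} - z^*\|^2 \leq \|(z_t - z^*) - \alpha(\tilde g(z_t) - \tilde\nabla f(z^*))\|^2.
\end{equation*}
I would then decompose $\tilde g(z_t) - \tilde\nabla f(z^*) = (\tilde\nabla f(z_t) - \tilde\nabla f(z^*)) + \Delta_t$ with $\|\Delta_t\| \leq \delta$, and expand the squared norm into an inner-product term and a squared-norm term.

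Next, I would invoke the standard properties of the operator $\tilde\nabla f$ that follow from Assumption~\ref{asp:minimax-scsc} (as in Lemma~\ref{lm:operator}, now with strong monotonicity): $\mu$-strong monotonicity gives $(z_t - z^*)^\top (\tilde\nabla f(z_t) - \tilde\nabla f(z^*)) \geq \mu \|z_t - z^*\|^2$, and $\ell$-Lipschitzness gives $\|\tilde\nabla f(z_t) - \tilde\nabla f(z^*)\| \leq \ell \|z_t - z^*\|$. Handling the noise terms with Young's inequality (fact $(iii)$ in Lemma~\ref{lm:fact}) yields $-2\alpha (z_t - z^*)^\top \Delta_t \leq \alpha\mu \|z_t - z^*\|^2 + (\alpha/\mu)\delta^2$ and $\alpha^2\|(\tilde\nabla f(z_t) - \tilde\nabla f(z^*)) + \Delta_t\|^2 \leq 2\alpha^2 \ell^2 \|z_t - z^*\|^2 + 2\alpha^2 \delta^2$. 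Combining these produces the one-step recursion
\begin{equation*}
    \|z_{t+1} - z^*\|^2 \leq \bigl(1 - \alpha\mu + 2\alpha^2\ell^2\bigr)\|z_t - z^*\|^2 + \Bigl(\tfrac{\alpha}{\mu} + 2\alpha^2\Bigr)\delta^2.
\end{equation*}

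With the prescribed stepsize $\alpha = \mu/(4\ell^2)$, a direct computation gives $\alpha\mu = 1/(4\kappa^2)$ and $2\alpha^2\ell^2 = 1/(8\kappa^2)$, so the contraction factor simplifies to $1 - 1/(8\kappa^2)$. Unrolling the recursion for $T$ steps and using $1 + s \leq e^s$ bounds the homogeneous part by $\exp(-T/(8\kappa^2))\|z_0 - z^*\|^2$, while the noise contribution is at most the geometric sum $\sum_{t \geq 0}(1 - 1/(8\kappa^2))^t = 8\kappa^2$ multiplied by the per-step error $(\alpha/\mu + 2\alpha^2)\delta^2$. The final arithmetic gives $8\kappa^2 \cdot \alpha/\mu = 2/\mu^2$ and $16\kappa^2 \alpha^2 = 1/\ell^2$, recovering exactly the claimed $(1/\ell^2 + 2/\mu^2)\delta^2$ residual.

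The one spot demanding care is the Young's-inequality split: the $\alpha\mu$ absorbed from $-2\alpha(z_t-z^*)^\top\Delta_t$ must be exactly half the strong-monotonicity gain, so the net coefficient is $1 - \alpha\mu + 2\alpha^2\ell^2$, and the stepsize choice $\alpha = \mu/(4\ell^2)$ must further ensure $2\alpha^2\ell^2 \leq \alpha\mu/2$ to preserve contraction. Aside from this balancing, the argument is routine strongly-monotone variational inequality analysis adapted to a biased-by-$\delta$ oracle, so I do not anticipate any deeper obstacle.
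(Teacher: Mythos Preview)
Your proposal is correct and follows essentially the same route as the paper's own proof: the fixed-point characterization of $z^*$ plus nonexpansiveness of the projection, strong monotonicity and $\ell$-Lipschitzness of $\tilde\nabla f$, and Young's inequality on the noise term, yielding the identical one-step recursion $(1-\alpha\mu+2\alpha^2\ell^2)\|z_t-z^*\|^2 + (\alpha/\mu+2\alpha^2)\delta^2$ before unrolling. The only cosmetic difference is that the paper absorbs the noise inside the bound on $(\tilde g(z_t)-\tilde\nabla f(z^*))^\top(z_t-z^*)$ in a single step rather than splitting off $\Delta_t$ first, but the constants and the final bound coincide exactly.
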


\begin{proof}
    Let $\tilde\nabla f(z_t) = (\nabla_x f(x_t,y_t), -\nabla_y f(x_t,y_t))$. It holds that $z^*=\Pi_{\cX\times\cY}(z^* - \alpha\tilde\nabla f(z^*))$ since the saddle point problem and the projection problem share the same optimality condition when $f(x,y)$ is convex-concave (see Proposition 1.4.2 in \citet{facchinei2003finite}) such that
    \begin{equation*}
        \tilde\nabla f(z^*)^\top(z - z^*) \geq 0, \quad\forall z=(x,y)\in\cX\times\cY.
    \end{equation*}
    Therefore, similarly to \eqref{eq:pf-gda-inexact-prelim}, by the GDA updates, we have
    \begin{align*}
        \norm{z_{t+1} - z^*}^2
        & =
        \norm{\Pi_{\cX\times\cY}(z_t - \alpha\tilde g(z_t)) - \Pi_{\cX\times\cY}(z^* - \alpha\tilde\nabla f(z^*))}^2 \\
        & \leq
        \norm{(z_t - z^*) - \alpha(\tilde g(z_t) - \tilde\nabla f(z^*))}^2 \\
        & =
        \norm{z_t - z^*}^2 - 2\alpha(\tilde g(z_t) - \tilde\nabla f(z^*))^\top(z_t - z^*) + \alpha^2\norm{\tilde g(z_t) - \tilde\nabla f(z^*)}^2.
    \end{align*}
    Since $\tilde\nabla f$ is $\mu$--strongly-monotone if $f(x,y)$ is $\mu$--strongly-convex--strongly-concave \citep{rockafellar1976monotone, gidel2018a}, i.e., $\forall z_1,z_2\in\cX\times\cY, (\tilde\nabla f(z_1) - \tilde\nabla f(z_2))^\top(z_1 - z_2) \geq \mu\norm{z_1 - z_2}^2$, we have that
    \begin{align*}
        (\tilde g(z_t) - \tilde\nabla f(z^*))^\top(z_t - z^*)
        & =
        (\tilde\nabla f(z_t) - \tilde\nabla f(z^*))^\top(z_t - z^*) + (\tilde g(z_t) - \tilde\nabla f(z_t))^\top(z_t - z^*) \\
        & \geq
        \mu\norm{z_t - z^*}^2 - \delta\norm{z_t - z^*} \\
        & \geq
        \frac{\mu}{2}\norm{z_t - z^*}^2 - \frac{\delta^2}{2\mu},
    \end{align*}
    where we use Assumption \ref{asp:minimax-grad-scsc} such that $\norm{\tilde g(z_t) - \tilde\nabla f(z_t)}\leq\delta$ and fact $(iii)$ in Lemma \ref{lm:fact}. Then by $\ell$-smoothness of $f(x,y)$, we can obtain that
    \begin{align*}
        \norm{\tilde g(z_t) - \tilde\nabla f(z^*)}^2
        & \leq
        2\norm{\tilde g(z_t) - \tilde\nabla f(z_t)}^2 + 2\norm{\tilde\nabla f(z_t) - \tilde\nabla f(z^*)}^2 \\
        & \leq
        2\delta^2 + 2\ell^2\norm{z_t - z^*}^2.
    \end{align*}
    Combining all three results together, when choosing the stepsize $\alpha=\mu/(4\ell^2)$, we get that
    \begin{equation}
        \begin{split}
            \norm{z_{t+1} - z^*}^2
            & \leq
            (1 - \alpha\mu + 2\alpha^2\ell^2)\norm{z_t - z^*}^2 + \left(\frac{\alpha}{\mu} + 2\alpha^2\right)\delta^2 \\
            & =
            \left(1 - \frac{1}{8\kappa^2}\right)\norm{z_t - z^*}^2 + \frac{\delta^2}{4\ell^2}\left(1 + \frac{1}{2\kappa^2}\right).
        \end{split}
        \label{eq:pf-gda-scsc-inexact}
    \end{equation}
    Unrolling the recursion, we thus obtain
    \begin{align*}
        \norm{z_T - z^*}^2
        & \leq
        \left(1 - \frac{1}{8\kappa^2}\right)^T\norm{z_0 - z^*}^2 + \frac{\delta^2}{4\ell^2}\left(1 + \frac{1}{2\kappa^2}\right)\left(1 + \left(1 - \frac{1}{8\kappa^2}\right) + \cdots + \left(1 - \frac{1}{8\kappa^2}\right)^{T-1}\right) \\
        & \leq
        \exp\left(-\frac{T}{8\kappa^2}\right)\norm{z_0 - z^*}^2 + \left(\frac{1}{\ell^2} + \frac{2}{\mu^2}\right)\delta^2.
    \end{align*}
    This means a $\cO(\kappa^2)$ convergence rate to a $\cO(\delta^2)$ neighborhood, where $\kappa=\ell/\mu$ is the condition number.
\end{proof}

The lemma below establishes the convergence performance of EG under Assumption \ref{asp:minimax-scsc} and \ref{asp:minimax-grad-scsc}.

\begin{lemma}
    Under Assumption \ref{asp:minimax-scsc}. Let $z^*=(x^*, y^*)\in\cX\times\cY$ be the unique saddle point of $f(x,y)$ and $\kappa:=\ell/\mu$ be the condition number. Given an inexact gradient oracle in Assumption \ref{asp:minimax-grad-scsc}. Denote $z_t=(x_t, y_t)$ and $\tilde g(z_t) = (g_x(x_t, y_t), -g_y(x_t, y_t))$. Starting from $z_0\in\cX\times\cY$, Extragradient that updates for $t=0,1,\cdots,T-1$,
    \begin{equation}
        \begin{split}
            & z_{t+1/2} = \Pi_{\cX\times\cY}(z_t - \alpha\tilde g(z_t)), \\
            & z_{t+1} = \Pi_{\cX\times\cY}(z_t - \alpha\tilde g(z_{t+1/2})),
        \end{split}
        \label{eq:inexact-eg}
        \tag{Inexact-EG}
    \end{equation}
    with stepsize $\alpha=1/(2\ell)$ converges with
    \begin{equation*}
        \norm{z_T - z^*}^2 \leq
        \exp\roundBr*{-\frac{T}{8\kappa}}\norm{z_0 - z^*}^2 +
        \frac{8\delta^2}{\mu}\roundBr*{\frac{2}{\ell} + \frac{1}{\mu}}.
    \end{equation*}
    \label{lm:app-eg}
\end{lemma}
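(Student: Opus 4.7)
The plan is to adapt the standard Extragradient convergence proof for strongly-monotone variational inequalities (essentially mirroring the derivation of Lemma \ref{lm:eg-converge}, i.e., equations \eqref{eq:pf-eg-converge-prelim}--\eqref{eq:pf-eg-middle}) while carefully tracking the perturbation introduced by replacing $\tilde\nabla f$ with $\tilde g$. First I would invoke the optimality conditions of the two projections in \ref{eq:inexact-eg}: test the outer projection at $z=z^*$ and the inner projection at $z=z_{t+1}$. Using fact $(i)$ in Lemma \ref{lm:fact} to rewrite the two inner products as squared-norm differences, summing yields an inequality of the schematic form
\begin{equation*}
\|z_t-z^*\|^2 - \|z_{t+1}-z^*\|^2 \geq \|z_{t+1/2}-z_t\|^2 + \|z_{t+1/2}-z_{t+1}\|^2 + 2\alpha\,\tilde g(z_{t+1/2})^\top(z_{t+1/2}-z^*) + 2\alpha\,(\tilde g(z_t)-\tilde g(z_{t+1/2}))^\top(z_{t+1/2}-z_{t+1}).
\end{equation*}

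Next, I would split each $\tilde g$ into $\tilde\nabla f$ plus an error of norm at most $\delta$. For the $\tilde\nabla f(z_{t+1/2})^\top(z_{t+1/2}-z^*)$ term, using the saddle-point optimality condition $\tilde\nabla f(z^*)^\top(z_{t+1/2}-z^*)\geq 0$ together with $\mu$-strong monotonicity of $\tilde\nabla f$ (see Section \ref{sec-app:lemmas}, e.g., the argument in the proof of Lemma \ref{lm:app-gda-inexact-gradient}), I get the lower bound $\mu\|z_{t+1/2}-z^*\|^2$. For the cross term, I would use $\ell$-Lipschitzness to bound $\|\tilde\nabla f(z_t)-\tilde\nabla f(z_{t+1/2})\|\leq\ell\|z_t-z_{t+1/2}\|$ and then Young's inequality together with the choice $\alpha=1/(2\ell)$ to absorb it into the negative $\|z_{t+1/2}-z_t\|^2$ and $\|z_{t+1/2}-z_{t+1}\|^2$ terms. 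The inexactness pieces each contribute an extra $2\alpha\delta\|\cdot\|$ term which I would tame with further Young's inequalities, paying $\cO(\alpha^2\delta^2)$ on one side and $\cO(\alpha\delta^2/\mu)$ on the other so as to also absorb a tiny fraction of $\|z_{t+1}-z^*\|^2$ into the strong-monotonicity gain.

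The resulting one-step inequality will have the form $\|z_{t+1}-z^*\|^2 \leq \|z_t-z^*\|^2 - c_1\,\alpha\mu\|z_{t+1/2}-z^*\|^2 + c_2(\alpha^2+\alpha/\mu)\delta^2$ for explicit small constants; combining with the bound $\|z_{t+1/2}-z^*\|^2 \geq \tfrac{1}{2}\|z_{t+1}-z^*\|^2 - \|z_{t+1/2}-z_{t+1}\|^2$ (and using the still-positive $\|z_{t+1/2}-z_{t+1}\|^2$ residual to cancel the second piece) converts this into a geometric contraction of the form
\begin{equation*}
\|z_{t+1}-z^*\|^2 \leq \Big(1-\frac{1}{8\kappa}\Big)\|z_t-z^*\|^2 + \frac{\delta^2}{\mu}\Big(\frac{c_3}{\ell}+\frac{c_4}{\mu}\Big).
\end{equation*}
Finally, I would unroll this recursion using $(1-1/(8\kappa))^T \leq \exp(-T/(8\kappa))$ and sum the geometric series of error terms, bounding it by $(1/(1-(1-1/(8\kappa))))\cdot\text{error} = 8\kappa\cdot\text{error}$, which reproduces the claimed $\tfrac{8\delta^2}{\mu}\bigl(\tfrac{2}{\ell}+\tfrac{1}{\mu}\bigr)$ residual after consolidating constants.

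The main obstacle I anticipate is the bookkeeping in the Young's inequality step: the constants must be chosen so that (a) the $\alpha\ell\|z_t-z_{t+1/2}\|\|z_{t+1/2}-z_{t+1}\|$ cross term is fully absorbed by the two squared-norm residuals even with the stepsize $\alpha=1/(2\ell)$ (rather than $\alpha=1/\ell$ as in the exact case), and (b) enough of $\|z_{t+1/2}-z_{t+1}\|^2$ remains after (a) to cancel the $\|z_{t+1/2}-z_{t+1}\|^2$ lost when passing from $\|z_{t+1/2}-z^*\|^2$ to $\|z_{t+1}-z^*\|^2$. The factor-of-two reduction in stepsize (from $1/\ell$ to $1/(2\ell)$) is precisely what creates the slack needed to make both (a) and (b) work simultaneously, so the proof crucially depends on this choice.
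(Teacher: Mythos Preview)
Your proposal is correct and follows essentially the same approach as the paper: start from the two projection optimality conditions, split $\tilde g=\tilde\nabla f+\Delta$, use $\ell$-Lipschitzness plus Young to control the cross term, use strong monotonicity at $z_{t+1/2}$, and unroll the resulting one-step contraction. The only noteworthy difference is where you relate $\|z_{t+1/2}-z^*\|^2$ to an iterate distance: the paper uses $\|z_t-z^*\|^2\leq 2\|z_t-z_{t+1/2}\|^2+2\|z_{t+1/2}-z^*\|^2$ (absorbing into the $\|z_{t+1/2}-z_t\|^2$ residual) to obtain directly $\|z_{t+1}-z^*\|^2\leq(1-\mu/(8\ell))\|z_t-z^*\|^2+\text{error}$, whereas you relate $\|z_{t+1/2}-z^*\|^2$ to $\|z_{t+1}-z^*\|^2$ via the $\|z_{t+1/2}-z_{t+1}\|^2$ residual; both routes are valid, and your observation that the halved stepsize $\alpha=1/(2\ell)$ is what provides the needed slack is exactly right.
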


\begin{proof}
    Let $\tilde\nabla f(z_t) = (\nabla_x f(x_t,y_t), -\nabla_y f(x_t,y_t))$ and $\Delta(z_t) = \tilde g(z_t) - \tilde \nabla f(z_t)$. 
    By \eqref{eq:pf-eg-converge-prelim} in the proof of Lemma \ref{lm:eg-converge}, setting $z=z^*$, we have that,
    \begin{equation}
        \begin{split}
            \norm{z_{t+1} - z^*}^2
            & \leq
            \norm{z_t - z^*}^2 - \norm{z_{t+1/2} - z_t}^2 -
            \norm{z_{t+1/2} - z_{t+1}}^2 +
            2\alpha \tilde \nabla f(z_{t+1/2})^\top (z^* - z_{t+1/2}) \\
            & \qquad +
            2\alpha (\tilde \nabla f(z_t) - \tilde \nabla f(z_{t+1/2}))^\top
            (z_{t+1} - z_{t+1/2}) +
            2\alpha \Delta(z_{t+1/2})^\top (z^* - z_{t+1/2}) \\
            & \qquad +
            2\alpha (\Delta(z_t) - \Delta(z_{t+1/2}))^\top
            (z_{t+1} - z_{t+1/2}).
        \end{split}
        \label{eq:pf-lm-eg-(main)}
    \end{equation}
    By strong-convexity-strong-concavity of the function $f(x,y)$, we know that
    \begin{align*}
        f(x^*, y_{t+1/2}) & \geq f(x_{t+1/2}, y_{t+1/2}) +
        \nabla_x f(x_{t+1/2}, y_{t+1/2})^\top (x^* - x_{t+1/2}) +
        \frac{\mu}{2}\norm{x_{t+1/2} - x^*}^2, \\
        -f(x_{t+1/2}, y^*) & \geq -f(x_{t+1/2}, y_{t+1/2}) -
        \nabla_y f(x_{t+1/2}, y_{t+1/2})^\top (y^* - y_{t+1/2}) +
        \frac{\mu}{2}\norm{y_{t+1/2} - y^*}^2.
    \end{align*}
    Summing up the above two inequalities, using the definition of saddle points, we have
    \begin{equation}
        \begin{split}
            & \tilde\nabla f(z_{t+1/2})^\top (z^* - z_{t+1/2}) + \Delta(z_{t+1/2})^\top(z^* - z_{t+1/2}) \\
            & \quad \leq
            f(x^*, y_{t+1/2}) - f(x_{t+1/2}, y^*) -
            \frac{\mu}{2}\norm{z_{t+1/2} - z^*}^2 +
            \Delta(z_{t+1/2})^\top(z^* - z_{t+1/2}) \\
            & \quad \leq
            -\frac{\mu}{2}\norm{z_{t+1/2} - z^*}^2 + 
            \norm{\Delta(z_{t+1/2})}\norm{z^* - z_{t+1/2}} \\
            & \quad \leq
            -\frac{\mu}{4}\norm{z_{t+1/2} - z^*}^2 + \frac{\delta^2}{\mu} \\
            & \quad \leq
            -\frac{\mu}{8}\norm{z_t - z^*}^2 +
            \frac{\mu}{4}\norm{z_t - z_{t+1/2}}^2 + \frac{\delta^2}{\mu}, \\
        \end{split}
        \label{eq:pf-lm-eg-(mu)}
    \end{equation}
    where we use fact $(iii)$ in Lemma \ref{lm:fact} and $\norm{z_t-z^*}^2\leq2\norm{z_t-z_{t+1/2}}^2+2\norm{z_{t+1/2}-z^*}^2$. By smoothness of $f(x,y)$ and fact $(iii)$ in Lemma \ref{lm:fact}, we also have that
    \begin{equation}
        \begin{split}
            & (\tilde\nabla f(z_t) - \tilde\nabla f(z_{t+1/2}))^\top
            (z_{t+1} - z_{t+1/2}) +
            (\Delta(z_t) - \Delta(z_{t+1/2}))^\top (z_{t+1} - z_{t+1/2}) \\
            & \quad \leq
            \ell\norm{z_t - z_{t+1/2}}\norm{z_{t+1} - z_{t+1/2}} +
            2\delta\cdot \norm{z_{t+1} - z_{t+1/2}} \\
            & \quad \leq
            \frac{\ell}{2}\norm{z_t - z_{t+1/2}}^2 +
            \ell\norm{z_{t+1} - z_{t+1/2}}^2 +
            \frac{2\delta^2}{\ell}.
        \end{split}
        \label{eq:pf-lm-eg-(ell)}
    \end{equation}
    Plugging \eqref{eq:pf-lm-eg-(mu)} and \eqref{eq:pf-lm-eg-(ell)} back into \eqref{eq:pf-lm-eg-(main)}, choosing $\alpha=1/(2\ell)$, we obtain that
    \begin{equation}
        \begin{split}
            \norm{z_{t+1} - z^*}^2
            & \leq
            \roundBr*{1 - \frac{\mu\alpha}{4}}\norm{z_t - z^*}^2 -
            \roundBr*{1 - \frac{\mu\alpha}{2} - \alpha\ell}\norm{z_{t+1/2} - z_t}^2 \\
            & \qquad -
            (1 - 2\alpha\ell)\norm{z_{t+1/2} - z_{t+1}}^2 +
            2\alpha\delta^2\roundBr*{\frac{2}{\ell} + \frac{1}{\mu}} \\
            & \leq
            \roundBr*{1 - \frac{\mu}{8\ell}}\norm{z_t - z^*}^2 +
            \frac{\delta^2}{\ell}\roundBr*{\frac{2}{\ell} + \frac{1}{\mu}}.
        \end{split}
        \label{eq:pf-lm-eg-contract}
    \end{equation}
    Unrolling the recursion, since $1+\eta\leq e^\eta$, $\forall \eta\in\bR$, we get that
    \begin{align*}
        \norm{z_T - z^*}^2
        & \leq
        \roundBr*{1 - \frac{\mu}{8\ell}}^T\norm{z_0 - z^*}^2 +
        \frac{\delta^2}{\ell}\roundBr*{\frac{2}{\ell} + \frac{1}{\mu}}
        \roundBr*{1 + \roundBr*{1 - \frac{\mu}{8\ell}} + \cdots +
        \roundBr*{1 - \frac{\mu}{8\ell}}^{T-1}} \\
        & \leq
        \roundBr*{1 - \frac{\mu}{8\ell}}^T\norm{z_0 - z^*}^2 +
        \frac{8\delta^2}{\mu}\roundBr*{\frac{2}{\ell} + \frac{1}{\mu}}.
    \end{align*}
    This means a $\cO(\kappa)$ convergence rate to a $\cO(\delta^2)$ neighborhood, where $\kappa=\ell/\mu$ is the condition number.
\end{proof}

Lemma \ref{lm:EG} directly follows from Lemma \ref{lm:app-eg} observing that $G(x,y) + r(x-x_0, y_0-y)$ is a $\delta$-inexact gradient of $F_r(x,y)$. Next, we provide a useful lemma showing how to satisfy the stopping criteria for the auxiliary smooth SC-SC sub-problem in Algorithm \ref{algo:general-minimax} when presented with inexact gradients. The results are motivated from \citet{yang2020catalyst}.

\begin{lemma}
    Under Assumption \ref{asp:minimax-scsc} and \ref{asp:minimax-grad-scsc}. Suppose the domain $\cX$ and $\cY$ have a diameter of $D$. Denote $z^*=(x^*, y^*)$ be the unique saddle point of $f(x,y)$. For any $\hat z=(\hat x, \hat y)\in\cX\times\cY$, we let $\tilde g(\hat z)=(g_x(\hat x, \hat y), -g_y(\hat x, \hat y))$ and define $[\hat z]_\beta = ([\hat x]_\beta, [\hat y]_\beta)$ for $\beta\geq2\ell$ to be
    \begin{equation*}
        [\hat z]_\beta = \Pi_{\cX\times\cY} \left(\hat z - \frac{1}{\beta} \tilde g(\hat z) \right),
    \end{equation*}
    which is obtained through one step of GDA starting from $\hat z$ with inexact gradients. Denote the true gradient as $\tilde\nabla f([\hat z]_\beta)=(\nabla_x f([\hat x]_\beta, [\hat y]_\beta), -\nabla_y f([\hat x]_\beta, [\hat y]_\beta))$. Then we have that $\forall z=(x,y)\in\cX\times\cY$,
    \begin{equation*}
        \tilde\nabla f([\hat z]_\beta)^\top ([\hat z]_\beta - z) \leq 2\sqrt{2}\beta D \norm{\hat z - z^*} + \sqrt{2}\delta D\left((2+\sqrt{2})\sqrt{\frac{\beta}{\mu}} + 3\right).
    \end{equation*}
    Moreover, it also holds that $\norm{[\hat z]_\beta - z^*} \leq (1 + \sqrt{2}\ell/\beta) \norm{\hat z- z^*} + \delta(1/\sqrt{\beta\mu} + \sqrt{2}/\beta)$.
    \label{lm:stopping}
\end{lemma}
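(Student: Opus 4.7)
The plan is to prove the two inequalities in the stated order, first deriving the bound on $\|[\hat z]_\beta - z^*\|$ since it will feed into the variational inequality bound.

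For the bound on $\|[\hat z]_\beta - z^*\|$, the key observation I will use is that $z^*$ is the unique fixed point of the exact projected gradient map, $z^* = \Pi_{\cX\times\cY}(z^* - (1/\beta)\tilde\nabla f(z^*))$, which is the standard variational characterization of a saddle point for convex-concave $f$. Nonexpansiveness of projection then gives
\begin{equation*}
\|[\hat z]_\beta - z^*\|^2 \le \|(\hat z - z^*) - (1/\beta)(\tilde g(\hat z) - \tilde\nabla f(z^*))\|^2,
\end{equation*}
which I will expand. The cross term $(\tilde g(\hat z) - \tilde\nabla f(z^*))^\top(\hat z - z^*)$ is handled by splitting $\tilde g(\hat z) - \tilde\nabla f(z^*) = (\tilde\nabla f(\hat z) - \tilde\nabla f(z^*)) + (\tilde g(\hat z) - \tilde\nabla f(\hat z))$, applying $\mu$-strong monotonicity of $\tilde\nabla f$ to the first piece and Cauchy-Schwarz plus Young's inequality to the second so that the $\delta\|\hat z - z^*\|$ term is absorbed into the $\mu$ term. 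The squared norm is handled via $\ell$-smoothness and the oracle error, giving $\|\tilde g(\hat z) - \tilde\nabla f(z^*)\|^2 \le 2\ell^2\|\hat z - z^*\|^2 + 2\delta^2$. The result will be a quadratic inequality of the form $\|[\hat z]_\beta - z^*\|^2 \le (1 + 2\ell^2/\beta^2)\|\hat z - z^*\|^2 + \delta^2(1/(\mu\beta) + 2/\beta^2)$, from which the claimed bound follows by taking square roots via $\sqrt{a+b} \le \sqrt{a} + \sqrt{b}$ and $\sqrt{1+x^2} \le 1 + x$ (applied with $x = \sqrt{2}\ell/\beta$).

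For the variational inequality bound, I will start from the projection optimality condition at $[\hat z]_\beta$: for every $z \in \cX\times\cY$, $([\hat z]_\beta - \hat z + (1/\beta)\tilde g(\hat z))^\top(z - [\hat z]_\beta) \ge 0$, which rearranges to $\tilde g(\hat z)^\top([\hat z]_\beta - z) \le \beta(\hat z - [\hat z]_\beta)^\top([\hat z]_\beta - z) \le \sqrt{2}\beta D\,\|[\hat z]_\beta - \hat z\|$ using $\|[\hat z]_\beta - z\| \le \sqrt{2}D$ from the diameter assumption. Writing $\tilde\nabla f([\hat z]_\beta)^\top([\hat z]_\beta - z)$ as $\tilde g(\hat z)^\top([\hat z]_\beta - z)$ plus the Lipschitz residual $(\tilde\nabla f([\hat z]_\beta) - \tilde\nabla f(\hat z))^\top([\hat z]_\beta - z)$ and the oracle residual $(\tilde\nabla f(\hat z) - \tilde g(\hat z))^\top([\hat z]_\beta - z)$, and bounding the two residuals by $\sqrt{2}D\ell\|[\hat z]_\beta - \hat z\|$ and $\sqrt{2}D\delta$ respectively, I obtain
\begin{equation*}
\tilde\nabla f([\hat z]_\beta)^\top([\hat z]_\beta - z) \le \sqrt{2}D(\beta + \ell)\|[\hat z]_\beta - \hat z\| + \sqrt{2}D\delta.
\end{equation*}
Finally, I will apply $\|[\hat z]_\beta - \hat z\| \le \|[\hat z]_\beta - z^*\| + \|\hat z - z^*\|$, substitute the bound from the first step, and simplify using $\beta \ge 2\ell$ (so $\beta + \ell \le (3/2)\beta$ and $\sqrt{2}\ell/\beta \le \sqrt{2}/2$) to consolidate the constants into the stated form.

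The hard part will be purely bookkeeping: three simultaneously-present error sources, namely the regularization/projection displacement of order $\beta\|\hat z - z^*\|$, the Lipschitz mismatch of order $\ell\|[\hat z]_\beta - \hat z\|$, and the oracle error $\delta$, must be combined so that the final expression lands in the target form $c_1\beta D\|\hat z - z^*\| + c_2\delta D(\sqrt{\beta/\mu} + c_3)$. No new analytical idea is required beyond projection optimality, strong monotonicity, $\ell$-smoothness, and Cauchy-Schwarz with Young's inequality; all the leverage comes from the $\beta \ge 2\ell$ condition, which tames the $\ell/\beta$ ratio in both steps.
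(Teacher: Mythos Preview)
Your approach is sound and yields a bound of the same qualitative form, but it differs from the paper's argument and will not land on the exact constants stated.

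For the second inequality $\|[\hat z]_\beta - z^*\| \le (1+\sqrt{2}\ell/\beta)\|\hat z - z^*\| + \delta(1/\sqrt{\beta\mu}+\sqrt{2}/\beta)$, your derivation is essentially identical to the paper's: both invoke the one-step inexact-GDA recursion (what the paper calls \eqref{eq:pf-gda-scsc-inexact}) and take square roots. No issue there.

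For the variational-inequality bound, the paper takes a different route. It introduces a \emph{ghost} point $\hat z_1 = \Pi_{\cX\times\cY}(\hat z - (1/\beta)\tilde g([\hat z]_\beta))$, so that $(\hat z,[\hat z]_\beta,\hat z_1)$ is one full inexact-EG step with stepsize $1/\beta$. The EG descent inequality (the paper's \eqref{eq:pf-eg-middle}, \eqref{eq:pf-eg-inexact-grad}) then gives directly
\[
\tilde\nabla f([\hat z]_\beta)^\top([\hat z]_\beta - z)\le \tfrac{\beta}{2}\bigl(\|\hat z - z\|^2 - \|\hat z_1 - z\|^2\bigr) + 3\sqrt{2}\delta D,
\]
and the right-hand side is bounded via $\|\hat z-\hat z_1\|\le\|\hat z-z^*\|+\|\hat z_1-z^*\|$ together with the one-step EG contraction for $\|\hat z_1-z^*\|$. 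This is what produces the clean $2\sqrt{2}\beta D\|\hat z - z^*\|$ leading term.

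Your route, via projection optimality at $[\hat z]_\beta$ plus the Lipschitz split, gives $\sqrt{2}D(\beta+\ell)\|[\hat z]_\beta-\hat z\|+\sqrt{2}D\delta$, and then triangulating $\|[\hat z]_\beta-\hat z\|$ through $z^*$ and substituting the second inequality yields a leading coefficient of roughly $(3\sqrt{2}/2)(2+\sqrt{2}/2)=3\sqrt{2}+3/2\approx 5.74$, not $2\sqrt{2}\approx 2.83$. So the claim that you will ``consolidate the constants into the stated form'' is over-optimistic: you will get the right \emph{structure} $C_1\beta D\|\hat z-z^*\|+C_2\delta D(\sqrt{\beta/\mu}+C_3)$ with absolute constants, but not the specific $2\sqrt{2}$. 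For every downstream use of the lemma in the paper this is harmless; only the order matters. What the paper's EG-ghost trick buys is the sharper constant, at the cost of invoking the EG one-step analysis; your argument is more self-contained but a bit lossier.
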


\begin{proof}
    We construct a ``ghost'' point $\hat z_1=(\hat x_1, \hat y_1)\in\cX\times\cY$ to be
    \begin{equation*}
        \hat z_1 = \Pi_{\cX\times\cY} \left(\hat z - \frac{1}{\beta} \tilde g([\hat z]_\beta) \right).
    \end{equation*}
    $\hat z_1$ can be regarded as performing one update of inexact-EG with stepsize $1/\beta$ starting from $\hat z$. Therefore, by \eqref{eq:pf-eg-middle} and \eqref{eq:pf-eg-inexact-grad} in the convergence analysis of EG, since $1/\beta\leq1/\ell$, we obtain that $\forall z=(x,y)\in\cX\times\cY$,
    \begin{align*}
        \tilde\nabla f([\hat z]_\beta)^\top([\hat z]_\beta - z)
        & \leq
        \frac{\beta}{2}\norm{\hat z - z}^2 - \frac{\beta}{2}\norm{\hat z_1 - z}^2 + 3\sqrt{2}\delta D \\
        & =
        \frac{\beta}{2}(\hat z - \hat z_1)^\top (\hat z - z + \hat z_1 - z) + 3\sqrt{2}\delta D \\
        & \leq
        \frac{\beta}{2}(\norm{\hat z - z^*} + \norm{\hat z_1 - z^*})\cdot\norm{\hat z - z + \hat z_1 - z} + 3\sqrt{2}\delta D.
    \end{align*}
    By \eqref{eq:pf-lm-eg-contract} in the proof of Lemma \ref{lm:app-eg}, since $\beta\geq 2\ell$ and $\mu\leq\ell$, we have that
    \begin{equation*}
        \norm{\hat z_1 - z^*}^2 \leq
        \norm{\hat z - z^*}^2 + \frac{4\delta^2}{\beta\ell} + \frac{2\delta^2}{\beta\mu}.
    \end{equation*}
    Therefore, we can obtain that $\norm{\hat z_1 - z^*} \leq \norm{\hat z - z^*} + 2\delta/\sqrt{\beta\ell} + \sqrt{2}\delta/\sqrt{\beta\mu}$, and thus,
    \begin{align*}
        \tilde\nabla f([\hat z]_\beta)^\top([\hat z]_\beta - z)
        & \leq
        \sqrt{2}\beta D(\norm{\hat z - z^*} + \norm{\hat z_1 - z^*}) + 3\sqrt{2}\delta D \\
        & \leq
        2\sqrt{2}\beta D\norm{\hat z - z^*} + \sqrt{2}\delta D\left(2\sqrt{\frac{\beta}{\ell}} + \sqrt{\frac{2\beta}{\mu}} + 3\right).
    \end{align*}
    For the last statement, since $[\hat z]_\beta$ is obtained through 1-step of GDA with inexact gradients, by \eqref{eq:pf-gda-scsc-inexact} in the proof of GDA for SC-SC problems before, we have that
    \begin{equation*}
        \norm{[\hat z]_\beta - z^*}^2 \leq \left(1 + \frac{2\ell^2}{\beta^2}\right) \norm{\hat z- z^*}^2 + \delta^2\left(\frac{1}{\beta\mu} + \frac{2}{\beta^2}\right).
    \end{equation*}
    Therefore, we obtain that $\norm{[\hat z]_\beta - z^*} \leq (1 + \sqrt{2}\ell/\beta) \norm{\hat z- z^*} + \delta(1/\sqrt{\beta\mu} + \sqrt{2}/\beta)$.
\end{proof}

The above lemma also applies to the case when exact gradients are available setting $\delta=0$ and $[\hat z]_\beta = \Pi_{\cX\times\cY} \left(\hat z - \frac{1}{\beta} \tilde\nabla f(\hat z) \right)$ for the true gradients $\tilde\nabla f(\hat z)$. This implies the stopping criteria $\tilde\nabla f(\hat z)^\top(\hat z -z)\leq\hat\epsilon, \forall z\in\cX\times\cY$ in Algorithm \ref{algo:general-minimax} and \ref{algo:ppm} can be translated to $\norm{\hat z - z^*}^2\leq\cO(\hat\epsilon^2)$, which can be satisfied within $\cO(\log(1/\hat\epsilon))$ complexity using Lemma \ref{lm:app-gda-inexact-gradient} and \ref{lm:app-eg} with $\delta=0$ (or existing results in \citet{tseng1995linear} or \citet{facchinei2003finite}).

\subsection{Regularization Helps!}

Proof of Theorem \ref{thm:minimax-reg-init} and \ref{thm:minimax-reg-grad} for the near-optimal guarantees of Algorithm \ref{algo:general-minimax} is provided here.

\subsubsection{Inexact Initialization Oracle} \label{sec-app:minimax-reg-init}

We also use $(x_0, y_0)$ as the initialization point when solving the auxiliary strongly-convex problem. Note that the gradient steps starting from $(x_0, y_0)$ remain the same on $F(x,y)$ and $F_r(x,y)$.

\begin{proof}[Proof of Theorem \ref{thm:minimax-reg-init}]
    We first show the convergence guarantee. Let $z_r=(x_r, y_r)$. By fact $(i)$ in Lemma \ref{lm:fact}, we have that $\forall z=(x, y)\in\cX\times\cY$,
    \begin{equation}
        \begin{split}
            \tilde\nabla F(z_r)^\top (z_r - z)
            & =
            \left(\tilde\nabla F_r(z_r) -
            r(z_r - z_0)\right)^\top (z_r - z) \\
            & =
            \tilde\nabla F_r(z_r)^\top (z_r - z) + \frac{r}{2}\|z_0 - z\|^2 - \frac{r}{2}\|z_r - z_0\|^2 - \frac{r}{2} \|z_r - z\|^2 \\
            &\leq
            \epsilon_r + rD^2.
        \end{split}
        \label{eq:pf-thm-minimax-reg-init}
    \end{equation}
    According to Lemma \ref{lm:duality}, this means $\max_{y\in\cY} F(x_r, y) - \min_{x\in\cX} F(x, y_r) \leq \epsilon_r + rD^2$.
    
    We then show the reproducibility guarantee. Denote the saddle point of $F_r(x,y)$ given $(x_0, y_0)$ as $(x_r^*, y_r^*)$, and the saddle point of $F_r'(x,y) = F(x,y) + (r/2)\norm{x - x_0'}^2 - (r/2)\norm{y - y_0'}^2$ given $(x_0', y_0')$ as $((x_r^*)', (y_r^*)')$. By Lemma B.4 in Appendix B.3 of \citet{zhang2022bring}, we have that
    \begin{equation*}
        \norm{x_r^* - (x_r^*)'}^2 + \norm{y_r^* - (y_r^*)'}^2 \leq
        \norm{x_0 - x_0'}^2 + \norm{y_0 - y_0'}^2.
    \end{equation*}
    Let $z_r=(x_r, y_r)$, $z_r^*=(x_r^*, y_r^*)$ and $z_0=(x_0, y_0)$ for simplicity of the notation. $z_r'$, $(z_r^*)'$ and $z_0'$ can be defined in the same way. Similarly to the minimization case, we have
    \begin{align*}
        \norm{z_r - z_r'}
        & \leq
        \norm{z_r - z_r^*} + \norm{z_r^* - (z_r^*)'} +
        \norm{(z_r^*)' - z_r'} \\
        & \leq
        \delta + 2\sqrt{\frac{2\epsilon_r}{r}},
    \end{align*}
    where we use $\norm{z_r^* - (z_r^*)'}\leq\norm{z_0 - z_0'}\leq\delta$ and optimality of $z_r^*$ by $r$ strong-convexity--strong-concavity (SC-SC) of $F_r(x,y)$ (the same holds true for $z_r'$ and $(z_r^*)'$ as well):
    \begin{align*}
        \frac{r}{2}\norm{x_r - x_r^*}^2 + \frac{r}{2}\norm{y_r - y_r^*}^2
        & \leq
        F_r(x_r, y_r^*) - F_r(x_r^*, y_r^*) +
        F_r(x_r^*, y_r^*) - F_r(x_r^*, y_r) \\
        & \leq
        \max_{y\in\cY}F_r(x_r, y) - \min_{x\in\cX} F_r(x, y_r) \\
        & \leq
        \epsilon_r.
    \end{align*}
    Thus setting $r=\epsilon/D^2$ and $\epsilon_r=\epsilon\cdot\min\{1,\delta^2/(8D^2)\}$, we guarantee that $\max_{y\in\cY}F(x_r,y) - \min_{x\in\cX}F(x,y_r) \leq 2\epsilon$ and $\norm{x_r - x_r'}^2 + \norm{y_r - y_r'}^2\leq 4\delta^2$. Applying Lemma \ref{lm:stopping} with $\delta=0$, the complexity using Extragradient (EG) \citep{tseng1995linear, mokhtari2020unified} to achieve $\epsilon_r$-error on $r$-SC--SC $(\ell+r)$-smooth minimax optimization is $\cO((\ell/r+1)\log(1/\epsilon_r)) = \tilde\cO(\ell D^2/\epsilon)$, where $\tilde\cO$ hides logarithmic terms.
\end{proof}

\subsubsection{Inexact Deterministic Gradient Oracle} \label{sec-app:minimax-reg-grad}

This section contains proof of Theorem \ref{thm:minimax-reg-grad} for the near-optimal guarantees in the inexact deterministic gradient case. The proof is based on Lemma \ref{lm:EG} (restated and proved as Lemma \ref{lm:app-eg} in Section \ref{sec-app:lemmas}) and Lemma \ref{lm:stopping}.

\begin{proof}[Proof of Theorem \ref{thm:minimax-reg-grad}]
    For the convergence guarantee, the same as \eqref{eq:pf-thm-minimax-reg-init}, we have that
    \begin{equation*}
        \max_{y\in\cY} F(x_r, y) - \min_{x\in\cX} F(x, y_r) \leq
        \epsilon_r + rD^2.
    \end{equation*}
    For the reproducibility guarantee, we can obtain that
    \begin{equation*}
        \norm{z_r - z_r'} \leq \norm{z_r - z_r^*} + \norm{z_r^* - z_r'}.
    \end{equation*}
    Let $z_T$ be the output of $T$-step Extragradient with initialization $z_0$. By Lemma \ref{lm:EG}, we have that
    \begin{align*}
        \norm{z_T - z_r^*}^2
        & \leq
        \exp\roundBr*{-\frac{T}{8}\frac{r}{\ell + r}}\norm{z_0 - z_r^*}^2 +
        \frac{8\delta^2}{r}\roundBr*{\frac{2}{\ell + r} + \frac{1}{r}} \\
        & \leq
        \exp\roundBr*{-\frac{T}{16}\frac{r}{\ell}}\norm{z_0 - z_r^*}^2 +
        \frac{16\delta^2}{r^2}.
    \end{align*}
    Setting $T\geq(32\ell/r)\log(rD/\delta)$ and $r=\epsilon/D^2$, this means the algorithm converges to $\norm{z_T - z_r^*}\leq 3\sqrt{2}D^2(\delta/\epsilon)$. Therefore, according to Lemma \ref{lm:stopping}, if we choose $z_r=[z_T]_{2\ell}$, since $1\leq\ell D^2/\epsilon$, we can guarantee that $\norm{z_r - z_r^*}\leq3(2\sqrt{2}+1)D^2 (\delta/\epsilon)$ and that
    \begin{equation*}
        \max_{y\in\cY} F(x_r, y) - \min_{x\in\cX} F(x, y_r) \leq
        \roundBr*{4(\sqrt{2}+7)\frac{\ell D^2}{\epsilon} + 3\sqrt{2}}\delta D + \epsilon.
    \end{equation*}
    The reproducibility is $\norm{z_r - z_r'}^2 \leq 36(9+4\sqrt{2})D^4(\delta^2/\epsilon^2)$.
\end{proof}

\subsection{Inexact Proximal Point Method}

Proof of Theorem \ref{thm:minimax-ppm-init} and \ref{thm:minimax-ppm-grad} for the guarantees of Algorithm \ref{algo:ppm} is provided in this section.

\subsubsection{General Analysis}

We first analyze the convergence of the inexact proximal point method (Inexact-PPM). Given initialization $(x_0,y_0)$ and $\alpha>0$, for $t=0,1,\cdots, T-1$, each step of Inexact-PPM is
\begin{equation*}
    (x_{t+1}, y_{t+1})
    \text{ is an inexact solution to }
    \min_{x\in \mathcal{X}} \max_{y\in\mathcal{Y}}
    \hat{F}_t(x, y) = F(x, y) + \frac{1}{2\alpha}\|x - x_t\|^2 -
    \frac{1}{2\alpha}\|y - y_t\|^2.
\end{equation*}

\begin{lemma}
    If we run Inexact-PPM and make sure that for each sub-problem $\tilde\nabla \hat F_t(z_{t+1})^\top(z_{t+1} - z) \leq \hat \epsilon$ for all $z=(x,y)\in\cX\times\cY$, where $z_{t+1}=(x_{t+1}, y_{t+1})$ and $\tilde\nabla \hat F_t(z_{t+1}) = (\nabla_x\hat F_t(x_{t+1}, y_{t+1}), -\nabla_y\hat F_t(x_{t+1}, y_{t+1}))$, then we have $\forall z\in\cX\times\cY$,
    \begin{equation*}
        \max_{y\in\cY} F(\bar x_{T+1}, y) - \min_{x\in\cX} F(x, \bar y_{T+1}) \leq \frac{\|z_0 - z\|^2}{2\alpha T} + \hat\epsilon.
    \end{equation*}
    \label{lm:inexact-ppm-converge}
\end{lemma}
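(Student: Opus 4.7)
The plan is to reduce the inexact PPM guarantee to an application of Lemma~\ref{lm:duality} by first deriving a one-step variational inequality for the iterates $z_{t+1}=(x_{t+1},y_{t+1})$, and then telescoping.

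The first step is to unpack the stopping criterion. By definition of $\hat F_t$, the ``saddle operator'' of the auxiliary problem satisfies
\begin{equation*}
\tilde\nabla \hat F_t(z_{t+1}) = \tilde\nabla F(z_{t+1}) + \tfrac{1}{\alpha}(z_{t+1}-z_t),
\end{equation*}
so the hypothesis $\tilde\nabla\hat F_t(z_{t+1})^\top(z_{t+1}-z)\leq\hat\epsilon$ for all $z\in\cX\times\cY$ rearranges into
\begin{equation*}
\tilde\nabla F(z_{t+1})^\top(z_{t+1}-z) \;\leq\; \hat\epsilon + \tfrac{1}{\alpha}(z_t - z_{t+1})^\top (z_{t+1}-z).
\end{equation*}

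The second step is to handle the inner product on the right using fact $(i)$ of Lemma~\ref{lm:fact}: $2(z_t-z_{t+1})^\top(z_{t+1}-z) = \|z_t-z\|^2 - \|z_{t+1}-z\|^2 - \|z_t-z_{t+1}\|^2$. Dropping the nonnegative term $\|z_t-z_{t+1}\|^2$ yields the clean one-step bound
\begin{equation*}
\tilde\nabla F(z_{t+1})^\top(z_{t+1}-z) \;\leq\; \hat\epsilon + \tfrac{1}{2\alpha}\bigl(\|z_t-z\|^2 - \|z_{t+1}-z\|^2\bigr).
\end{equation*}
Summing this over $t=0,1,\dots,T-1$, the right-hand side telescopes, and after dividing by $T$ I obtain
\begin{equation*}
\frac{1}{T}\sum_{t=0}^{T-1}\tilde\nabla F(z_{t+1})^\top(z_{t+1}-z) \;\leq\; \hat\epsilon + \frac{\|z_0-z\|^2}{2\alpha T}.
\end{equation*}

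The third step is to convert this variational bound into a duality-gap bound on the average iterate. This is exactly the content of Lemma~\ref{lm:duality}, applied with the index-shifted sequence $\{z_{t+1}\}_{t=0}^{T-1}$ (note $\bar z_{T+1}=(1/T)\sum_{t=0}^{T-1}z_{t+1}$ is precisely the average of this sequence). Plugging in $z=(\arg\min_{x\in\cX}F(x,\bar y_{T+1}),\,\arg\max_{y\in\cY}F(\bar x_{T+1},y))$ yields the stated bound. The argument uses only convexity-concavity of $F$ in Lemma~\ref{lm:duality}, so no additional assumption is needed.

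There is no real obstacle here; the only subtlety is that, in contrast to exact PPM, the inexactness is absorbed additively into the bound via the constant $\hat\epsilon$, and one must drop the (favorable) $-\|z_t-z_{t+1}\|^2$ term at the telescoping step since its role in accelerating the exact PPM analysis is not needed for the $\cO(1/T)$ rate we are after.
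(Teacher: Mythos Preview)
Your proposal is correct and follows essentially the same approach as the paper's own proof: express $\tilde\nabla F(z_{t+1})$ in terms of $\tilde\nabla\hat F_t(z_{t+1})$ minus the proximal term, expand the inner product via the three-point identity (dropping the $-\|z_t-z_{t+1}\|^2$ term), telescope, and invoke Lemma~\ref{lm:duality} on the shifted sequence. The only cosmetic slip is that the identity you write is more naturally fact~$(ii)$ of Lemma~\ref{lm:fact} (with $a=z_t-z_{t+1}$, $b=z_{t+1}-z$, $a+b=z_t-z$) rather than fact~$(i)$, but the two are equivalent and the computation is correct.
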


\begin{proof}
    The proof is similar to Proposition 7 in \citet{mokhtari2020convergence}. The same as \eqref{eq:pf-thm-minimax-reg-init}, for any $z=(x,y)\in\cX\times\cY$ and any $t=0,1,\cdots,T-1$, we have that
    \begin{align*}
        \tilde \nabla F(z_{t+1})^T(z_{t+1} - z)
        & =
        \left(\tilde\nabla \hat F_t(z_{t+1}) -
        \frac{1}{\alpha}(z_{t+1} - z_t)\right)^\top (z_{t+1} - z) \\
        & =
        \frac{1}{2\alpha}\|z_t - z\|^2 - \frac{1}{2\alpha}\|z_{t+1} - z\|^2 - \frac{1}{2\alpha} \|z_{t+1} - z_t\|^2 + \tilde\nabla \hat F_t(z_{t+1})^\top (z_{t+1} - z) \\
        & \leq
        \frac{1}{2\alpha}\|z_t - z\|^2 - \frac{1}{2\alpha}\|z_{t+1} - z\|^2  + \hat \epsilon.
    \end{align*}
    Taking summation from $t=0$ to $T-1$ and dividing both sides by $T$, we conclude that
    \begin{equation*}
        \frac{1}{T}\sum_{t=0}^{T-1} \tilde\nabla F(z_{t+1})^\top(z_{t+1} - z) \leq \frac{\|z_0 - z\|^2}{2\alpha T} + \hat\epsilon.
    \end{equation*}
    The proof is completed by Lemma \ref{lm:duality}.
\end{proof}

\subsubsection{Inexact Initialization Oracle}

This section provides proof of Theorem \ref{thm:minimax-ppm-init}.

\begin{proof}[Proof of Theorem \ref{thm:minimax-ppm-init}]
    Let $\bar z_{T+1}=(\bar x_{T+1},\bar y_{T+1})=(1/T)\sum_{t=0}^{T-1} (x_{t+1}, y_{t+1})$. By Lemma \ref{lm:inexact-ppm-converge} and the choice that $\alpha=1/\ell$, $\hat \epsilon=\delta^2/(2\alpha T^2)$, we immediately have
    \begin{equation*}
        \max_{y\in\cY} F(\bar x_{T+1}, y) - \min_{x\in\cX} F(x, \bar y_{T+1}) \leq \frac{\ell D^2}{T} + \frac{\ell\delta^2}{2T^2}. 
    \end{equation*}
    $\cO(1/T)$ convergence rate is guaranteed for $\delta\leq\cO(\sqrt{T})$. Note that the condition number of $\hat F_t(x,y)$ is $\cO(1)$ when $\alpha=1/\ell$. Therefore, to guarantee an $\epsilon$-saddle point of $F(x,y)$, a total complexity of $\cO(T\log(1/\hat\epsilon))=\cO((1/\epsilon)\log(1/(\epsilon\delta)))$ is sufficient for various algorithms including GDA \citep{facchinei2003finite} and EG \citep{tseng1995linear} applying Lemma \ref{lm:stopping} with $\delta=0$.
    
    Let $z_t^*=(x_t^*, y_t^*)$ be the unique saddle point of $\hat F_t(x,y)$ with proximal center $z_t$, and $(z_t^*)'$ be the saddle point when the proximal center is $z_t'$. For the reproducibility guarantee, similarly to Section \ref{sec-app:minimax-reg-init}, we can obtain that
    \begin{equation}
        \begin{split}
            \|z_{t+1} - z'_{t+1}\|
            & \leq
            \|z_{t+1} - z_t^*\| + \|z_t^* - (z_t^*)'\| + \|(z_t^*)' - z'_{t+1}\| \\
            & \leq
            \| z_t - z_t' \| + 2\sqrt{\frac{2\hat \epsilon}{\ell}},
        \end{split}
        \label{eq:pf-thm-ppm-init-contract}
    \end{equation}
    where we use Lemma B.4 in \citet{zhang2022bring} and $(1/\alpha)$-SC--SC of $\hat F_t(x,y)$:
    \begin{align*}
        \tilde\nabla \hat F_t(z_{t+1})^\top(z_{t+1} - z_t^*)
        & \geq
        \hat F_t(x_{t+1}, y_t^*) - \hat F_t(x_t^*, y_{t+1}) + \frac{1}{2\alpha} \norm{z_{t+1} - z_t^*}^2 \\
        & \geq
        \frac{\ell}{2}\norm{z_{t+1} - z^*_t}^2.
    \end{align*}
    Therefore, by induction, we have that for any $t=1, 2, \cdots, T$,
    \begin{align*}
        \norm{z_t - z_t'}
        & \leq
        \norm{z_0 - z_0'} + 2t\sqrt{\frac{2\hat\epsilon}{\ell}} \\
        & \leq
        \delta + 2\delta\frac{t}{T} \\
        & \leq
        3\delta.
    \end{align*}
    The reproducibility is then $\norm{\bar z_{T+1} - \bar z_{T+1}'}^2\leq 9\delta^2$ using Jensen's inequality.
\end{proof}

\subsubsection{Inexact Deterministic Gradient Oracle} \label{sec-app:minimax-grad}

For Theorem \ref{thm:minimax-ppm-grad}, we provide proof when using GDA as the base algorithm. According to Lemma \ref{lm:app-eg}, EG can also be applied here with a similar argument.

\begin{proof}[Proof of Theorem \ref{thm:minimax-ppm-grad}]
    When setting $\alpha=1/\ell$, the auxiliary problem is $\ell$--strongly-convex--strongly-concave and $2\ell$-smooth. Let $z^K_t$ be the output of $K$-step GDA with initialization $z_t^0$ on the minimax problem $\min_{x\in\cX}\max_{y\in\cY}\hat F_t(x,y)$ at iteration $t$. Denote its saddle point as $z_t^*$. By Lemma \ref{lm:app-gda-inexact-gradient}, if $K\geq32\log(8\ell^2 D^2/(3\delta^2))$, we have that
    \begin{align*}
        \norm{z_t^K - z_t^*}^2
        & \leq
        \exp\left(-\frac{K}{32}\right)\norm{z_t^0 - z_t^*}^2 + \frac{9\delta^2}{4\ell^2} \\
        & \leq
        \frac{3\delta^2}{\ell^2}.
    \end{align*}
    By Lemma \ref{lm:stopping}, we can thus set $z_{t+1}=[z_t^K]_{2\ell}$ and guarantee that
    \begin{equation*}
        \tilde\nabla \hat F_t(z_{t+1})^\top(z_{t+1} - z) \leq (4\sqrt{6} + 5\sqrt{2} + 4) \delta D, \quad \forall z\in\cX\times\cY.
    \end{equation*}
    According to Lemma \ref{lm:inexact-ppm-converge}, we then have
    \begin{equation*}
        \max_{y\in\cY} F(\bar x_{T+1}, y) - \min_{x\in\cX} F(x, \bar y_{T+1}) \leq \frac{\ell D^2}{T} + 21\delta D.
    \end{equation*}
    When $\delta\leq\epsilon/(42D)$, $T\geq2\ell D^2/\epsilon$ is required to obtain an $\epsilon$-saddle point, and the total gradient complexity is $TK=(64\ell D^2/\epsilon)\log(8\ell^2 D^2/(3\delta^2))=\tilde\cO(1/\epsilon)$ with $\tilde\cO$ hiding logarithmic terms.

    We then show the reproducibility guarantee. From Lemma \ref{lm:stopping}, we know that $\norm{z_{t+1}-z_t^*}\leq(1+\sqrt{2}/2)\norm{z_t^K - z_t^*} + \sqrt{2}\delta/\ell\leq4.5\delta/\ell$. By \eqref{eq:pf-thm-ppm-init-contract}, we have that
    \begin{equation*}
        \norm{z_{t+1} - z_{t+1}'} \leq \norm{z_t - z_t'} + \frac{9\delta}{\ell}.
    \end{equation*}
    By induction, we conclude that $\norm{z_t - z_t'}\leq 9t(\delta /\ell)$, and thus the reproducibility is $\norm{\bar z_{T+1}-\bar z_{T+1}'}^2\leq81\delta^2T^2/\ell^2=324D^4(\delta^2/\epsilon^2)$.
\end{proof}
\section{Numerical Experiments} \label{sec-app:exp}

Some numerical experiments that demonstrate the effectiveness of regularization to improve reproducibility are provided in this section. We test the algorithms on two problems: a minimization problem with a quadratic objective and a minimax problem with a bilinear objective. The experiments are conducted on a single local machine.

\paragraph{Minimization.} We first compare the performance of gradient descent (GD), accelerated gradient descent (AGD), Algorithm \ref{algo:general-min} with GD as the base algorithm (Reg-GD), and Algorithm \ref{algo:general-min} with AGD as the base algorithm (Reg-AGD) on a quadratic minimization problem
\begin{equation*}
    \min_{x\in\bR^d} \frac{1}{2} \norm{Ax - b}^2.
\end{equation*}
Here, $b\in\bR^d$ with each entry sampled from the Gaussian distribution with mean 0 and standard deviation $10$ and $A\in\bR^{d\times d}$ is a random positive semi-definite matrix with rank $d-1$ that makes sure the problem is convex but not strongly-convex. To be specific, we let $A=U\Sigma U^\top$ where $U$ is a random orthogonal matrix drawn from the Haar distribution, and $\Sigma$ is a diagonal matrix with 1 entry being 0 and the others uniformly sampled from $[0.1, 10]$. This ensures that the problem is smooth with a parameter smaller than $100$.

We implement an inexact gradient oracle that returns $A^\top(Ax - b) + \delta e$ where $e\in\bR^d$ is an all-one vector and $\delta\in\bR$ controls the inexactness level. We test the aforementioned four algorithms with this inexact gradient oracle on both convergence performance measured by function value and reproducibility performance measured by the deviation compared to the trajectory obtained from using the true gradient when $\delta=0$. In the experiments, we let $d=100$ and $\delta=0.1$. For all four algorithms, we set the number of iterations to be $T=10000$, and the stepsize to be 0.01 based on the fact that the smoothness parameter is at most 100. For the regularization-based methods, we set the regularization parameter of the auxiliary problem to 0.05. All other parameters are set according to the theoretically suggested values. The results are illustrated in Figure \ref{fig:min}.

\begin{figure}[t!]
    \centering
    \includegraphics[width=16cm]{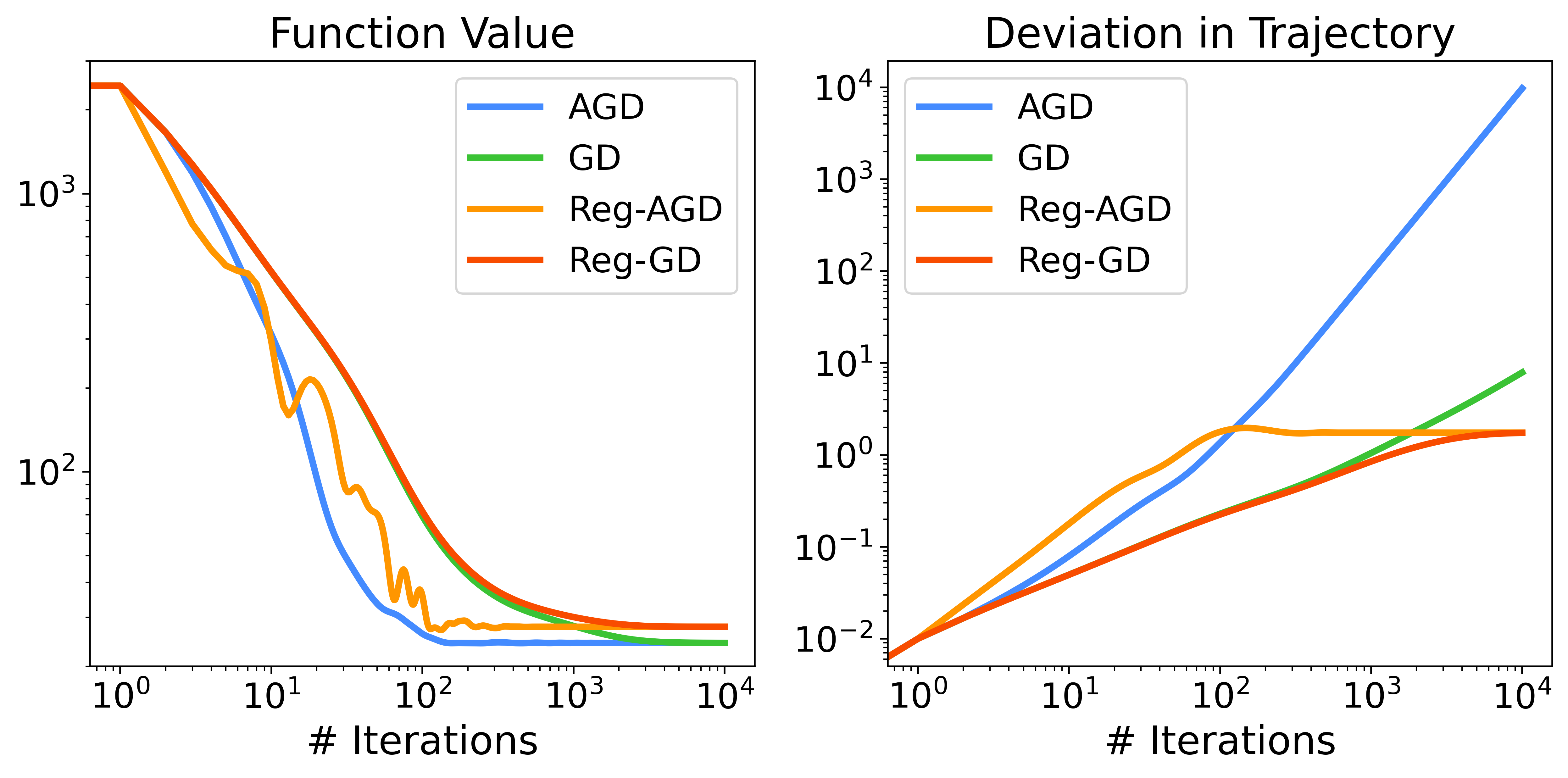}
    \caption{Comparisons among GD, AGD, and their regularized version on the quadratic minimization problem with $\delta$-inexact gradients. The left figure plots the convergence behavior and the right shows the reproducibility. Both axes are plotted utilizing a logarithmic scale.}
    \label{fig:min}
\end{figure}

In Figure \ref{fig:min}, we see AGD converges faster than GD, but the deviation in iterates is much larger. When introducing regularization, i.e., Reg-AGD, the reproducibility guarantee is greatly improved with only a small degradation in the convergence performance. It is worth mentioning that Reg-GD also has a smaller deviation bound compared to GD. All the results align with our theoretical analysis. Changing the inexactness level $\delta$ or the random seed for sampling the matrix $A$ and the vector $b$ does not influence the phenomenon too much, so we do not report the results with different selections.

\begin{figure}[t!]
    \centering
    \includegraphics[width=16cm]{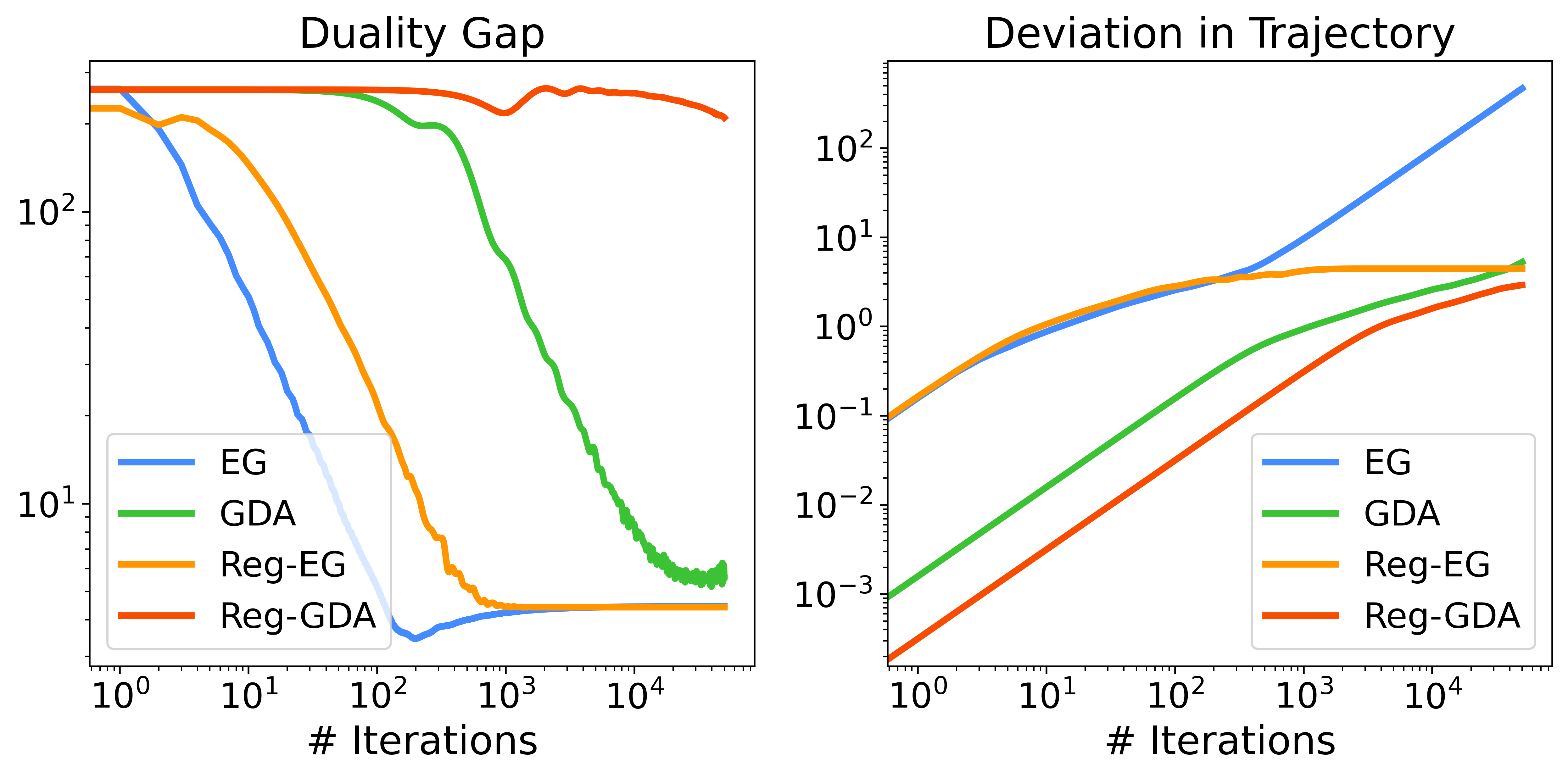}
    \caption{Comparisons among GDA, EG, and their regularized version on the bilinear matrix game with $\delta$-inexact gradients. The left figure plots the convergence behavior and the right shows the reproducibility. Both axes are plotted utilizing a logarithmic scale.}
    \label{fig:minimax}
\end{figure}

\paragraph{Minimax.} We also test the performance of gradient descent ascent (GDA), Extragradient (EG), and their regularized counterparts (Reg-GDA and Reg-EG) in Algorithm \ref{algo:general-minimax} on a bilinear matrix game
\begin{equation*}
    \min_{x\in\cX}\max_{y\in\cY}\; x^\top A y.
\end{equation*}
Here, $A\in\bR^{d\times d}$ is generated the same as in the quadratic minimization example, $\cX=\{x\in\bR^d \,|\, \norm{x}\leq D\}$ and $\cY = \{y\in\bR^d \,|\, \norm{y}\leq D\}$ are $d$-dimensional balls with diameter $2D$ measured by the Euclidean norm. The projection onto these balls can be easily achieved. We implement an inexact gradient oracle that returns $Ay + \delta e$ and $A^\top x + \delta e$ for the partial gradients w.r.t. $x$ and $y$ respectively, where $e\in\bR^d$ is an all-one vector and $\delta\in\bR$ controls the inexactness level.

We test the aforementioned four algorithms with this inexact gradient oracle on both convergence performance measured by the duality gap (computable due to bounded domain) and reproducibility performance measured by the deviation compared to the trajectory obtained from using the true gradient when $\delta=0$. In the experiments, we let $d=500$ and $\delta=0.1$. For all four algorithms, we set the number of iterations to be $T=10000$, and the stepsize is 0.1 for EG, 0.05 for Reg-EG, 0.001 for GDA, and 0.0001 for Reg-GDA. The choice of stepsizes here adheres to our theoretical analysis noticing that the smoothness parameter is no larger than 10. Larger stepsize for GDA will make the trajectory easily diverge. For the regularization-based methods, we set the regularization parameter of the auxiliary problem to 0.05. The results are plotted in Figure \ref{fig:minimax}. We see again the effectiveness of regularization to improve the reproducibility of the algorithms. Reg-EG largely reduces the deviation of EG even with respect to magnitude (note the figure is logarithmically scaled), with only a small degradation in terms of the convergence speed.

\end{document}